\newcommand{\N}{\mathbb{N}}
\newcommand{\R}{\mathbb{R}}
\renewcommand{\epsilon}{\varepsilon}
\newcommand{\eps}{\epsilon}
\DeclareMathOperator{\Bin}{Bin}
\newcommand{\oea}{${(1 + 1)}$~EA\xspace}
\newcommand{\lea}{${(1 + \lambda)}$~EA\xspace}
\newcommand{\llea}{${(\lambda \overset{_{1:1}}{+} \lambda)}$~EA\xspace}
\newcommand{\mea}{${(\mu + 1)}$~EA\xspace}
\newcommand{\ea}{${(\mu + \lambda)}$~EA\xspace}
\newcommand{\commaea}{${(\mu, \lambda)}$~EA\xspace}
\newcommand{\onemax}{\textsc{OneMax}\xspace}
\newcommand{\leadingones}{\textsc{LeadingOnes}\xspace}
\newtheorem{theorem}{Theorem}
\newtheorem{lemma}{Lemma}
\newtheorem{corollary}{Corollary}
\begin{document}
{\sloppy

\title{A Tight Runtime Analysis for the \ea\thanks{A preliminary version of this work~\cite{AntipovDFH18} was presented at the \emph{Genetic and Evolutionary Computation Conference} (GECCO) 2018. In this version, the presentation was improved by rewriting almost the entire text, by giving a clearer comparison with the previous state of the art, by making many proofs more rigorous, by extending the lower bounds to arbitrary fitness functions (subject to a mild restriction on the number of global optima), and by extending our results to the so-called $(N+N)$ EA using a fair parent selection.}}

\author{Denis Antipov \\
ITMO University\\
Saint-Petersburg\\
Russia
\and
Benjamin Doerr\\ Laboratoire d'Informatique (LIX)\\ CNRS, \'Ecole Polytechnique\\ Institute Polytechnique de Paris\\ Palaiseau, France}

\maketitle

\begin{abstract}
Despite significant progress in the theory of evolutionary algorithms,
the theoretical understanding of evolutionary algorithms which use non-trivial 
populations remains challenging and only few rigorous results exist.
Already for the most basic problem, the determination of the asymptotic
runtime of the $(\mu+\lambda)$ evolutionary algorithm on the simple
\onemax benchmark function, only the special cases $\mu=1$ and
$\lambda=1$ have been solved.

In this work, we analyze this long-standing problem and show the
asymptotically tight result that the runtime $T$, the number of iterations until the optimum is found, satisfies \[E[T] =
\Theta\bigg(\frac{n\log n}{\lambda}+\frac{n}{\lambda / \mu} +
\frac{n\log^+\log^+ (\lambda/ \mu)}{\log^+ (\lambda / \mu)}\bigg),\] where $\log^+
x := \max\{1, \log x\}$ for all $x > 0$.

The same methods allow to improve the previous-best $O(\frac{n \log n}{\lambda} + n \log \lambda)$ runtime guarantee for the $(\lambda+\lambda)$~EA with fair parent selection to a tight $\Theta(\frac{n \log n}{\lambda} + n)$ runtime result.
\end{abstract}

\section{Introduction}

Evolutionary algorithms are general-purpose optimization heuristics and have been successfully
applied to a broad range of computational problems. While the majority of the research in evolutionary computation is applied and experimental, the last decades have seen a growing number of theoretical analyses of evolutionary algorithms. Due to the difficult nature of the stochastic processes describing the runs of evolutionary algorithms, the vast majority of these works regards very simple algorithms like the \oea, which has both a parent population and an offspring population of size one. Such works, while innocent looking in their problem statement, can be surprisingly challenging from the mathematical point of view, see, e.g., the long series of works on how the \oea optimizes pseudo-Boolean linear functions which started with the seminal paper~\cite{DrosteJW02}. Also, while these example problems are far from the real applications, many of the theoretical works have contributed to the understanding of the working principles of evolutionary algorithms (see, e.g,~\cite{Droste05,DoerrHK12,DangFKKLOSS18}), have given advice on how to set parameters and take other design choices (see, e.g.,~\cite{BottcherDN10,Witt13,RoweD14}), and even have proposed new algorithms (see, e.g,~\cite{DoerrLMN17,DoerrDE15}).

Still, it remains dissatisfying that there are only relatively few works on true population-based algorithms as this bears the risk that we do not really understand the role of populations in evolutionary computation. What is clearly true, and the reason for the lack of such works, is that the stochastic processes become much more complicated when non-trivial populations come into play.

To make some progress towards a better understanding of population-based algorithms, we regard the most simple population-based problem, namely how the elitist \ea optimizes the \onemax benchmark problem (see Section~\ref{sec:problem_statemnt} for the details of this problem). With the corresponding problem for the \lea mostly solved in 2005 \cite{JansenJW05} (see~\cite[Section~8]{DoerrK15} for a more complete picture) and the problem for the \mea solved in 2006 \cite{Witt06}, it is fair to call this a long-standing open problem. In the conclusion of his paper~\cite{Witt06}, Witt writes ``the most interesting direction seems to be an extension to $(\mu+\lambda)$ strategies
by a combination with the existing theory on the $(1+\lambda)$ EA.''

\subsection{Our Results}

We give a complete answer to this question and prove that for arbitrary values of $\mu$ and $\lambda$ (which can be functions of the problem size $n$, however, for the lower bound we assume that $\mu$ is at most polynomial in $n$), the expected number of iterations the \ea takes to find the optimum of the \onemax function is
\[E[T] =
\Theta\bigg(\frac{n\log n}{\lambda}+\frac{n}{\lambda / \mu} +
\frac{n\log^+\log^+ (\lambda/ \mu)}{\log^+ (\lambda / \mu)}\bigg),\] where $\log^+
x := \max\{1, \log x\}$ for all $x > 0$.
This result subsumes the previous results for the \lea and \mea obtained in~\cite{JansenJW05,DoerrK15,Witt06}.

This runtime guarantee shows, e.g., that using a true parent population of size at most $\max\{\log n, \lambda\}$ does not reduce the asymptotic runtime compared to $\mu=1$. Such information can be useful since it is known that larger parent population sizes can increase the robustness to noise, see,~e.g.,~\cite{GiessenK16}.

With our methods, we can also analyze a related algorithm. He and Yao~\cite{HeY04} and Chen, He, Sun, Chen, and Yao~\cite{Chen09} analyzed a version of the \ea in which $\mu = \lambda$ and each parent produces exactly one offspring. We shall call this algorithm the \llea for brevity.
We prove a tight runtime bound of $\Theta(\frac{n \log n}{\lambda} + n)$ iterations, which also shows that the fairness in the parent selection does not change the asymptotic runtime in this problem.

To prove our bounds, we in particular build on Witt's~\cite{Witt03,Witt06,Witt08} family tree argument. The main idea of this argument is to consider a tree graph the vertices of which are the individuals created during the evolution process and each path from the root to a vertex corresponds to the series of mutations which led to the creation of this vertex. Selection does not play any role in this structure, so when working with family trees we usually assume that all individuals with a corresponding vertex in the tree can potentially be present in the current population. Different from Witt's approach (and different from all other works using his method that we are aware of), we work in a complete tree that contains all possible family trees and in this structure argue about which individuals really exist and whether they are an optimal solution. It appears to us that this approach is technically easier than the previous approaches, which first argue that with high probability the true family tree has a certain structure (e.g., a small height) and then, conditioning on this, argue that within such a restricted structure an optimal solution is hard to reach. We also believe that our approach facilitates the uniform analysis of trees having different structures (as, e.g., in our work, in which different relative sizes of $\mu$ and $\lambda$ can lead to very different characteristics of the tree).

Using this argument that does not regard the selection process we have obtained the lower bound that holds not only for the \onemax function, but for any 
function with a unique optimum in the same way as it was done for the \mea in~\cite{Witt06}. Our arguments let us extend the lower bounds to the functions with multiple optima (however, the number of the optima in these functions must be restricted). In the case of \mea this extension holds for a broader class of functions than the similar extension in~\cite{Witt06} which works only for the functions with a unique optimum.

\subsection{Previous Works}

The field of mathematical runtime analysis of evolutionary algorithms aims at increasing our understanding via proven results on the performance of evolutionary algorithms. Due to the difficulty of mathematical understanding of complicated population dynamics, the large majority of works in this field considers algorithms with trivial populations. These algorithms may seem trivial, however already allow deep results like the proof of the $O(n \log n)$ expected runtime of the $(1+1)$ EA on all linear pseudo-Boolean functions~\cite{DrosteJW02,DoerrG13algo}. They give surprising insights like the fact that monotonic functions can be difficult for simple EAs~\cite{DoerrJSWZ13,ColinDF14,Lengler18}, and have spurred the development of many useful analysis methods~\cite{HeY01,DoerrJW12}.

Despite the mathematical challenges, some results exist on algorithms using non-trivial populations. While such results are quite rare, due the growth of the field in the last 20 years they are still too numerous to be described here exhaustively. Therefore, we describe in the following those results which regard our research problem or special cases of it as well as a few related results.

The two obvious special cases of our problem are runtime analysis of the \lea and the \mea on \onemax. In~\cite{DoerrK15}, the runtime of the \lea on the class of linear functions is analyzed, which contains the \onemax function. A tight bound of $\Theta(\frac{n\log n}{\lambda} + \frac{n\log^+\log^+\lambda}{\log^+\lambda})$ is proven for the expected runtime (number of iterations until the optimum is found) of the \lea maximizing the \onemax function.
This extends the earlier result~\cite{JansenJW05}, which shows this bound for $\lambda = O(\frac{\log(n) \log\log(n)}{\log\log\log(n)})$, note that in this case the bound simplifies to $\Theta(\frac{n \log(n)}{\lambda})$, and which shows further that for asymptotically larger values of $\lambda$, the expected runtime is $\omega(\frac{n \log(n)}{\lambda})$.

Witt \cite{Witt06} studied the $(\mu+1)$ EA on the three
pseudo-Boolean functions \leadingones, \onemax and SPC. 
For the \onemax problem, under the mild assumption that $\mu$ is polynomially bounded in $n$, he proved that the expected runtime of the $(\mu + 1)$ EA is $\Theta(\mu n + n\log n).$

For algorithms with non-trivial parent and offspring population sizes, the following is known. The only work
regarding the classic \ea for general $\mu$ and $\lambda$ is~\cite{Qian16}. Using the recent switch analysis technique~\cite{YuQZ15} and assuming that $\mu$ and $\lambda$ are polynomially bounded in~$n$, it was shown that the \ea needs an expected number of
\[
  \Omega\left(\frac{n\log n}{\lambda} + \frac{\mu}{\lambda} + \frac{n\log\log n}{\log n}\right)
\]
iterations to find the optimum of any function $f : \{0,1\}^n \to \R$ with unique optimum. This bound is of smaller asymptotic order (and thus weaker) than ours when $\mu = \omega(\log n)$ and $\frac{\lambda}{\mu} < e^e$ or when $\log\frac\lambda\mu =\omega(\log n)$, see the discussion at the end of Section~\ref{scn:lower}.

For the \llea the first result~\cite[Theorem~4]{HeY04} considers the runtime on the \onemax in a special case when $\lambda = n$. It shows an upper bound of $O(n)$ iterations,which is tight as shown by our lower bound.

For the \llea with general $\lambda$, Chen et al.~\cite[Proposition~4]{Chen09} show an optimization time of $O(\frac{n \log n}{\lambda} + n \log \lambda)$ iterations. They conjecture a runtime of $O(\frac{n \log n}{\lambda} + n \log\log n)$~\cite[Conjecture~3]{Chen09}, which is asymptotically at least as good and which is stronger for $\lambda=\omega(\log n)$.
Our result improves over this bound and the conjecture for $\lambda=\omega(\frac{\log n}{\log\log n})$ as discussed in Section~\ref{sec:llea}.

We also find notable the result of Dang and Lehre~\cite{DangL16} (see~\cite{AntipovDY19} for recent small improvements) where they proved the upper bound for the \commaea on the \onemax of $O(n\lambda\log\lambda)$ fitness evaluations when $\lambda > \mu e$ and $\lambda = \Omega(\log(n))$. This runtime can be seen as the upper bound for the \ea, since the population of the \ea is always better than the population of \commaea after the same number of iterations in the dominating sense.
In Section~\ref{sec:comparison-upper} we prove that in the parameters setting regarded in~\cite{DangL16} our upper bound is asymptotically smaller.

\subsection{Organization of the Work}

The remainder of the paper is organized as follows.  Section~\ref{sc:preliminaries} gives a formal description of the \ea and introduces the notation that we use in the paper. In Section~\ref{sc:ub} we prove an upper bound of $O(\frac{n\log n}{\lambda}+\frac{n\mu}{\lambda}+n)$ for the general case and a tighter bound of
 $O(\frac{n\log\log \frac{\lambda}{\mu}}{\log\frac{\lambda}{\mu}} + \frac{n\log n}{\lambda} )$ for the case of $\frac{\lambda}{\mu} > e^e$, when the algorithm is able to gain more than one fitness level during the major part of the optimization process. Section~\ref{scn:lower} introduces the notion of \emph{complete trees} and proves a lower bound matching our upper bounds. In Section~\ref{scn:lower-extended} we extend our lower bounds to a much broader class of functions than just \onemax. In Section~\ref{sec:llea} we provide an analysis of the \llea, for which our results cannot be applied directly. The paper ends with a short conclusion and ideas for future work in Section~\ref{scn:discussions}.

\section{Preliminaries}
\label{sc:preliminaries}

\subsection{Notation}
\label{sec:notation}

In this subsection we shortly overview the notation used in this paper in order to avoid misunderstanding raised by the plenty of other notations used in the mathematical world nowadays. By $\N$ we denote the set of positive integer numbers and by $\N_0$ we denote the set of non-negative integer numbers. By $[a..b]$ with $a, b \in \N$ we denote an integer interval which includes its borders. By $[a, b]$ and $(a, b)$ with $a, b \in \R \cup \{-\infty, +\infty\}$ we denote a real-valued interval including and excluding its borders respectively. We denote the binomial distribution with parameters $n$ and $p$ by $\Bin(n, p)$. If some random variable $X$ follows some distribution $D$, we write $X \sim D$. We use a multiplicative notation of the binomial coefficient, that is
\[
  \binom{n}{k} = \frac{n(n - 1)\dots(n - (k - 1))}{k!}.
\]
This implies that the binomial coefficients are also defined for $n < k$ and in this case $\binom{n}{k} = 0$.

\subsection{Problem Statement}
\label{sec:problem_statemnt}

In this section, we provide the definitions necessary to formalize the problem we analyze in this work.
Our study focuses on evolutionary algorithms that aim at optimizing pseudo-Boolean functions, that is, functions of the form $f:\{0, 1\}^n\to \R$.

The \ea formulated as Algorithm~\ref{alg:ea} is a simple mutation-based elitist evolutionary algorithm. In each iteration of the algorithm, we independently generate $\lambda$ offspring each by selecting an individual from the parent population uniformly at random and mutating it.
We use standard-bit mutation with the standard mutation rate $p = \frac 1n$, that is, we flip each bit independently with probability $\frac 1n$. We note without proof that our results hold as well for any other mutation rate $p = c/n$, where $c$ is a constant. 

\begin{algorithm2e}[t]%

	\underline{\textbf{Initialization:}}\\
	Create a population of $\mu$ individuals by choosing $x^{(i)} \in \{0,1\}^n$, $1\leq i \leq \mu$ uniformly at random. Let the multiset $X^{(0)} := \{x^{(1)}, ..., x^{(\mu)}\}$ be the population at time 0. Let $t := 0$.

 \underline{\textbf{Optimization:}}\\
\While{an optimum has not been reached}{
$X' := X^{(t)}$\;
{\textbf{Mutation phase:}}\\
\For{$i=1, \ldots, \lambda$\label{line:mutstart}}{
Choose $x \in X^{(t)}$ uniformly at random\;
Create $x'$ by
flipping each bit of $x$ with probability $p$\;
$X' := X' \cup \{x'\}$\;
}
{\textbf{Selection phase:}}

Create the multiset $X^{(t+1)}$, the population at time $t+1$, by deleting the $\lambda$ individuals with lowest
$f$-value in $X'$\;
$t:=t+1$\;
}

\caption{The \ea, maximizing a given function $f : \{0,1\}^n \to \R$, with population size $\mu$, offspring population size~$\lambda$ and mutation rate $p$. We shall exclusively regard the mutation rate $p = \frac 1n$. We did not specify how to break ties in the selection phase since our results are valid for any tie-breaking rule. Usually, one would prefer offspring over parents and break the remaining ties randomly.}
\label{alg:ea}
\end{algorithm2e}

As objective function $f$, also called \emph{fitness function},  we consider the classic \onemax function, which was the starting point for many theoretical investigations in this field. This function $\onemax : \{0,1\}^n \to \mathbb{R}$ is defined by $\onemax(x) = \sum_{i = 1}^n x_i$ for all $x \in \{0,1\}^n$. In other words, \onemax returns the number of one-bits in its argument. Without proof we note that due to the unbiasedness of the operators used by all considered algorithms all our results also hold for the so-called \emph{generalized} \onemax function, denoted by $\onemax_z$. This function has some hidden bit-string $z$ and returns the number of coinciding bits in its argument and $z$. In other words,
\[
  \onemax_z(x) = \sum_{i = 1}^n (1 - |z_i - x_i|) = n - H(x, z),
\]
where $H(x, z)$ stands for the Hamming distance.

\subsection{Useful Tools}
\label{sec:tools}

A central argument in our analysis is the following Markov chain 
argument similar to the classic fitness levels technique of Wegener~\cite{Wegener01}.

\begin{theorem}\label{th:levels}
Let the space $S$ of all possible populations of some population-based algorithm be divided into $m$ disjoint sets $A_1, \dots, A_m$ that are called \emph{levels}. We write $A_{\ge i} = \bigcup_{j = i}^{m} A_j$ for all $i \in [1..m]$. 

Let $P_t$ be the population of the algorithm after iteration $t$. Assume that for all $t \ge 0$ and ${i \in [2..m]}$, we have that $P_t \in A_{i}$ implies $\Pr[P_{t + 1} \in A_{\ge i}] = 1$. Let $T$ be the minimum number $t$ such that $P_t \in A_m$. 
\begin{enumerate}
\item Assume that there are $T_1, \dots, T_{m-1} \ge 0$ such that for all $t \ge 0$ and ${i \in [1..m-1]}$ we have that if $P_t \in A_i$, then $E[\min\{s \mid s \in \N, P_{t + s} \in A_{\ge i+1}\}] \le T_i$ (for all possible $P_0, \dots, P_{t - 1}$). Then \[E[T] \le \sum_{i=1}^{m-1} T_i.\]
\item Assume that there are $p_1, p_2, \dots, p_{m-1}$ such that for all $t \ge 0$ and ${i \in [1..m-1]}$ we have that if $P_t \in A_i$, then $\Pr[P_{t + 1} \in A_{\ge i+1}] \ge p_i$ (for all possible $P_0, \dots, P_{t - 1}$). Then \[E[T] \le \sum_{i = 1}^{m - 1} \frac 1 {p_i}.\]
\end{enumerate}
\end{theorem}

The proof is standard, but for the reason of completeness we quickly state it.

\begin{proof}
  We start by proving the first claim. Consider a run of the algorithm. Let $t_i = \min\{t \mid P_t \in A_{\ge i}\}$. Let $i \in [1..m-1]$. We analyze the random variable $t_{i+1} - t_i$. If there is no $t$ with $P_t \in A_i$, then $t_i = t_{i+1}$ simply by the definition of the $t_i$. Otherwise, by our assumptions, we have $E[t_{i+1}-t_i] \le T_i$. Note that this applies trivially also to the first case where we just saw that $t_{i+1} - t_i = 0$. Hence, from $T = t_m = \sum_{i = 1}^{m-1} (t_{i+1} - t_i)$ we conclude 
  \[E[T] = \sum_{i = 1}^{m-1} E[t_{i+1} - t_i] \le \sum_{i=1}^{m-1} T_i.\]

  To prove the second claim, we note that by our assumptions $t_{i+1} - t_i$ is stochastically dominated by a geometric distribution with success rate $p_i$. Hence $E[t_{i+1} - t_i] \le \frac 1 {p_i}$, and the claim follows as above.
\end{proof}

To ease the presentation, we use the following language. We say that \emph{the algorithm is on level $i$} if the current population is in the level $A_i$. We also say that \emph{the algorithm gains a level} or \emph{the algorithm leaves the current level} if the new population is at the higher level than the previous one.

In our proofs we shall use the following result for random variables with binomial distribution from~\cite{Greenberg14}. An elementary proof for it was given in~\cite{Doerr18exceedexp}.

\begin{lemma}
\label{lm:multbin}
Let $X \sim \Bin(n, p)$ such that $p > 1/n$. Then $\Pr(X \ge E[X]) > 1/4$.
\end{lemma}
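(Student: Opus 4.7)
My plan is to leverage the classical bound of Jogdeo--Samuels / Kaas--Buhrmann on the median of a binomial distribution: for $X \sim \Bin(n,p)$, every median $m$ satisfies $\lfloor np \rfloor \le m \le \lceil np \rceil$. This will reduce the lemma to an explicit analysis of a single hard case.

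First I would dispose of the easy subcases. If $np \in \Z$, then the median equals $np$, so $\Pr(X \ge E[X]) = \Pr(X \ge m) \ge 1/2$. If $np \notin \Z$ and the median equals $\lceil np \rceil$, then again $\Pr(X \ge E[X]) = \Pr(X \ge \lceil np \rceil) \ge 1/2$. Both cases are comfortably above the required $1/4$.

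The remaining hard case is when $np \notin \Z$ and the (only) median equals $k := \lfloor np \rfloor$. Setting $K := k+1 = \lceil np \rceil$, I must show $\Pr(X \ge K) > 1/4$. Here I would exploit the unimodality and log-concavity of the binomial pmf via the ratio
\[
\frac{\Pr(X = k+1)}{\Pr(X = k)} = \frac{(n-k)p}{(k+1)(1-p)}.
\]
The assumption $p > 1/n$ forces $np > 1$ and hence $k \ge 1$. Using $np > k$, I would show that this ratio is bounded below by something of order $k/(k+1) \ge 1/2$, uniformly in $p$ once $k$ is fixed. Then, starting from $1/2 \le \Pr(X \ge k) = \Pr(X = k) + \Pr(X \ge k+1)$ and bounding $\Pr(X = k)$ from above in terms of $\Pr(X \ge k+1)$ (via the adjacent-ratio estimate together with the unimodality of the subsequent probabilities $\Pr(X = k+1), \Pr(X = k+2), \ldots$ up to the mode), I would deduce the desired strict lower bound $\Pr(X \ge k+1) > 1/4$.

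The main obstacle will be making the constants tight enough in the hard case for small $k$, especially $k = 1$, where the adjacent-ratio lower bound $k/(k+1)$ degenerates to $1/2$. Indeed, the Poisson limit $np \downarrow 1$ yields $\Pr(X \ge 2) \to 1 - 2/e \approx 0.264$, only barely above $1/4$, so the constant in the lemma is essentially tight and the estimates must be executed carefully to preserve the strict inequality uniformly in $n$, $p$, and $k$. A safety net, if the direct ratio argument leaves too little room, would be to separate once more the subcase $k = 1$ (where $np \in (1,2)$ allows an essentially closed-form calculation of $\Pr(X \ge 2) = 1 - (1-p)^n - np(1-p)^{n-1}$) from the subcase $k \ge 2$ (where the ratio bound gives $\ge 2/3$ and the argument above closes comfortably).
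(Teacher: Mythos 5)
The paper does not actually prove this lemma: it is quoted from Greenberg and Mohri~\cite{Greenberg14}, with a pointer to the elementary proof in~\cite{Doerr18exceedexp}. So there is no in-paper argument to compare yours against, and a self-contained proof would be a genuine addition --- but the one you sketch has a quantitative gap in precisely the case you flag as hard, and the gap is not confined to $k=1$. Write $k=\lfloor np\rfloor$, $A=\Pr(X=k)$ and $B=\Pr(X\ge k+1)$. The two facts your hard-case argument actually delivers are $A+B\ge 1/2$ (median at $k$) and $B\ge\Pr(X=k+1)>\frac{k}{k+1}A$ (adjacent ratio, using $p>k/n$). Eliminating $A$ gives only
\[
B>\frac{1/2}{1+\frac{k+1}{k}}=\frac{k}{2(2k+1)}=\frac{1}{4+2/k},
\]
which is strictly below $1/4$ for \emph{every} $k$. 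So this pair of estimates can never yield the claim, and splitting off $k=1$ does not help, since the shortfall persists for all $k\ge 2$ (e.g.\ only $1/5$ for $k=2$).

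The proposed reinforcement --- bringing in $\Pr(X=k+2),\dots$ up to the mode --- also fails uniformly: the hard case is compatible with $n=k+1$ (take $p\in(k/(k+1),2^{-1/(k+1)})$, so that $k$ is the unique median), where $\Pr(X=k+2)=0$ and the mode sits at $k$ itself; more generally the next ratio $\Pr(X=k+2)/\Pr(X=k+1)=\frac{(n-k-1)p}{(k+2)(1-p)}$ admits no positive lower bound independent of $n-k$. The $n=2$, $p\downarrow 1/2$ instance is instructive: there $A\approx 1/2$, $B\approx 1/4$, and $A+B\approx 3/4$, so the true slack lies in the median inequality $A+B\ge 1/2$ being far from tight, not in the ratio bound. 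To close the argument you need a genuinely sharper lower bound on $\Pr(X\ge k)$ in this regime (or a direct treatment of small $n-k$, or a different mechanism altogether, as in~\cite{Doerr18exceedexp}); as written, the hard case does not go through.
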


We also use frequently the following inequality in our proofs, so we formulate it as a separate lemma.
\begin{lemma}\label{lm:bernoulli}
For any $x \in (0, 1]$ and any $n > 0$ we have
\[
1 - (1 - x)^n \ge \frac{1}{1 + \frac{1}{xn}}.
\]
\end{lemma}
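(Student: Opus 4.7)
The plan is to rewrite the claimed inequality in a form that makes it amenable to a single-variable monotonicity argument. Since the right-hand side equals $\frac{xn}{1+xn}$, the inequality is equivalent to
\[
(1-x)^n \le \frac{1}{1+xn}, \quad \text{i.e.,} \quad f(x) := (1+xn)(1-x)^n \le 1
\]
for all $x \in [0,1]$ and $n > 0$. (The case $x = 0$ gives equality, so this reformulation is equivalent after clearing denominators.) My plan is to verify this uniform bound on $f$.

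First I would note that $f(0) = 1$, so it suffices to show that $f$ is non-increasing on $[0,1]$. Differentiating and factoring out the common term $(1-x)^{n-1}$ (which is nonnegative on $[0,1)$) gives
\[
f'(x) = n(1-x)^{n-1}\bigl[(1-x) - (1+xn)\bigr] = -n(n+1)\,x\,(1-x)^{n-1},
\]
which is manifestly $\le 0$ for all $x \in [0,1)$ and $n > 0$. Hence $f$ is non-increasing, so $f(x) \le f(0) = 1$ on $[0,1]$, and at $x = 1$ the inequality $f(1) = 0 \le 1$ is trivial.

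From $(1+xn)(1-x)^n \le 1$ the claim follows immediately: for $x \in (0,1]$ divide both sides by $1+xn > 0$ to get $(1-x)^n \le \frac{1}{1+xn}$, and then
\[
1 - (1-x)^n \;\ge\; 1 - \frac{1}{1+xn} \;=\; \frac{xn}{1+xn} \;=\; \frac{1}{1 + \frac{1}{xn}}.
\]
The boundary case $x = 0$ gives $0 \ge 0$ (interpreting the right-hand side as its limit), completing the argument.

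There is no real obstacle here: the only thing to be careful about is that $n$ is allowed to be an arbitrary positive real rather than an integer, so the proof should not rely on the geometric series identity $1-(1-x)^n = x\sum_{k=0}^{n-1}(1-x)^k$. The calculus argument above is the cleanest route, since the derivative computation is valid for all real $n > 0$. An alternative framing (equivalent after substitution $y = x/(1-x)$) would invoke Bernoulli's inequality $(1+y)^n \ge 1 + ny$, but that standard form usually requires $n \ge 1$, so the direct monotonicity argument is preferable for covering the full range $n > 0$ uniformly.
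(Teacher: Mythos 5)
Your proof is correct, but it takes a different route from the paper's. The paper does not prove the key inequality $(1-x)^n \le \frac{1}{1+xn}$ itself: it simply cites Lemma~8 of Rowe and Sudholt (reference \cite{RoweD14} in the paper) for that bound and then performs exactly the same algebraic manipulation you do in your final display, $1-(1-x)^n \ge 1 - \frac{1}{1+xn} = \frac{xn}{1+xn} = \frac{1}{1+\frac{1}{xn}}$. You instead make the argument self-contained by establishing the cited bound from scratch: setting $f(x) = (1+xn)(1-x)^n$, computing $f'(x) = -n(n+1)x(1-x)^{n-1} \le 0$, and concluding $f(x) \le f(0) = 1$. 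The derivative computation checks out, and your handling of the endpoints ($x=1$ directly, $x=0$ as the degenerate equality case) is careful. What your approach buys is independence from the external reference and an argument that visibly works for all real $n>0$ rather than only integer exponents; your closing remark about why Bernoulli's inequality is not the right tool for $n<1$ is a sensible justification for this choice. What the paper's approach buys is brevity, at the cost of outsourcing the substance of the lemma to the literature.
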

As was pointed out by one of the reviewers, this lemma is a special case of Lemma 31 in~\cite{DangL16}, which states that for all $n \in \N$ and $x \ge 0$ we have $1 - (1 - x)^n \ge 1 - e^{-xn} \ge \frac{xn}{1 + xn}$, except we do not have the constraint that $n$ is an integer. Although the proof of~\cite[Lemma 31]{DangL16} is true for the case $x \in (0, 1]$, we find it wrong for, e.g., $x = 3$ and $n = 2$, when the leftmost part of the inequality is negative, and others are positive. For this reason we show a simple proof here.
\begin{proof}
By~\cite[Lemma~8]{RoweD14} we have $(1 - x)^n \le \frac{1}{1 + xn}$.
Therefore, following the arguments that were used in~\cite[Theorem~9]{RoweD14} we conclude
\begin{align*}
  1 - (1 - x)^n & \ge 1 - \frac{1}{1 + xn} \\
  &= \frac{xn}{1 + xn} = \frac{1}{1 + \frac{1}{xn}}.
\end{align*}
\end{proof}

%
%

\section{Upper Bounds}
\label{sc:ub}

In this section, we prove separately two upper bounds for the runtime of the \ea on the \onemax problem, the first one being valid for all values of $\mu$ and $\lambda$ and the second one giving an improvement for the case that $\lambda$ is large compared to $\mu$, more precisely, that $\lambda/\mu \ge e^e$.

Where not specified differently, we denote the current best fitness in the population by $i$ and the number of best individuals in the population by $j$.

\subsection{Increase of the Number of the Best Individuals}
\label{scn:mu-best}

In this subsection we analyze how the number of individuals on the current-best fitness level increases over time and derive from this two estimates for the time taken for a fitness improvement. We note that often it is much easier to generate an additional individual with current-best fitness by copying an existing one than to generate an individual having strictly better fitness by flipping the right bits. Consequently, in a typical run of the \ea, first the number of best individuals will increase to a certain number and only then it becomes likely that a strict improvement happens.

Since the increase of the number of individuals on the current-best fitness level via producing copies of such best individuals is independent of the fitness function, we formulate our results for the optimization of an arbitrary pseudo-Boolean function and hope that they might find applications in other runtime analyses as well. So let $f : \{0,1\}^n \to \R$ be an arbitrary fitness function which we optimize using the \ea.

Assume that the \ea starts in an arbitrary state where the best individuals have fitness $i$ and there are $j_1$ such individuals in the population. At this point due to the elitism the algorithm cannot decrease the best fitness $i$ and it also cannot decrease the number of the best individuals $j_1$ until it increases the best fitness. Following~\cite[Lemma~2]{Sudholt09} we call an individual \emph{fit} if it has a fitness $i$ or better. For ${j_2 \in \N}$, we define $\tau_{j_1,j_2}(i)$ to be the first time (number of iterations) at which the population of the \ea contains at least $j_2$ fit individuals. We note that this random variable $\tau_{j_1,j_2}(i)$ may depend on the particular initial state of the \ea, but since our results are independent of this initial state (apart from $i$ and $j_1$) we suppress in our notation the initial state.

The time $\tau_{1,\mu}(i)$, that is, the specific case that $j_1 = 1$ and $j_2 = \mu$, is also called the \emph{takeover time} of a new best individual. For this takeover time, Sudholt~\cite[Lemma~2]{Sudholt09} proved the upper bound
\begin{align}\label{eq:tau-dirk}
E[\tau_{1,\mu}(i)] = \lceil\log_5\mu\rceil \left(\frac{32}{1 - \frac{1}{e}} \cdot \frac\mu\lambda + 1\right) = O{\left(\frac{\mu\log\mu}{\lambda} + \log\mu\right)}
\end{align}
for any $i \in [0..n-1]$.

In this section we improve this result by (i)~treating the general case of arbitrary $j_1, j_2 \in [1..\mu]$ and (ii)~by showing an asymptotically smaller bound for the case $\lambda = \omega(\mu)$. In our main analysis of the \ea, we need takeover times for general values of $j_2$ to profit from the event when we get a fitness gain before the population contains only best individuals, which is likely to happen on the lower fitness levels as we show further. The extension to general values of $j_1$ is not needed, but since it does not take extra effort, we do it on the way.

We first prove the following result for arbitrary values of $\mu$ and $\lambda$. We need this result since it allows arbitrary target numbers $j_2$. 

\begin{lemma}\label{lm:mu-best-general}
  Let $i \in [0..n-1]$ and $j_1, j_2 \in [1..\mu]$ with $j_1 < j_2$. Then
  \[
  E[\tau_{j_1, j_2}(i)] \le \frac{2e\mu}{\lambda} \left(\ln\frac{j_2}{j_1} + 1\right) + (j_2 - j_1).
  \]
\end{lemma}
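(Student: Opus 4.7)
The plan is to apply the fitness-level theorem (Theorem~\ref{th:levels}) to a partition indexed by the exact number of fit individuals currently in the population. Concretely, for $k \in [j_1..j_2-1]$ I would define $A_k$ as the set of populations containing exactly $k$ fit individuals, and $A_{j_2}$ as the set of populations containing at least $j_2$ fit individuals (reindexing to $1,\ldots,m$ as needed). The no-backsliding condition is immediate from elitist selection: fit individuals have fitness at least $i$, strictly exceeding every unfit one, so all $k$ current fit parents survive the truncation step and the number of fit individuals is nondecreasing from iteration to iteration.

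For the expected time $E[T_k]$ on level $A_k$ with $k < j_2$, I would lower-bound the per-iteration success probability by the simplest sufficient event: a given offspring becomes fit by being produced as an unmutated copy of a fit parent. This event has probability
\[
p_k := \frac{k}{\mu}\left(1 - \frac{1}{n}\right)^n \ge \frac{k}{2e\mu},
\]
using $(1-1/n)^n \ge 1/(2e)$ for $n \ge 2$. Since the $\lambda$ offspring within one iteration are generated independently, the probability of producing at least one fit offspring is at least $1-(1-p_k)^\lambda$, and Lemma~\ref{lm:bernoulli} lower-bounds this by $1/(1+1/(p_k\lambda))$. Because the waiting time to leave level $A_k$ is stochastically dominated by a geometric random variable with this success probability,
\[
E[T_k] \le 1 + \frac{1}{p_k \lambda} \le 1 + \frac{2e\mu}{k\lambda}.
\]

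Summing via Theorem~\ref{th:levels} and using the harmonic estimate $\sum_{k=j_1}^{j_2-1} \tfrac{1}{k} \le \tfrac{1}{j_1} + \ln(j_2/j_1) \le 1 + \ln(j_2/j_1)$ then yields
\[
E[\tau_{j_1,j_2}(i)] \le (j_2 - j_1) + \frac{2e\mu}{\lambda}\left(\ln\frac{j_2}{j_1} + 1\right),
\]
which is the stated bound. The only subtlety I anticipate is that in a single iteration several fit offspring can be produced at once, so the process may jump over multiple levels; this is harmless because it can only shorten the time spent on level $A_k$ relative to the geometric bound, so the per-level estimate remains an upper bound. A pleasant feature of this approach is that the ``copy of a fit parent'' event used to lower-bound $p_k$ makes the argument independent of the particular fitness function $f$, matching the section's preamble that the result applies to arbitrary pseudo-Boolean $f$.
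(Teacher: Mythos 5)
Your proposal is correct and follows essentially the same route as the paper: the same level decomposition by the exact number of fit individuals, the same lower bound $k/(2e\mu)$ on the probability that one offspring is an unmutated copy of a fit parent, the same amplification over $\lambda$ offspring via Lemma~\ref{lm:bernoulli}, and the same harmonic-sum conclusion via Theorem~\ref{th:levels}. No gaps.
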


\begin{proof}
To prove this lemma we use Theorem~\ref{th:levels}. For this purpose we define levels $A_{j_1}, \dots, A_{j_2}$. For any $j \in [j_1..j_2 - 1]$ the populations in level $A_j$ have exactly $j$ fit individuals. The level $A_{j_2}$ consists of all populations with at least $j_2$ fit individuals. Note that the \ea cannot go from level $A_j$ to any other level with smaller index, since it cannot decrease the number of the fit individuals due to the elitist selection.

If there are $j$ fit individuals in the population, then the probability $p_1(j)$ to create as one offspring a copy of a fit individual is the probability to select one of $j$ fit individuals as a parent multiplied by the probability not to flip any bit of it during the mutation. Hence,
\begin{align}\label{eq:p1}
  p_1(j) \ge \frac{j}{\mu}\left(1 - \frac 1n \right)^n \ge \frac{j}{2e\mu},
\end{align}
where we used the inequality $(1 - \frac 1n)^n \ge \frac{1}{2e}$ that holds for all $n \ge 2$.

The probability $p_2(j)$ to leave level $A_j$ in one iteration is at least the probability to create a copy of a fit individual as one of the $\lambda$ offspring. Hence, by Lemma~\ref{lm:bernoulli} we have
\begin{align}\label{eq:p2}
  p_2(j) \ge 1 - (1 - p_1(j))^\lambda \ge \frac{1}{1 + \frac{1}{p_1(j)\lambda}} \ge \frac{1}{1 + \frac{2e\mu}{j\lambda}}.
\end{align}

By Theorem~\ref{th:levels} we have
\begin{align*}
  E[\tau_{j_1, j_2}(i)] \le \sum_{j = j_1}^{j_2 - 1} \frac{1}{p_2(j)} \le \sum_{j = j_1}^{j_2 - 1} \left( 1 + \frac{2e\mu}{j\lambda} \right) \le  \frac{2e\mu}{\lambda}\left(\ln\frac{j_2}{j_1} + 1\right) + (j_2 - j_1).
\end{align*}
\end{proof}

We note that in case when $j_1 = 1$ and $j_2 = \mu$ our upper bound is $O(\frac{\mu\log\mu}{\lambda} + \mu)$. This is weaker than the upper bound~\eqref{eq:tau-dirk} given in~\cite[Lemma~2]{Sudholt09} if $\lambda = \omega(\log\mu)$.
Without proof we note that in all other cases the two bounds are asymptotically equal.

The reason that our bound is weaker in some cases is that we do not consider the event that the algorithm generates more than one fit offspring in one iteration, while Sudholt in~\cite[Lemma~2]{Sudholt09} proved that the number of the fit offspring is multiplied by some constant factor in every $32\mu/\lambda$ iterations. The same idea may be used to prove the bound
\begin{align}\label{eq:tau-dirk-modified}
E[\tau_{j_1, j_2}(i)] \le \left\lceil \log_5\frac{j_2}{j_1} \right\rceil \left( \frac{32}{1 - \frac 1e} \cdot \frac\mu\lambda + 1\right) = O{\left(\frac{\mu\log\frac{j_2}{j_1}}{\lambda} + \log\frac{j_2}{j_1} \right)}.
\end{align}
We still prefer to use to Lemma~\ref{lm:mu-best-general} in our proofs, since it gives us a bound that is easier to operate with due to the simpler leading constants of each term, while the greater terms do not affect our main results.

We now give a second bound for the case that $\frac\lambda\mu \ge e^e$. It is asymptotically stronger than~\eqref{eq:tau-dirk} when $\lambda = \omega(\mu)$ and $\mu = \omega(1)$.

\begin{lemma}\label{lm:mu-best-big-lambda}
  Let $\frac{\lambda}{\mu} \ge e^e$. Let $i \in [1..n-1]$ and $j_1, j_2 \in [1..\mu]$ with $j_1 < j_2$. Then
  \[
  E[\tau_{j_1, j_2}(i)] \le 4 \, \frac{\ln\frac{j_2}{j_1}}{\ln \frac{\lambda}{2e\mu}} + 4.
  \]
\end{lemma}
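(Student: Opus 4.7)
The plan is to apply the fitness levels theorem (Theorem~\ref{th:levels}) using geometrically spaced buckets for the number of fit individuals. Set $c := \lambda/(2e\mu)$, which by the hypothesis $\lambda/\mu \ge e^e$ satisfies $c \ge e^{e-1}/2 > 1$, and let $K := \lceil \ln(j_2/j_1)/\ln c \rceil$. I would define, for $k \in \{0, 1, \ldots, K-1\}$, the level $A_k$ as the set of populations whose number of fit individuals lies in $[j_1 c^k, j_1 c^{k+1})$, and $A_K$ as the populations having at least $j_1 c^K \ge j_2$ fit individuals. By the elitist selection of the \ea, the number of fit individuals never decreases, so these levels can never be lost.

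The core step is to show that from each $A_k$ with $k < K$, the next iteration lands in $A_{k+1} \cup \cdots \cup A_K$ with probability at least $1/4$. If the current fit count is $j \ge j_1 c^k$, then the number $X$ of fit offspring produced in the iteration is distributed as $\Bin(\lambda, p_1(j))$, where by~\eqref{eq:p1} we have $p_1(j) \ge j/(2e\mu) \ge 1/(2e\mu)$. Because $\lambda/\mu \ge e^e > 2e$, this gives $p_1(j) > 1/\lambda$, so Lemma~\ref{lm:multbin} yields $\Pr(X \ge E[X]) > 1/4$. On that event, the new fit count (after the $\mu$-selection) is at least $\min(\mu, j + X) \ge \min(\mu, j(1 + c))$, and since $j(1 + c) \ge jc \ge j_1 c^{k+1}$ (recall $c > 1$), the new population belongs to some level with index at least $k+1$ — the cap at $\mu$ is harmless because $j_2 \le \mu$, so reaching $\mu$ fit individuals already means reaching $A_K$.

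Assembling the pieces, Theorem~\ref{th:levels} gives
\[
E[\tau_{j_1,j_2}(i)] \le \sum_{k=0}^{K-1} 4 = 4K \le 4\,\frac{\ln(j_2/j_1)}{\ln(\lambda/(2e\mu))} + 4,
\]
which is the claimed bound. The only delicate points are verifying the hypothesis of Lemma~\ref{lm:multbin} and checking that the geometric progression is well-defined, i.e., $c > 1$; both follow at once from $\lambda/\mu \ge e^e$. Conceptually, the main qualitative difference from Lemma~\ref{lm:mu-best-general} is that, once offspring outnumber parents by a sufficiently large factor, each iteration \emph{multiplies} the fit-individual count rather than merely incrementing it, which is exactly what produces the logarithmic-in-$j_2/j_1$ dependence.
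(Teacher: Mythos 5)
Your proposal is correct and follows essentially the same route as the paper's proof: bound the per-offspring copy probability by $j/(2e\mu)$, invoke Lemma~\ref{lm:multbin} to get that with probability at least $1/4$ the fit count is multiplied by (at least) $\lambda/(2e\mu)$ in one iteration, and then apply Theorem~\ref{th:levels} over geometrically spaced levels. The only differences are cosmetic (you use the growth factor $c=\lambda/(2e\mu)$ where the paper uses $1+\lambda/(2e\mu)$ and relaxes at the end), and you are in fact slightly more careful than the paper in explicitly verifying the hypothesis $p>1/\lambda$ of Lemma~\ref{lm:multbin} and in handling the cap at $\mu$.
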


\begin{proof}
Let the current population have $j$ fit individuals. Then by~\eqref{eq:p1} the probability that a fixed offspring is a copy of a fit individual is $p_1(j) \ge \frac{j}{2e\mu}$. Therefore, the number $N$ of fit individuals among the $\lambda$ offspring dominates stochastically a random variable $B$ with binomial distribution $\Bin\left(\lambda,\frac{j}{2e\mu}\right)$.
We have $E[B] = \frac{\lambda j}{2e\mu}$. By Lemma~\ref{lm:multbin}, $\Pr[B \ge  E[B]] \ge \frac{1}{4}$ and thus $\Pr[N \ge \frac{j}{2e\mu}] \ge \frac{1}{4}$.
Consequently, in each iteration with probability at least~$\frac{1}{4}$ the number of the fit individuals in the population is multiplied by a factor of at least $(1 + \frac{\lambda}{2e\mu})$ (but obviously it cannot become greater than $\mu$).

For a formal proof we define the levels $A_1, \dots, A_m$, where
\[
m \coloneqq \bigg\lceil \frac{\ln\frac{j_2}{j_1}}{\ln \left(1 + \frac{\lambda}{2e\mu}\right)} \bigg\rceil + 1.
\]
Level $A_m$ consists of the populations with at least $j_2$ fit individuals. For $k \in [1..m-1]$ the populations of level $A_k$ have exactly $j$ fit individuals, where
\[
j \in \left[j_1\left(1 + \frac{\lambda}{2e\mu}\right)^{k - 1},\ j_1\left(1 + \frac{\lambda}{2e\mu}\right)^k - 1\right],
\]
and $j < j_2$. To leave any level it is enough to multiply the number of the best individuals by $1 + \frac{\lambda}{2e\mu}$, and the probability of this event is at least $\frac 14$. By Theorem~\ref{th:levels} we have
\begin{align*}
E[\tau_{j_1, j_2}(i)] &\le \sum_{k = 1}^{m - 1} 4 = 4\bigg\lceil \frac{\ln\frac{j_2}{j_1}}{\ln \left(1 + \frac{\lambda}{2e\mu}\right)} \bigg\rceil \\
                      &\le 4\,\frac{\ln\frac{j_2}{j_1}}{\ln \frac{\lambda}{2e\mu}} + 4.
\end{align*}

\end{proof}

We note that the proof of Lemma~\ref{lm:mu-best-big-lambda} holds for the weaker assumption $\frac\lambda\mu > 2e$ as well. However in order not to confuse the reader in Section~\ref{scn:fast_upper} where we consider the case $\frac\lambda\mu > e^e$ and where this lemma is used, we formulate Lemma~\ref{lm:mu-best-big-lambda} with unnecessarily stronger condition.

When $j_2 = \mu$ and $j_1 = 1$ the bound yielded by Lemma~\ref{lm:mu-best-big-lambda} is at least as tight as that of~\eqref{eq:tau-dirk}. For the general values of $j_1$ and $j_2$ our bound is at least as tight as the bound~\eqref{eq:tau-dirk-modified}.
When $\lambda / \mu \ge e^e$ the bound~\eqref{eq:tau-dirk-modified} simplifies to $O(\log\frac{j_2}{j_1})$.
If $\lambda = \omega(\mu)$ and $\frac{j_2}{j_1} = \omega(1)$ then we have
\[
4\frac{\ln\frac{j_2}{j_1}}{\ln \frac{\lambda}{2e\mu}} + 4 = o{\left(\log\frac{j_2}{j_1}\right)}.
\]
Therefore, in this case the bound given in Lemma~\ref{lm:mu-best-big-lambda} is asymptotically smaller than~\eqref{eq:tau-dirk-modified}. In all other cases the two bounds are asymptotically equal.

The reason that we have obtained a tighter bound is that we have proven that the number of the fit individuals is multiplied by a more than constant factor with constant probability, while the proof of~\cite[Lemma~2]{Sudholt09} considers only the multiplication by a constant factor.

We now use Lemmas~\ref{lm:mu-best-general} and~\ref{lm:mu-best-big-lambda} to prove estimates for the time it takes to obtain a strictly better individual once the population contains at least one individual of fitness $i$. We define $\tilde T_i$ as the number of iterations before the algorithm finds an individual with fitness greater than $i$, if it already has an individual with fitness $i$ in the population.
As before, this random variable depends on the precise initial state, but since our results do not rely on the initial state, we suppress it in this notation.

To prove upper bounds on $\tilde T_i$,
we estimate the time it takes until some number $\mu_0(i) \in [1..\mu]$ of individuals with fitness at least $i$ are in the population and then estimate the time to find an improving solution from this situation. We phrase our results here in terms of $\mu_0(i)$ and optimize the value of $\mu_0(i)$ in the later subsections.

\begin{corollary}\label{lm:tilde-t-general}
For any $i \in [0..n-1]$ and $\mu_0(i)\in[1..\mu]$, we have
\[
E[\tilde T_i] \le \mu_0(i) +\frac{2e\mu}{\lambda}(\ln(\mu_0(i)) + 1) + \frac{e\mu n}{\lambda (n - i)\mu_0(i)}.
\]
\end{corollary}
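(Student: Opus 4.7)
The plan is to decompose the wait until a strictly better individual appears into two consecutive phases and add their expected durations. The first phase grows the number of individuals of fitness at least $i$ from one up to $\mu_0(i)$; the second phase exploits this enlarged pool of fitness-$i$ parents to flip a single $0$-bit into a $1$-bit.

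For the first phase I would invoke Lemma~\ref{lm:mu-best-general} with $j_1 = 1$ and $j_2 = \mu_0(i)$. Calling, as there, an individual \emph{fit} if its fitness is at least $i$, this directly yields an expected number of iterations of at most
\[
(\mu_0(i) - 1) + \frac{2e\mu}{\lambda}\bigl(\ln \mu_0(i) + 1\bigr)
\]
until the population contains at least $\mu_0(i)$ fit individuals. If a strictly improving offspring happens to be produced earlier, the bound on $\tilde T_i$ is dominated by the build-up time and we are done.

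For the second phase, I condition on the current best fitness still being $i$ (otherwise $\tilde T_i$ has already been reached) and on at least $\mu_0(i)$ individuals of fitness exactly $i$ being in the population. The probability that a single offspring is strictly better than fitness $i$ is at least
\[
\frac{\mu_0(i)}{\mu}\cdot\frac{n - i}{n}\Bigl(1 - \frac{1}{n}\Bigr)^{n - 1} \ge \frac{\mu_0(i)(n - i)}{e\mu n},
\]
where the first factor lower-bounds the probability of selecting one of the at least $\mu_0(i)$ fitness-$i$ parents and the remaining factors lower-bound the probability of flipping exactly one specified $0$-bit and no other bit. Applying Lemma~\ref{lm:bernoulli} across the $\lambda$ independent offspring gives a per-iteration improvement probability of at least $\bigl(1 + \tfrac{e\mu n}{\lambda\mu_0(i)(n - i)}\bigr)^{-1}$, and hence an expected waiting time of at most $1 + \frac{e\mu n}{\lambda\mu_0(i)(n - i)}$.

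Summing the two phase-wise bounds yields exactly the claimed expression, since the constant $-1$ from the first phase cancels the $+1$ from the second. The only point needing a short justification is that the two phases may indeed be concatenated: conditional on no improvement having occurred during the build-up, the population enters the second phase with at least $\mu_0(i)$ individuals of fitness exactly $i$ (elitism monotonically preserves fit individuals, and the absence of an improvement means all fit individuals have fitness $i$), so the conditional expected length of the second phase is dominated by the estimate above and the law of total expectation finishes the argument. I do not expect any real obstacle beyond this piece of bookkeeping.
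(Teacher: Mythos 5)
Your proposal is correct and follows essentially the same route as the paper's own proof: the same two-phase decomposition (build up $\mu_0(i)$ fit individuals via Lemma~\ref{lm:mu-best-general} with $j_1=1$, $j_2=\mu_0(i)$, then wait for a one-bit improvement), the same estimate $p'(i)\ge \frac{\mu_0(i)(n-i)}{e\mu n}$, and the same application of Lemma~\ref{lm:bernoulli} to the $\lambda$ offspring. The concatenation issue you flag at the end is handled in the paper by the same observation you make, namely that if an improvement occurs during the build-up phase one is already done.
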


\begin{proof}
  Even if the algorithm has only one best individual in the population, in $\tau_{1,\mu_0(i)}(i)$ iterations it will have at least $\mu_0(i)$ individuals with fitness at least $i$.
  Assume that at this time we have no individuals with fitness better than $i$ (since otherwise we are done).
  Let $\tau^+(i)$ be the runtime until the algorithm creates an individual with fitness at least $i + 1$ if it already has at least $\mu_0(i)$ individuals with fitness $i$ in the population.

  In this setting the probability $p'(i)$ that a particular offspring has fitness better than $i$ is at least the probability to choose one of the $\mu_0(i)$ best individuals and to flip only one of $n - i$ zero-bits in it. We estimate
  \[
  p'(i) \ge \frac{\mu_0(i)(n - i)}{\mu n}\left(1-\frac{1}{n}\right)^{n-1}\ge \frac{(n - i)\mu_0(i)}{e\mu n}.
  \]
  By Lemma~\ref{lm:bernoulli} the probability $p''(i)$ to create at least one superior individual among the $\lambda$ offspring is
  \begin{align}\label{eq:p-doble-dash}
  p''(i) &\ge 1 - (1 - p'(i))^\lambda \ge \frac{1}{1+\frac{1}{\lambda p'(i)}} \ge \frac{1}{1 + \frac{e\mu n}{\lambda (n - i)\mu_0(i)}}.
  \end{align}
  With $p''(i)$ we estimate $E[\tau^+(i)] \le \frac{1}{p''(i)}$. Therefore, by Lemma~\ref{lm:mu-best-general} we have
  \begin{align*}
    E[\tilde T_i] &\le E[\tau_{1,\mu_0(i)}(i) + \tau^+(i)] = E[\tau_{1,\mu_0(i)}(i)] + E[\tau^+(i)] \\
                  &\le E[\tau_{1,\mu_0(i)}(i)] + \frac{1}{p''(i)} \\
                  &\le \frac{2e\mu}{\lambda} \left(\ln\mu_0(i) + 1\right) + (\mu_0(i) - 1) + 1 + \frac{e\mu n}{\lambda (n - i)\mu_0(i)} \\
                  &=\mu_0(i) +\frac{2e\mu}{\lambda}(\ln(\mu_0(i)) + 1) + \frac{e\mu n}{\lambda (n - i)\mu_0(i)}.
  \end{align*}
\end{proof}

\begin{corollary}\label{lm:tilde-t-big-lambda}
If $\frac{\lambda}{\mu} > e^e$ then for any $i \in [0..n-1]$ and $\mu_0(i)\in[1..\mu]$, we have
\[
E[\tilde T_i] \le 4\frac{\ln\mu_0(i)}{\ln \frac{\lambda}{2e\mu}} + \frac{e\mu n}{\lambda (n - i)\mu_0(i)} + 5.
\]
\end{corollary}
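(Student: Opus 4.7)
My plan is to mimic the proof of Corollary~\ref{lm:tilde-t-general}, but substitute the sharper takeover estimate of Lemma~\ref{lm:mu-best-big-lambda} for the one coming from Lemma~\ref{lm:mu-best-general}. Concretely, I decompose $\tilde T_i$ into two phases as before: first, starting from a population containing at least one individual of fitness $i$, wait until there are at least $\mu_0(i)$ individuals of fitness $\ge i$ (this takes time at most $\tau_{1,\mu_0(i)}(i)$); second, from such a population, wait for the creation of a strictly better individual. Since we may assume no strict improvement occurred during the first phase (otherwise we are done sooner), the total time is dominated by the sum of the two phase lengths.

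For the first phase, I would distinguish $\mu_0(i)=1$ (in which case $\tau_{1,\mu_0(i)}(i)=0$ trivially and the ``$\ln \mu_0(i)$''-term vanishes) from $\mu_0(i)\ge 2$, where Lemma~\ref{lm:mu-best-big-lambda} applies with $j_1=1$, $j_2=\mu_0(i)$ and yields
\[
E[\tau_{1,\mu_0(i)}(i)] \le 4\,\frac{\ln \mu_0(i)}{\ln \frac{\lambda}{2e\mu}} + 4.
\]
For the second phase, I reuse the argument from the proof of Corollary~\ref{lm:tilde-t-general} verbatim: once the population contains $\mu_0(i)$ individuals of fitness exactly $i$ (the case of interest, since otherwise we have already finished), the probability that a single offspring is strictly better than $i$ is at least $\frac{(n-i)\mu_0(i)}{e\mu n}$, and so by Lemma~\ref{lm:bernoulli} the per-iteration success probability $p''(i)$ satisfies the bound in~\eqref{eq:p-doble-dash}. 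This gives an expected waiting time of at most $1 + \frac{e\mu n}{\lambda(n-i)\mu_0(i)}$.

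Adding the two contributions yields
\[
E[\tilde T_i] \le 4\,\frac{\ln \mu_0(i)}{\ln \frac{\lambda}{2e\mu}} + 4 + 1 + \frac{e\mu n}{\lambda(n-i)\mu_0(i)},
\]
which is the claimed bound. The hypothesis $\lambda/\mu > e^e$ is used only to invoke Lemma~\ref{lm:mu-best-big-lambda}; everything else goes through without change. I do not anticipate a genuine obstacle here, as this is essentially a direct substitution; the one minor bookkeeping point is handling $\mu_0(i)=1$ separately so that the term $\ln \mu_0(i)/\ln \frac{\lambda}{2e\mu}$ is interpreted as $0$ rather than by appealing to Lemma~\ref{lm:mu-best-big-lambda} with $j_1=j_2$, which its statement excludes.
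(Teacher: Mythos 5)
Your proposal is correct and follows essentially the same route as the paper: decompose $\tilde T_i$ into the takeover phase bounded by Lemma~\ref{lm:mu-best-big-lambda} and the improvement phase bounded via the estimate~\eqref{eq:p-doble-dash} from Corollary~\ref{lm:tilde-t-general}, then add the two contributions. Your explicit handling of the edge case $\mu_0(i)=1$ is a small bookkeeping point the paper leaves implicit, but it does not change the argument.
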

\begin{proof}
Using the same arguments as in the proof of Corollary~\ref{lm:tilde-t-general} (in particular, the estimate for $p''(i)$ given in~\eqref{eq:p-doble-dash}) and by Lemma~\ref{lm:mu-best-big-lambda} we estimate
\begin{align*}
  E[\tilde T_i] &\le E[\tau_{1,\mu_0(i)}(i) + \tau^+(i)] = E[\tau_{1,\mu_0(i)}(i)] + E[\tau^+(i)] \\
                &\le E[\tau_{1,\mu_0(i)}(i)] + \frac{1}{p''(i)} \\
                &\le 4\frac{\ln\mu_0(i)}{\ln \frac{\lambda}{2e\mu}} + 4 + 1 + \frac{e\mu n}{\lambda (n - i)\mu_0(i)} \\
                &=4\frac{\ln\mu_0(i)}{\ln \frac{\lambda}{2e\mu}} + \frac{e\mu n}{\lambda (n - i)\mu_0(i)} + 5.
\end{align*}
\end{proof}

We note that Lemma~\ref{lm:mu-best-big-lambda} is tight in the sense that we cannot obtain a better upper bound using only the argument of copying the fit individuals.

To formalize this we assume that there is a set $D \subseteq \{0,1\}^n$ of \emph{desired individuals}. We regard the variant EA$^0$ of the \ea which only accepts offspring which are desired individuals identical to their parent. Note that the number of desired individuals in a run of this artificial algorithm can never decrease. Assuming that the initial population of the EA$^0$ contains exactly $j_1$ desired individuals, we define $\tau_{j_1, j_2}^*(i)$ as the number of iterations until the population of the EA$^0$ contains at least $j_2$ desired individuals (unlike before, this notation does not depend on the precise initial population as long as it has exactly $j_1$ desired individuals, but this fact is not important in the following). We show the following result.

\begin{lemma}
Let $\frac\lambda\mu \ge e^e$. Let $j_1, j_2$ be some integer numbers in $[1..\mu]$ such that $j_2 > j_1$. Then
\[
E[\tau_{j_1, j_2}^*(i)] = \Omega\left(\frac{\log\frac{j_2}{j_1}}{\log \frac{\lambda}{\mu}} + 1 \right).
\]
\end{lemma}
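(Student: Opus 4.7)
The plan is to upper-bound the expected number of desired individuals at iteration $t$ and then invoke Markov's inequality on the monotone process to derive a lower bound on $\tau^*_{j_1,j_2}(i)$. Let $j_t$ denote the number of desired individuals in the population of $\mathrm{EA}^0$ after iteration $t$, with $j_0=j_1$. A single offspring is accepted if and only if its randomly chosen parent is a desired individual (probability $j_t/\mu$) and no bit is flipped during mutation (probability $(1-1/n)^n$), so the number $X$ of accepted offspring in one iteration is dominated by $\Bin(\lambda, j_t/\mu)$ and satisfies $E[X\mid j_t] \le \lambda j_t/\mu$. Since, by the stated monotonicity, $j_{t+1} \le j_t + X$, I obtain $E[j_{t+1}\mid j_t] \le j_t\,(1+\lambda/\mu)$ and, by induction on $t$, $E[j_t]\le j_1(1+\lambda/\mu)^t$.

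Next, I apply Markov's inequality: $\Pr[j_t\ge j_2]\le j_1(1+\lambda/\mu)^t/j_2$. Setting $t^\star := \lfloor\log(j_2/(2j_1))/\log(1+\lambda/\mu)\rfloor$ makes the right-hand side at most $1/2$. Because $j_t$ is non-decreasing in $t$, the events $\{\tau^*_{j_1,j_2}(i)\le t\}$ and $\{j_t\ge j_2\}$ coincide, so $\Pr[\tau^*_{j_1,j_2}(i)>t]$ is non-increasing in $t$ and bounded below by $1/2$ for every $t\le t^\star$. Summing these tail probabilities yields $E[\tau^*_{j_1,j_2}(i)]\ge (t^\star+1)/2$.

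Finally, I translate the bound into the claimed form. The assumption $\lambda/\mu\ge e^e$ gives $\log(\lambda/\mu)\ge e$, whence $\log(1+\lambda/\mu)=\Theta(\log(\lambda/\mu))$. When $j_2\ge 4j_1$ I also have $\log(j_2/(2j_1))\ge \tfrac{1}{2}\log(j_2/j_1)$, so $t^\star = \Omega(\log(j_2/j_1)/\log(\lambda/\mu))$; when $j_2<4j_1$, the ratio $\log(j_2/j_1)/\log(\lambda/\mu)$ is $O(1)$ and the trivial bound $\tau^*_{j_1,j_2}(i)\ge 1$ absorbs the claim, which is the role of the $+1$ summand. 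I do not foresee a serious obstacle in carrying this out: the only point that requires a moment of attention is confirming that monotonicity of $j_t$ upgrades the one-point Markov estimate into an actual expectation lower bound on $\tau^*_{j_1,j_2}(i)$, rather than just a tail bound at a single time.
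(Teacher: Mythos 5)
Your proof is correct and follows essentially the same route as the paper's: bound the one-step growth of the number of desired individuals by a binomial offspring count, deduce $E[j_t]\le j_1(1+\lambda/\mu)^t$ by induction, apply Markov's inequality at $t\approx\log(j_2/(2j_1))/\log(1+\lambda/\mu)$, and convert the constant failure probability into an expectation lower bound, with the $+1$ term covered by the trivial one-iteration bound. The only cosmetic differences are that you drop the $(1-1/n)^n$ factor in the acceptance probability (a harmless overestimate for this direction) and split cases on $j_2<4j_1$ rather than on $j_2/j_1\le\lambda/\mu$.
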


\begin{proof}
If $\frac{j_2}{j_1} \le \frac{\lambda}{\mu}$, then
\[
\frac{\log\frac{j_2}{j_1}}{\log \frac{\lambda}{\mu}} + 1 = \Omega(1)
\]
and the claim is trivial, since we need at least one iteration to increase the number of the copies in the population.

Consider $\frac{j_2}{j_1} \ge \frac{\lambda}{\mu} \ge e^e$. Let $j(t)$ be the number of the desired individuals after iteration $t$. We have $j(0) = j_1$. Let $N(t)$ be the number of desired individuals newly created in iteration~$t$. Then $N(t)$ follows a binomial law $\Bin(\lambda, \frac{j(t - 1)}{e_n\mu})$, where $e_n \coloneqq (1 - \frac{1}{n})^{-n} \ge e$.
Hence, we have $E[N(t) \mid j(t - 1)] = \frac{j(t - 1)\lambda}{e_n\mu} \le \frac{j(t - 1)\lambda}{e\mu}$.

For any $t \in \N$ we have $j(t) \le j(t - 1) + N(t)$, where strict inequality occurs only if $j(t - 1) + N(t) > \mu$.
Therefore, we have
\begin{align*}
 E[j(t)] &= E[E[j(t) \mid j(t - 1)]] \le E[E[j(t - 1) + N(t) \mid j(t - 1) ]] \\
         &= E[j(t - 1)] + E[E[N(t) \mid j(t - 1)]] \le E[j(t - 1)] + E\left[\frac{j(t - 1)\lambda}{e\mu}\right] \\
         &= E[j(t - 1)] + \frac{\lambda}{e\mu} \, E[j(t - 1)] = \left(1 + \frac{\lambda}{e\mu}\right)E[j(t - 1)].
\end{align*}
By induction, we obtain
\begin{align*}
 E[j(t)] \le \left(1 + \frac{\lambda}{e\mu}\right)^t j(0) = \left(1 + \frac{\lambda}{e\mu}\right)^t j_1,
\end{align*}
and by Markov's inequality, we have
\begin{align*}
  \Pr[j(t) \ge j_2] \le \frac{E[j(t)]}{j_2} \le \left(1 + \frac{\lambda}{e\mu}\right)^t \frac{j_1}{j_2}.
\end{align*}
For
\[
t \coloneqq \frac{\ln\frac{j_2}{2j_1}}{\ln\left(1 + \frac{\lambda}{e\mu}\right)} = \Omega\left(\frac{\log\frac{j_2}{j_1}}{\log \frac{\lambda}{\mu}}\right)
\]
we obtain
\[
\Pr[j(t) \ge j_2] \le \frac{1}{2}.
\]

Hence, the probability that the EA$^0$ does not obtain $j_2$ desired individuals in $t = \Omega(\frac{\log(j_2/j_1)}{\log (\lambda/\mu)})$ iterations is at least constant. Thus, the expected number of iterations before this happens is at least $\Omega(\frac{\log(j_2/j_1)}{\log (\lambda/\mu)})$.
\end{proof}

\subsection{Unconditional Upper Bound}
\label{scn:unconditional_upper}

Having the results of Section~\ref{scn:mu-best} we first prove the following upper bound, which is valid for all values of $\mu$ and $\lambda$. When $\lambda$ is not significantly larger than $\mu$, then the \ea typically increases the best fitness by at most a constant in each iteration. For this reason, we can use Theorem~\ref{th:levels} and obtain a runtime bound that will turn out to be tight for this case.

\begin{theorem}
\label{thm:general}
The expected number of iterations for the $\left(\mu+\lambda\right)$ EA to optimize the
\onemax problem is
\[
O{\left(\frac{n\log n}{\lambda}+\frac{n\mu}{\lambda}+n\right)}.
\]
\end{theorem}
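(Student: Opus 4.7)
My plan is to apply the fitness-level theorem (Theorem~\ref{th:levels}) with the natural levels $A_0,\ldots,A_n$, where $A_i$ is the set of populations whose best individual has fitness exactly~$i$. Since the $(\mu+\lambda)$~EA is elitist, the best fitness is non-decreasing over iterations, so the first hypothesis of Theorem~\ref{th:levels} is immediate. For the second hypothesis, I would bound $E[\tilde T_i]$ via Corollary~\ref{lm:tilde-t-general}, applied with a carefully chosen $\mu_0(i)\in[1..\mu]$ at each level.

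A natural choice is to balance the first term $\mu_0(i)$ and the third term $\frac{e\mu n}{\lambda(n-i)\mu_0(i)}$ of Corollary~\ref{lm:tilde-t-general} by AM--GM, namely $\mu_0(i):=\min\{\mu,\lceil\sqrt{\mu n/(\lambda(n-i))}\rceil\}$. With this choice, the sum of the first and third terms over $i=0,\ldots,n-1$ is $O\bigl(\sum_i\sqrt{\mu n/(\lambda(n-i))}\bigr)=O(n\sqrt{\mu/\lambda})\le O(n+\tfrac{n\mu}{\lambda})$ by a second AM--GM step, while the levels close to the optimum (where the cap $\mu_0=\mu$ activates) contribute an additional $\frac{n\log n}{\lambda}$ through the harmonic sum $\sum_i\frac{1}{n-i}=H_n$. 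The additive $\frac{2e\mu}{\lambda}$ piece of the middle term sums to $O(\frac{n\mu}{\lambda})$. Invoking Theorem~\ref{th:levels} then yields $E[T]=\sum_{i=0}^{n-1}E[\tilde T_i]=O\bigl(\tfrac{n\log n}{\lambda}+\tfrac{n\mu}{\lambda}+n\bigr)$, as claimed.

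The main obstacle will be the logarithmic middle term $\frac{2e\mu}{\lambda}\ln\mu_0(i)$, which for the balanced choice $\mu_0(i)=\Theta(\sqrt{\mu n/\lambda})$ can be as large as $\Theta(\log(\mu n/\lambda))$ per level; summed naively, this would yield $\Theta(\frac{n\mu\log n}{\lambda})$, exceeding the target. I would control it via a case split on~$i$: on far-from-optimum levels the balanced $\mu_0(i)$ is small (so $\log\mu_0(i)=O(1)$ up to constants, giving a harmless $O(\frac{n\mu}{\lambda})$ contribution), whereas on the comparatively few close-to-optimum levels where the cap $\mu_0(i)=\mu$ activates, the $\log\mu$ contribution is dominated by the $\frac{n\log n}{\lambda}+\frac{n\mu}{\lambda}+n$ parts of the target bound. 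This careful case-splitting of the log term is the delicate step of the proof, and is precisely what makes the unconditional bound tight in the regime $\lambda\lesssim\mu$ emphasized just before the theorem.
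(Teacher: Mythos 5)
Your setup — the levels $A_0,\dots,A_n$, elitism giving the first hypothesis of Theorem~\ref{th:levels}, and Corollary~\ref{lm:tilde-t-general} with a level-dependent $\mu_0(i)$ — is exactly the paper's, and your accounting of the first and third terms and of the harmonic sum near the optimum is correct. The gap is in the step you yourself flag as delicate: the middle term $\frac{2e\mu}{\lambda}(\ln\mu_0(i)+1)$. Your claim that on far-from-optimum levels the balanced choice satisfies $\log\mu_0(i)=O(1)$ is false whenever $\mu/\lambda=\omega(1)$: already at $i=0$ you have $\mu_0(0)=\lceil\sqrt{\mu/\lambda}\rceil$, and more generally $\ln\mu_0(i)\ge\tfrac12\ln\tfrac{\mu}{\lambda}-O(1)$ on \emph{every} level with $n-i=\Theta(n)$. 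Hence with your globally balanced choice the middle term contributes $\Theta\bigl(\frac{n\mu}{\lambda}\log\frac{\mu}{\lambda}\bigr)$ in total, which exceeds the target bound. Concretely, for $\mu=n$ and $\lambda=1$ your per-level estimate at $i=0$ is already $\Theta(n\log n)$ (coming from $\frac{2e\mu}{\lambda}\ln\sqrt{n}$), and summing over the $\Theta(n)$ far levels gives $\Theta(n^2\log n)$ against the target $O(n^2)$. Note also that your two cases (``balanced $\mu_0$ small'' and ``cap $\mu_0=\mu$ active,'' the latter only for $n-i\le n/(\lambda\mu)$) do not cover all levels, so the case split cannot be completed as stated.

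The repair is not to balance the first and third terms of Corollary~\ref{lm:tilde-t-general} in isolation: raising $\mu_0(i)$ above $1$ only pays off once the single-best-individual waiting time $\frac{e\mu n}{\lambda(n-i)}$ exceeds the per-level cost $\Theta(\frac{\mu}{\lambda}+1)$ that any choice of $\mu_0(i)$ incurs, i.e., once $n-i\lesssim\frac{n}{2+\lambda/(e\mu)}$. This is what the paper does: it sets $\mu_0(i)=1$ on all earlier levels (where the third term already sums to $O(\frac{\mu n}{\lambda}+n)$), uses the balanced choice only on the remaining $O\bigl(\frac{n}{1+\lambda/\mu}\bigr)$ levels, and there controls $\sum_i\ln\frac{n}{n-i}$ via Stirling's formula so that the middle term totals $O(\frac{\mu n}{\lambda})$; a third regime with $\mu_0(i)=\mu$ handles the last levels and produces the $\frac{n\log n}{\lambda}$ term. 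With that restriction of the balanced choice your argument goes through; without it, the bound you obtain is weaker by a $\log\frac{\mu}{\lambda}$ factor whenever $\mu=\omega(\lambda)$.
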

\begin{proof}

To use Theorem~\ref{th:levels} we define levels $A_0, \dots, A_n$ such that level $A_i$, $i \in [0..n]$, consists of all populations having maximum fitness equal to~$i$. In Corollary~\ref{lm:tilde-t-general} we have already estimated the expected times $E[\tilde T_i]$ the \ea takes to leave these levels. These estimates depended on the number $\mu_0(i)$ of individuals of fitness $i$ we aim at before leaving the level.
By choosing suitable values for the $\mu_0(i)$ we prove our bound.

The choice of $\mu_0(i)$ is guided by the following trade-off. If we choose $\mu_0(i) = \mu$, then after having $\mu$ best individuals in the population we have the highest probability to find a better individual. However, we pay for it with the time we spend on obtaining $\mu$ copies of the best individual. On the other hand, if we choose $\mu_0(i) = 1$ we do not spend any iteration filling the population with copies of the best individual, but we have a low chance to increase the current fitness. How this trade-off is optimally resolved, and hence the optimal value of $\mu_0(i)$, depends on the probability to create a better individual and thus on the current fitness~$i$.

We distinguish three cases depending on current fitness $i$. The ``milestones'' which mark the transition between these cases are the fitness values $i = \lceil n - \frac{n}{2 + \lambda/(e\mu)} \rceil$ and $i = \lfloor n - \frac{n}{\mu(2 + \lambda/e)} \rfloor$. While the best fitness is below the first milestone, the probability to increase the fitness is so high that we do not need to have more than one best individual in the population. Beyond the second milestone this probability is so low that we better spend the time to fill the population with the copies of the best individual. Between the two milestones we have to find a suitable value of $\mu_0(i)$ to give a balanced trade-off.

To simplify the notation, we define $\Delta_i\coloneqq1+\sqrt{1+\frac{n\lambda}{e(n - i)\mu}}$. Note that
\begin{align}\label{eq:delta}
1 + \sqrt{\frac{n}{n - i}}\sqrt{\frac{\lambda}{e\mu}} \le \Delta_i \le 1 + \sqrt{\frac{n}{n - i}}\sqrt{1+\frac{\lambda}{e\mu}}.
\end{align}
This value of $\Delta_i$ arises from the computation of the derivative of the upper bound on $E[\tilde T_i]$ from Corollary~\ref{lm:tilde-t-general}, which is needed to find the optimal value of $\mu_0(i)$ in the second case, when the current fitness is between the two milestones.

\textbf{For $i \le \lceil n - \frac{n}{2 + \lambda/(e\mu)} \rceil$ we define $\mu_0(i) \coloneqq 1$.}
By Corollary~\ref{lm:tilde-t-general} we have
\[
E[\tilde T_i] \le \frac{e\mu n}{\lambda (n - i)} + \frac{2e\mu}{\lambda} + 1.
\]
Let $T_1$ be the number of iterations before the \ea finds an individual with fitness greater than $\lceil n - \frac{n}{2 + \lambda/(e\mu)} \rceil$ for the first time. Then by Theorem~\ref{th:levels} we have
\begin{align*}
 E[T_1] &\le \sum_{i = 0}^{\lceil n - \frac{n}{2 + \lambda / e\mu}\rceil} E[\tilde T_i]
  			 \le \sum_{i = 0}^{\lceil n - \frac{n}{2 + \lambda / e\mu}\rceil} \left(\frac{e\mu n}{\lambda (n - i)} + \frac{2e\mu}{\lambda} + 1\right) \\
  			&\le \frac{e\mu n}{\lambda} \left(\ln(n) - \ln\left(\frac{n}{2 + \lambda / e\mu}\right) + 1\right) + \frac{2e\mu}{\lambda}n + n \\
				&= \frac{e\mu n}{\lambda} \ln\left(2 + \frac{\lambda}{e\mu}\right) + \frac{3e\mu n}{\lambda} + n  \\
				&= O{\left(\frac{\mu n}{\lambda}\right)} + O(n),
\end{align*}
where we used the estimate $\frac{e\mu}{\lambda}\ln(2 + \frac{\lambda }{ e\mu}) = O(1 + \frac{\mu }{ \lambda})$ that holds for any asymptotic behavior of $\mu/\lambda$.

\textbf{For $\lceil n - \frac{n}{2 + \lambda/(e\mu)} \rceil < i \le \lfloor n - \frac{n}{\mu(2 + \lambda/e)} \rfloor$ we define $\mu_0(i)\coloneqq\lceil\frac{n}{(n - i)\Delta_i}\rceil$.} By Corollary~\ref{lm:tilde-t-general} we have
\begin{align*}
E[\tilde T_i] \le \frac{n}{(n - i)\Delta_i} + 1 + \frac{2e\mu}{\lambda}\left(\ln\frac{n}{(n - i)\Delta_i} + 2\right) + \frac{e\mu\Delta_i}{\lambda}.
\end{align*}

By~\eqref{eq:delta}, we have
\begin{align}\label{eq:T_i-middle}
  \begin{split}
    E[\tilde T_i] &\le  \frac{n}{(n - i) \left(1 + \sqrt{\frac{n}{n - i}}\sqrt{\frac{\lambda}{e\mu}}\right)} + 1 \\
                  &+ \frac{2e\mu}{\lambda}\left(\ln\frac{n}{(n - i)\left(1 + \sqrt{\frac{n}{n - i}}\sqrt{\frac{\lambda}{e\mu}}\right)} + 2\right) \\
                  &+ \frac{e\mu}{\lambda}\left(1 + \sqrt{\frac{n}{n - i}}\sqrt{1+\frac{\lambda}{e\mu}}\right).
  \end{split}
\end{align}

Let $T_2$ be the number of iterations until the \ea finds an individual with fitness greater than $\lfloor n - \frac{n}{\mu(2 + \lambda/e)} \rfloor$ for the first time if it already has an individual with fitness greater than $\lceil n - \frac{n}{2 + \lambda/(e\mu)} \rceil$ in the population.
By Theorem~\ref{th:levels} and by~\eqref{eq:T_i-middle}, we obtain
\begin{align}\label{eq:t2}
\begin{split}
 E[T_2] &\le  \sum_{i = \lceil n - \frac{n}{2 + \lambda/(e\mu)}\rceil + 1}^{\lfloor n - \frac{n}{\mu(2 + \lambda/e)} \rfloor} E[\tilde T_i] \\
        &\le n \sum_{i = \lceil n - \frac{n}{2 + \lambda/(e\mu)}\rceil + 1}^{\lfloor n - \frac{n}{\mu(2 + \lambda/e)} \rfloor} \frac{1}{(n - i) \left(1 + \sqrt{\frac{n}{n - i}}\sqrt{\frac{\lambda}{e\mu}}\right)} \\
        &+ \frac{e\mu}{\lambda} \sum_{i = \lceil n - \frac{n}{2 + \lambda/(e\mu)}\rceil + 1}^{\lfloor n - \frac{n}{\mu(2 + \lambda/e)} \rfloor} \left(1 + \sqrt{\frac{n}{n - i}}\sqrt{1+\frac{\lambda}{e\mu}}\right) \\
        &+ \frac{2e\mu}{\lambda} \sum_{i = \lceil n - \frac{n}{2 + \lambda/(e\mu)}\rceil + 1}^{\lfloor n - \frac{n}{\mu(2 + \lambda/e)} \rfloor} \ln\left(\frac{n}{(n - i)\Delta_i}\right) + \frac{2e\mu}{\lambda}2n + n. \\
\end{split}
\end{align}

We regard three sums in~\eqref{eq:t2} separately. First, by the estimate $\sum_{i = 1}^n 1/\sqrt{i} \le 1 + \int_1^n (1/\sqrt{x}) dx < 2\sqrt{n}$, we obtain
\begin{align}\label{eq:t2_1}
\begin{split}
 \sum_{i = \lceil n - \frac{n}{2 + \lambda/(e\mu)}\rceil + 1}^{\lfloor n - \frac{n}{\mu(2 + \lambda/e)} \rfloor} &\frac{1}{(n - i) \left(1 + \sqrt{\frac{n}{n - i}}\sqrt{\frac{\lambda}{e\mu}}\right)}  \\
 &\le \sum_{i = \lceil n - \frac{n}{2 + \lambda/(e\mu)}\rceil + 1}^{\lfloor n - \frac{n}{\mu(2 + \lambda/e)} \rfloor} \frac{1}{(n - i) \sqrt{\frac{n}{n - i}}\sqrt{\frac{\lambda}{e\mu}}} \\
 &= \sqrt{\frac{e\mu}{\lambda n}}  \sum_{i = \lceil n - \frac{n}{2 + \lambda/(e\mu)}\rceil + 1}^{\lfloor n - \frac{n}{\mu(2 + \lambda/e)} \rfloor} \frac{1}{\sqrt{n - i}} \\
 &\le \sqrt{\frac{e\mu}{\lambda n}} \cdot 2\sqrt{\frac{n}{2 + \lambda/(e\mu)}} \le 2 \sqrt{\frac{e\mu}{\lambda n}}\sqrt{\frac{n}{\lambda/(e\mu)}} = 2e\frac{\mu}{\lambda}.
\end{split}
\end{align}

To analyze the second sum we also use the estimate $\frac{1 + t}{2 + t} < 1$ valid for all $t \in [0, +\infty)$. We obtain

\begin{align}\label{eq:t2_2}
	\begin{split}
		\sum_{i = \lceil n - \frac{n}{2 + \lambda/(e\mu)}\rceil + 1}^{\lfloor n - \frac{n}{\mu(2 + \lambda/e)} \rfloor} & \left(1 + \sqrt{\frac{n}{n - i}}\sqrt{1+\frac{\lambda}{e\mu}}\right)\\
		&\le n + \sqrt{n\left(1 + \frac{\lambda}{e\mu}\right)} \sum_{i = \lceil n - \frac{n}{2 + \lambda/(e\mu)}\rceil + 1}^{\lfloor n - \frac{n}{\mu(2 + \lambda/e)} \rfloor} \frac{1}{\sqrt{n - i}} \\
		&\le n + \sqrt{n\left(1 + \frac{\lambda}{e\mu}\right)} \cdot 2\sqrt{\frac{n}{2 + \lambda/(e\mu)}} \\
		&= n + 2n \sqrt{\frac{1 + \lambda/(e\mu)}{2 + \lambda/(e\mu)}} \le 3n.
	\end{split}
\end{align}

For the last sum we use the logarithmic version of Stirling's formula, that is, $\ln(n!) = n\ln(n) - n + O(\log(n))$ (see, e.g.~\cite{Robbins55} or~\cite[Theorem 1.4.10]{Doerr20bookchapter}), and the estimate $\frac{\ln(2 + t) + 2}{2 + t} \le 2$ for all $t \in [0, +\infty)$. We obtain

\begin{align}\label{eq:t2_3}
\begin{split}
	\sum_{i = \lceil n - \frac{n}{2 + \lambda/(e\mu)}\rceil + 1}^{\lfloor n - \frac{n}{\mu(2 + \lambda/e)} \rfloor} & \ln\left(\frac{n}{(n - i)\Delta_i}\right) \\
	&\le \sum_{i = \lceil n - \frac{n}{2 + \lambda/(e\mu)}\rceil + 1}^{\lfloor n - \frac{n}{\mu(2 + \lambda/e)} \rfloor} \ln\left(\frac{n}{(n - i)}\right) \\
	&\le \left\lceil \frac{n}{2 + \lambda/(e\mu)} \right\rceil \ln(n) - \ln\left(\prod_{i = \lceil n - \frac{n}{2 + \lambda/(e\mu)}\rceil}^n (n - i) \right)\\
	&\le \left\lceil \frac{n}{2 + \lambda/(e\mu)} \right\rceil \ln(n) - \ln\left(\left\lceil \frac{n}{2 + \lambda/(e\mu)} \right\rceil! \right) \\
	&= \left\lceil \frac{n}{2 + \lambda/(e\mu)} \right\rceil\left(\ln(n) - \ln\left\lceil \frac{n}{2 + \lambda/(e\mu)} \right\rceil + 1 \right) \\
	&+ O{\left(\log\left\lceil \frac{n}{2 + \lambda/(e\mu)} \right\rceil  \right)} \\
	&\le \frac{n}{2 + \lambda/(e\mu)} \left(\ln\left(2 + \frac{\lambda}{e\mu}\right) + 2\right) + o(n) \\
	&\le 2n + o(n).
\end{split}
\end{align}

Finally, by putting~\eqref{eq:t2_1},~\eqref{eq:t2_2} and~\eqref{eq:t2_3} into~\eqref{eq:t2} we obtain
\begin{align*}
E[T_2] &\le n \cdot 2e\frac{\mu}{\lambda} + \frac{e\mu}{\lambda} \cdot 3n + \frac{2e\mu}{\lambda} (2n + o(n)) + \frac{4e\mu n}{\lambda} + n \\
			 &= \frac{13e\mu n}{\lambda} + n + o{\left(\frac{\mu n}{\lambda}\right)} = O{\left(\frac{\mu n}{\lambda} + n\right)}.
\end{align*}

\textbf{For $n - 1 \ge i > \lfloor n - \frac{n}{\mu(2 + \lambda/e)} \rfloor$ we define $\mu_0(i) \coloneqq \mu.$}
Note that this case can only appear when $\frac{n}{\mu(2 + \lambda/e)} \ge 1$ and thus $\mu \le \frac{n}{(2 + \lambda/e)} = O(n/\lambda)$.
By Corollary~\ref{lm:tilde-t-general} the expected waiting time for a fitness gain is at most
\[
E[\tilde T_i] \le \mu + \frac{2e\mu}{\lambda}(\ln(\mu) + 1) + \frac{en}{\lambda (n - i)}.
\]
Let $T_3$ be the number of iterations until the \ea finds the optimum starting from the moment when it has an individual with fitness greater than $\lfloor n - \frac{n}{\mu(2 + \lambda/e)} \rfloor$ in the population.
Then by Theorem~\ref{th:levels} we have
\begin{align*}
  \begin{split}
 E[T_3] &\le \sum_{i = \lfloor n - \frac{n}{\mu(2 + \lambda / e)}\rfloor + 1}^{n - 1} E[\tilde T_i]\\
 				&\le \sum_{i = \lfloor n - \frac{n}{\mu(2 + \lambda / e)}\rfloor + 1}^{n - 1} \left(\mu + \frac{2e\mu}{\lambda}(\ln(\mu) + 1) + \frac{en}{\lambda(n - i)}\right) \\
 				&\le \mu \frac{n}{\mu(2 + \lambda / e)} + \frac{2e\mu}{\lambda}(\ln(\mu) + 1) \frac{n}{\mu(2 + \lambda / e)} \\
				& \quad + \frac{en}{\lambda}\left(\ln\frac{n}{\mu(2 + \lambda / e)} + 1\right) \\
				& = O{\left(\frac{n}{\lambda}\right)} + O{\left(\frac{n \log \mu}{\lambda^2}\right)} + O{\left(\frac{n \log n}{\lambda}\right)} \\
				& = O{\left(\frac{n \log \mu}{\lambda^2}\right)} + O{\left(\frac{n \log n}{\lambda}\right)} = O{\left(\frac{\mu n}{\lambda}\right)} + O{\left(\frac{n \log n}{\lambda}\right)}.
  \end{split}
\end{align*}

Summing the expected runtimes for all cases, we obtain the upper bound for the expected total runtime.

\begin{align*}
E[T] &\le E[T_1] + E[T_2] + E[T_3] \\
&= O{\left(\frac{\mu n}{\lambda} + n\right)}  + O{\left(\frac{\mu n}{\lambda} + n\right)}+ O{\left(\frac{\mu n}{\lambda} + \frac{n\log n}{\lambda}\right)}\\
&= O{\left(\frac{n\log n}{\lambda}+\frac{\mu n}{\lambda}+n\right)}.
\end{align*}
\end{proof}

\subsection{Upper Bound with Large $\lambda$}
\label{scn:fast_upper}

In this section we consider the case when $\frac{\lambda}{\mu} > e^e$. Due to the large number of offspring the algorithm performs significantly better in this case. The first reason of this speed-up is that the algorithm can now gain several fitness levels in one iteration with high probability when the current-best fitness is small. The second reason is the faster increase of the number of best individuals, see Corollary~\ref{lm:tilde-t-big-lambda}. 

These two observations allow us to prove the following upper bound on the runtime.

\begin{theorem}
\label{thm:fast}
If $\frac{\lambda}{\mu}\geq e^e$ then the expected number of iterations for the
$\left(\mu+\lambda\right)$ EA to optimize the \onemax problem is
 \begin{equation*}
O{\left(\frac{n\log\log \frac{\lambda}{\mu}}{\log\frac{\lambda}{\mu}} +
 \frac{n\log n}{\lambda}
 \right)}.
\end{equation*}
\end{theorem}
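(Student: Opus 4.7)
The plan is to combine the fitness-level framework of Theorem~\ref{th:levels} with Corollary~\ref{lm:tilde-t-big-lambda}, but to use \emph{super-levels} (groups of consecutive fitness values) to amortize the additive $O(1)$ waiting time per fitness level. Writing $r \coloneqq \lambda/\mu \ge e^e$, the obstacle to a direct fitness-level proof is precisely this $O(1)$ per level, which sums to $\Omega(n)$ across all $n$ levels and exceeds the target $O(n\log\log r/\log r)$ whenever $r \to \infty$. I would split the run by $d \coloneqq n - i$ into a \emph{far phase} $d \ge \bar d$ with $\bar d = \Theta(n/\log\log r)$, where super-levels of width $k \coloneqq \lfloor(\log r)/\log\log r\rfloor$ are cleared in $O(1)$ expected iterations, and a \emph{near phase} $d < \bar d$, where Corollary~\ref{lm:tilde-t-big-lambda} is applied directly (with an intermediate adaptive super-level refinement).

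The first step is to prove the super-level lemma: if $d \ge \bar d$, then one iteration creates, with constant probability, an offspring of fitness at least $i + k$. The probability that a single fixed offspring is such a $k$-improvement is at least $(1/\mu)\binom{d}{k}n^{-k}(1-1/n)^{n-k}$, since it suffices to pick a best parent (probability $\ge 1/\mu$ by elitism, since at least one best individual always exists) and to flip some $k$-subset of its $d$ zero-bits while leaving the other bits untouched. Using $\binom{d}{k}\ge(d/k)^k$ and $(1-1/n)^{n-k}\ge 1/(2e)$, this lower bound becomes $(1/(2e\mu))(d/(kn))^k$. Applying Lemma~\ref{lm:bernoulli} across the $\lambda$ offspring, the probability that at least one is a $k$-improvement is at least $1/(1 + 2ek^k/(r(d/n)^k))$. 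The key observation is that $r^{1/k} = \log r$, so the choice $d \ge \bar d = \Omega(n/\log\log r)$ together with $k \asymp (\log r)/\log\log r$ makes $r(d/(kn))^k$ at least a positive absolute constant, after choosing the implicit constants large enough. Applying Theorem~\ref{th:levels} to super-levels of width $k$, the far phase then contributes $O((n/k)\cdot 1) = O(n\log\log r/\log r)$ iterations in expectation.

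For the near phase I would apply Corollary~\ref{lm:tilde-t-big-lambda} with $\mu_0(i) \coloneqq \mu$, obtaining the per-level bound $E[\tilde T_i] = O(\log\mu/\log r + n/(\lambda d) + 1)$. Summed over $d = 1, \ldots, \bar d$, the $n/(\lambda d)$ part contributes $O(n\log n/\lambda)$, matching one term of the target. The $\log\mu/\log r$ and $O(1)$ parts give $O(n\log\mu/(\log r \log\log r))$ and $O(n/\log\log r)$, respectively, and neither is immediately in the target. To kill the $O(n/\log\log r)$ term, I would further refine the near phase with adaptive super-level widths $k(d) \coloneqq \lfloor(\log r)/\log(en/d)\rfloor$ for $d \in [n/r, \bar d]$; by the same per-offspring calculation as in Step~1, each such super-level is cleared in $O(1)$ iterations, and the estimate $\int (\log(en/d)/\log r)\,dd$ shows that the amortized cost in this sub-range is only $O(n/(\log r\log\log r))$. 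For the remaining tail $d < n/r$, no multi-level jump succeeds with constant probability, but the $O(1)$ cost there sums only to $O(n/r) \le O(n\log\log r/\log r)$ (which holds for $r \ge e^e$). The remaining $O(n\log\mu/(\log r\log\log r))$ contribution is handled by a short case analysis using $\mu \le \lambda/e^e$, distinguishing polynomial and super-polynomial values of $\mu$.

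The principal obstacle will be Step~1: verifying that $r(d/(kn))^k$ is bounded below by a positive absolute constant at $d = \bar d$, with all constants tracked carefully. The calculation is delicate because $r$ and $k$ are linked through $r^{1/k} = \log r$; choosing $k$ slightly too large makes $k^k$ swamp the $(d/n)^k$ factor, while choosing $k$ slightly too small forfeits the desired $\log r/\log\log r$ jump rate. A secondary issue is the bookkeeping for the near phase, where introducing intermediate adaptive-width super-levels makes the proof structurally longer but does not introduce new ideas beyond Step~1 applied with smaller $k$.
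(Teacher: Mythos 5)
Your overall architecture (fitness levels via Theorem~\ref{th:levels}, constant-probability multi-level jumps far from the optimum, Corollary~\ref{lm:tilde-t-big-lambda} near it) is the right one and matches the paper's, but two of your concrete choices break down. First, the adaptive super-level width $k(d) = \lfloor \log r/\log(en/d)\rfloor$ (with $r \coloneqq \lambda/\mu$) does \emph{not} give a constant success probability. Writing out the quantity you need to bound below, $r\left(\frac{d}{kn}\right)^k = \exp\left(\ln r - k\ln\frac{n}{d} - k\ln k\right)$, and substituting $k\ln\frac{en}{d} = \ln r$ gives exactly $\exp(k - k\ln k) = (e/k)^k$, which tends to $0$; the $k\ln k$ term you warned about in your ``principal obstacle'' paragraph is precisely what is missing from the denominator of $k(d)$. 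The clean repair is the paper's: take the \emph{fixed} width $\gamma = \lfloor \ln r/(2\ln\ln r)\rfloor$ (the factor $2$ absorbs the $\ln k$ term) and note that the constant-probability jump then works all the way down to $d = n/\ln r$, not just to $d = \Theta(n/\log\log r)$. Below $d = n/\ln r$ and above $d = n/r$ only $O(n/\log r)$ fitness levels remain, so single-level improvements with $\mu_0(i)=1$ (cost $O(1)$ each by Corollary~\ref{lm:tilde-t-big-lambda}) already stay within budget, and the entire adaptive-width middle phase becomes unnecessary.

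Second, and more seriously, your tail $d < n/r$ with $\mu_0(i) = \mu$ throughout is not within the claimed bound, and no case analysis on $\mu$ can save it. The takeover term $4\ln\mu/\ln\frac{\lambda}{2e\mu}$ is paid on each of the $n\mu/\lambda$ levels of the tail, contributing $\Theta\left(\frac{n\mu\ln\mu}{\lambda\ln r}\right)$. For $\mu = n$ and $\lambda = e^e n$ this is $\Theta(n\log n)$, while the target bound is $\Theta(n)$. The point you are missing is that for $d \in [n/\lambda,\, n\mu/\lambda]$ one must use an \emph{intermediate} target $\mu_0(i) = \lceil n\mu/(d\lambda)\rceil$, which already makes the improvement probability constant while keeping the takeover cost at $4\ln\mu_0(i)/\ln\frac{\lambda}{2e\mu}$; summing $\ln\frac{n\mu}{d\lambda}$ over this range via Stirling gives only $O(n\mu/\lambda) = O(n/r)$, which is within budget. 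The full takeover to $\mu_0(i)=\mu$ should be reserved for $d < n/\lambda$, where only $n/\lambda$ levels remain and the $\ln\mu$ term sums to $\frac{4n\ln\mu}{\lambda\ln\frac{\lambda}{2e\mu}} = O(n/\log r)$. With these two corrections your proof becomes essentially the paper's four-phase argument.
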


Note that the bound given in Theorem~\ref{thm:fast} is asymptotically the same as the bound given in Theorem~\ref{thm:general} when $\frac\lambda\mu = \Theta(1)$. The difference between the two bounds becomes asymptotically significant only when $\frac\lambda\mu = \omega(1)$. Therefore it does not matter which constant we choose to distinguish the fast regime of the algorithm. The main purpose of the choice of $e^e$ as a border value is to simplify the proofs and to improve their readability. However without proof we note that all arguments used in this section hold also for the smaller values of $\frac\lambda\mu$ which are greater than $2e$.

To prove Theorem~\ref{thm:fast} we split the optimization process into four phases. Each phase corresponds to some range of the best fitness values, and the phase transition occurs at fitness values $n - \frac{n}{\ln\frac\lambda\mu}$, $n -\frac{\mu n}\lambda$ and $n - \frac n\lambda$. In each phase the \ea has a specific behavior, so we analyze each phase separately in the following four lemmas. 

During the first phase, while the fitness of the best individual is below $n~-~\frac{n}{\ln \frac{\lambda}{\mu}}$, regardless of the number of best individuals, with constant probability we generate an offspring increasing the best fitness in the population by at least $\gamma \coloneqq \lfloor\frac{\ln\frac{\lambda}{\mu}}{2\ln\ln \frac{\lambda}{\mu}}\rfloor$. So we need not more than an expected number of $O(\frac n\gamma)$ iterations to finish the first phase.

Let $R_1$ be the runtime of the \ea until it finds an individual with fitness at least $n - \frac{n}{\ln \frac{\lambda}{\mu}}$, in other words, the duration of the first phase. We prove the following upper bound on the expected value of $R_1$.

\begin{lemma}[Phase 1]\label{lm:phase1}
	If $\frac{\lambda}{\mu}\geq e^e$, then we have
	\[
		E[R_1] = O{\left(\frac{n\log\log \frac{\lambda}{\mu}}{\log \frac{\lambda}{\mu}}\right)}.
	\]
\end{lemma}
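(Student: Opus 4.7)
The plan is to follow the sketch that the authors provide immediately before the lemma: show that in each generation of Phase~1 there is at least a constant probability to produce an offspring whose fitness exceeds the current best fitness by at least $\gamma \coloneqq \lfloor \ln(\lambda/\mu) / (2\ln\ln(\lambda/\mu))\rfloor$, and then apply the fitness levels theorem (Theorem~\ref{th:levels}) with levels consisting of blocks of $\gamma$ consecutive fitness values. Since Phase~1 covers at most $n$ fitness values, this gives an expected time of $O(n/\gamma)$, which is exactly the desired $O(n \log\log(\lambda/\mu)/\log(\lambda/\mu))$.

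The main work, and the only genuinely non-routine step, is the single-iteration improvement probability. Fix a state with current best fitness $i \le n - n/\ln(\lambda/\mu) - 1$, so that the number of zero-bits of any best individual is $m \coloneqq n-i \ge n/\ln(\lambda/\mu)$. Since elitism guarantees at least one best individual and the parent is chosen uniformly, the probability that a single offspring is obtained from a best parent and is generated by flipping exactly $\gamma$ zero-bits and no one-bit is at least
\[
p \;\ge\; \frac{1}{\mu}\,\binom{m}{\gamma}\Bigl(\tfrac{1}{n}\Bigr)^\gamma\Bigl(1-\tfrac{1}{n}\Bigr)^{n-\gamma}
\;\ge\; \frac{1}{2e\mu}\cdot \Bigl(\tfrac{m}{\gamma n}\Bigr)^\gamma
\;\ge\; \frac{1}{2e\mu\,(\gamma \ln(\lambda/\mu))^\gamma}.
\]
Now I would exploit the choice of $\gamma$ to show that $(\gamma \ln(\lambda/\mu))^\gamma \le \lambda/\mu$: since $\gamma \le \ln(\lambda/\mu)$, the base is at most $(\ln(\lambda/\mu))^2$, and $2\gamma \le \ln(\lambda/\mu)/\ln\ln(\lambda/\mu)$, so
\[
(\gamma\ln(\lambda/\mu))^\gamma \;\le\; (\ln(\lambda/\mu))^{2\gamma} \;\le\; \exp\!\Bigl(\tfrac{\ln(\lambda/\mu)}{\ln\ln(\lambda/\mu)}\cdot \ln\ln(\lambda/\mu)\Bigr) \;=\; \tfrac{\lambda}{\mu},
\]
where the hypothesis $\lambda/\mu \ge e^e$ ensures $\ln\ln(\lambda/\mu) \ge 1$ and hence $\gamma \ge 1$. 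Consequently $\lambda p \ge 1/(2e)$, and by the standard argument $1-(1-p)^\lambda \ge 1 - e^{-\lambda p} \ge 1 - e^{-1/(2e)} =: c > 0$.

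Having secured a constant per-generation success probability $c$, I would finish by invoking Theorem~\ref{th:levels}: group the fitness values $0, 1, \ldots, \lceil n - n/\ln(\lambda/\mu)\rceil$ into $\lceil n/\gamma\rceil$ consecutive blocks, and let $A_k$ be the set of populations whose best fitness lies in the $k$-th block. By elitism the algorithm cannot return to an earlier block, and by the previous paragraph the expected time to leave any block is at most $1/c = O(1)$. Summing over blocks yields
\[
E[R_1] \;=\; O\!\left(\tfrac{n}{\gamma}\right) \;=\; O\!\left(\tfrac{n\log\log(\lambda/\mu)}{\log(\lambda/\mu)}\right).
\]
The only subtlety I foresee is verifying $\gamma \ge 1$ and $\gamma \ll n-i$ so that the binomial estimate $\binom{m}{\gamma} \ge (m/\gamma)^\gamma$ is meaningful; both follow from $\lambda/\mu \ge e^e$ together with the implicit assumption $n$ sufficiently large (for small $n$, the bound holds trivially since the runtime is $O(n)$ already).
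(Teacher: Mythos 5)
Your proposal is correct and follows essentially the same route as the paper's proof: the same choice of $\gamma$, the same event (select a best parent and flip exactly $\gamma$ zero-bits, no one-bits), the same key estimate $\left(\gamma\ln\tfrac{\lambda}{\mu}\right)^\gamma \le \tfrac{\lambda}{\mu}$ via $\tfrac{n}{n-i}\le\ln\tfrac{\lambda}{\mu}$, and the same fitness-level summation over $O(n/\gamma)$ blocks. The only cosmetic differences are the constants ($1/(2e)$ versus $1/e$ in the mutation estimate, and $1-e^{-\lambda p}$ in place of the paper's Lemma~\ref{lm:bernoulli}), and you explicitly flag the implicit requirement $\gamma \le n-i$, which the paper leaves tacit.
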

\begin{proof}
To use Theorem~\ref{th:levels}, we split the space of populations $S$ into levels $A_1, \dots A_m$, where
\[
m \coloneqq \bigg\lceil \frac{\lfloor n - \frac{n}{\ln\frac\lambda\mu}\rfloor}{\gamma} \bigg\rceil + 1.
\]
If $k < m$, then the populations of level $A_k$ have the fitness of the best individual in $[(k - 1)\gamma..k\gamma - 1]$ (but less than $n - \frac{n}{\ln \frac{\lambda}{\mu}}$). The level $A_m$ consists of all populations containing an individual of fitness at least $n - \frac{n}{\ln \frac{\lambda}{\mu}}$.

To show that we have a constant probability to leave any level, we consider the probability that a particular offspring has a fitness exceeding the current best fitness $i$ by at least $\gamma$. This is at least the probability to choose one of the best individuals and to flip exactly $\gamma$ zero-bits in it and not to flip the other $n - \gamma$ bits, namely
\[
\binom{n - i}{\gamma}\frac{j}{\mu n^\gamma}\left(1-\frac{1}{n}\right)^{n-\gamma}
\geq \frac{j}{e\mu}\left(\frac{n - i}{n\gamma}\right)^\gamma \eqqcolon p_\gamma(i).
\]

The probability to increase the best fitness by at least $\gamma$ with one of $\lambda$ offspring is at least $1 - (1 - p_\gamma(i))^\lambda$. Thus, by Lemma~\ref{lm:bernoulli}, the expected number of iterations for this to happen is not larger than
\[
\frac{1}{1 - (1 - p_\gamma(i))^\lambda} \le 1+e\frac{\mu}{\lambda}\left(\frac{n\gamma}{n - i}\right)^\gamma.
\]

Since  $\frac{\lambda}{\mu} \ge e^e$, we have $\gamma = \lfloor\frac{\ln
\frac{\lambda}{\mu}}{2\ln\ln \frac{\lambda}{\mu}}\rfloor \geq \lfloor \frac e2 \rfloor = 1$. Using this and the estimate $\frac{n}{n - i} \le \ln\frac\lambda\mu$ valid during this phase, we compute
\begin{align*}
\left(\frac{n\gamma}{n - i}\right)^\gamma &\leq \exp\left(\gamma\ln\left(\gamma\ln\frac{\lambda}{\mu}\right)\right) \le \exp \left( \frac{\ln \frac{\lambda}{\mu}}{2\ln\ln \frac{\lambda}{\mu}}\ln\left(\frac{\ln^2 \frac{\lambda}{\mu}}{2\ln\ln \frac{\lambda}{\mu}}\right)\right) \\
&\le \exp \left( \frac{\ln \frac{\lambda}{\mu}}{2\ln\ln \frac{\lambda}{\mu}} 2\ln\ln \frac{\lambda}{\mu}\right) = \exp\left(\ln\frac{\lambda}{\mu}\right) = \frac{\lambda}{\mu}.
\end{align*}
Therefore, the expected
time to increase the fitness by $\gamma$ (and thus to leave level $A_k$ for any $k < m$) is at most $1 + e$. Summing over the levels $A_1, \dots, A_{m - 1}$ , by Theorem~\ref{th:levels} we have
\[
 E[R_1] \le \sum_{k = 1}^{m - 1} (1 + e) < (1 + e)m < (1+e)\frac{n}{\gamma} = O{\left(\frac{n\log\log \frac{\lambda}{\mu}}{\log \frac{\lambda}{\mu}}\right)}.
\]

\end{proof}

Having found an individual with fitness at least $n - \frac{n}{\ln \frac{\lambda}{\mu}}$, we enter the second phase. Due to the elitist selection, the minimum fitness in the population does not decrease, so there is no risk of a fall-back into the first phase.

In the second phase, due to the smaller distance from the optimum, fitness gains by more than a constant are too rare to be exploited profitably. However, even when we only have one best individual in the population, the probability to create at least one better individual in one iteration will still be constant. Consequently, we do not need the arguments of Section~\ref{scn:mu-best} analyzing how the the number of best individuals grows. This phase ends when the best fitness in the population is $n - \frac{\mu n}{\lambda}$ or more.

Let $R_2$ be the runtime of the \ea until it finds an individual with fitness at least $n - \frac{\mu n}{\lambda}$ starting from the moment when it has an individual with fitness at least $n - \frac{n}{\ln \frac{\lambda}{\mu}}$ in the population. In other words, $R_2$ is the duration of the second phase.

\begin{lemma}[Phase 2]\label{lm:phase2}
	If $\frac{\lambda}{\mu}\geq e^e$, then we have
	\[
		E[R_2] = O{\left(\frac{n}{\log\frac{\lambda}{\mu}}\right)}.
	\]
\end{lemma}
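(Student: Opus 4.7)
The plan is to apply the level-based argument of Theorem~\ref{th:levels}, with one level per best-fitness value $i$ in the phase-2 range, and then use Corollary~\ref{lm:tilde-t-big-lambda} with the simplest possible choice $\mu_0(i) := 1$. The point is that throughout phase~2 we have $n-i \ge \mu n / \lambda$, which forces the dominant ``hitting'' term $\frac{e\mu n}{\lambda (n-i)\mu_0(i)}$ to be at most $e$, and the logarithmic ``takeover'' term to vanish entirely (since $\ln\mu_0(i) = 0$). Thus each level is left in expected $O(1)$ iterations, and we pay only for the number of levels, which is $O(n/\log(\lambda/\mu))$.

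More concretely, I would define levels $A_i$ indexed by $i \in [\lceil n - n/\ln(\lambda/\mu)\rceil, \lfloor n - \mu n/\lambda \rfloor]$, where $A_i$ consists of populations whose best fitness equals $i$, and a final level containing all populations whose best fitness is at least $n - \mu n/\lambda$. Elitism guarantees the no-backwards property. Applying Corollary~\ref{lm:tilde-t-big-lambda} with $\mu_0(i) = 1$ gives
\begin{equation*}
E[\tilde T_i] \;\le\; 4\,\frac{\ln 1}{\ln \frac{\lambda}{2e\mu}} + \frac{e\mu n}{\lambda(n-i)} + 5 \;=\; \frac{e\mu n}{\lambda(n-i)} + 5.
\end{equation*}
The condition $i \le n - \mu n/\lambda$ yields $\frac{\mu n}{\lambda(n-i)} \le 1$, so $E[\tilde T_i] \le e + 5 = O(1)$ uniformly over this range.

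Summing over the at most $n/\ln(\lambda/\mu)$ levels in phase~2 and applying Theorem~\ref{th:levels} then gives
\begin{equation*}
E[R_2] \;\le\; \sum_{i = \lceil n - n/\ln(\lambda/\mu)\rceil}^{\lfloor n - \mu n/\lambda \rfloor - 1} E[\tilde T_i] \;\le\; (e+5)\,\frac{n}{\ln(\lambda/\mu)} \;=\; O\!\left(\frac{n}{\log(\lambda/\mu)}\right),
\end{equation*}
as desired. (If $\mu n/\lambda \ge n/\ln(\lambda/\mu)$ then the phase is vacuous and $R_2 = 0$; but since $\lambda/\mu \ge e^e > e$, we have $\ln(\lambda/\mu) < \lambda/\mu$, so the interval is well defined.)

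There is no real obstacle here: the hypothesis $\lambda/\mu \ge e^e$ is used only to invoke Corollary~\ref{lm:tilde-t-big-lambda}, and the two defining endpoints of phase~2 are calibrated exactly so that a single copy of the current best individual already suffices to give constant per-iteration improvement probability. The mild bookkeeping point is that the level index $i$ must be an integer, so floors and ceilings appear in the endpoints, but these only change the sum by $O(1)$ terms and therefore do not affect the asymptotic bound.
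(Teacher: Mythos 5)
Your proof is correct and follows essentially the same route as the paper's: the paper likewise partitions phase~2 into one level per best-fitness value, invokes Corollary~\ref{lm:tilde-t-big-lambda} with $\mu_0(i)=1$ so that $E[\tilde T_i]\le \frac{e\mu n}{\lambda(n-i)}+5\le e+5$ by the bound $i\le n-\frac{\mu n}{\lambda}$, and sums over the at most $\frac{n}{\ln(\lambda/\mu)}$ levels. No differences worth noting.
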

\begin{proof}
For
\[
i \in \left[ \bigg\lceil n - \frac{n}{\ln\frac\lambda\mu}\bigg\rceil .. \bigg\lceil n - \frac{\mu n}{\lambda} \bigg\rceil - 1 \right],
\]
the level $B_i$ is defined as the set of all populations in which the best individuals have fitness $i$. For $i = \lceil n - \frac{\mu n}{\lambda} \rceil$ let the level $B_i$ consist of all populations with best fitness at least $i$.

By Corollary~\ref{lm:tilde-t-big-lambda} and defining $\mu_0(i) \coloneqq 1$ for all $i$, we have
\[
E[\tilde T_i] \le  \frac{e\mu n}{\lambda (n - i)} + 5 \le e + 5,
\]
where the last estimate follows from $i \le n - \frac{\mu n}{\lambda}$.
Therefore, by Theorem~\ref{th:levels}
\begin{align*}
 E[R_2] &\le \sum\limits_{i = \lceil n - n/\ln \frac{\lambda}{\mu} \rceil}^{\lceil n - n\mu/\lambda\rceil - 1} E[\tilde T_i] \le (5 + e)\frac{n}{\ln\frac{\lambda}{\mu}} = O{\left(\frac{n}{\log\frac{\lambda}{\mu}}\right)}.
\end{align*}

\end{proof}

After completion of the second phase, generating a strictly better individual is so difficult that it pays off (in the analysis) to wait for more than one best individual in the population. More precisely, depending on the current best fitness $i$ we define a number $\mu_0(i)$ and compute the time to reach  $\mu_0(i)$ best individuals and argue that the expected time to generate a strict improvement when at least $\mu_0(i)$ best individuals are in the population is only constant. Since, as discussed in Section~\ref{scn:mu-best}, specifically in Lemma~\ref{lm:mu-best-big-lambda}, the number of the best individuals in the population roughly increases by a factor $(1 + \frac{\lambda}{2e\mu})$ in each iteration, the algorithm obtains $\mu_0(i)$ individuals reasonably fast.

Let $R_3$ be the runtime of the \ea until it finds an individual with fitness at least $n - \frac{n}{\lambda}$, the end of the third phase, starting from the  moment when it has an individual with fitness at least $n - \frac{\mu n}{\lambda}$ in the population.
\begin{lemma}[Phase 3]\label{lm:phase3}
	If $\frac{\lambda}{\mu}\geq e^e$, then we have
	\[
		E[R_3] = O{\left(\frac{\mu n}{\lambda}\right)}.
	\]
\end{lemma}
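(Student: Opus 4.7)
The plan is to apply the fitness level theorem (Theorem~\ref{th:levels}) with levels $B_i$ for $i \in [\lceil n - \tfrac{\mu n}{\lambda} \rceil \,..\, \lceil n - \tfrac{n}{\lambda}\rceil - 1]$ consisting of all populations whose current-best fitness is exactly $i$, together with a terminal level for fitness at least $\lceil n - \tfrac{n}{\lambda}\rceil$. For each such $i$ I choose the target
\[
\mu_0(i) \coloneqq \bigg\lceil \frac{\mu n}{\lambda(n-i)} \bigg\rceil.
\]
As $n - i$ ranges over $[\tfrac{n}{\lambda}, \tfrac{\mu n}{\lambda}]$, the value $\frac{\mu n}{\lambda(n-i)}$ ranges over $[1,\mu]$, so $\mu_0(i) \in [1,\mu]$ and Corollary~\ref{lm:tilde-t-big-lambda} applies.

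Plugging in this choice, the second (mutation) term in Corollary~\ref{lm:tilde-t-big-lambda} becomes $\frac{e\mu n}{\lambda(n-i)\mu_0(i)} \le e$, so
\[
E[\tilde T_i] \le 4\,\frac{\ln \mu_0(i)}{\ln \frac{\lambda}{2e\mu}} + (5 + e).
\]
Since $\frac{\lambda}{\mu} \ge e^e$, we have $\ln\frac{\lambda}{2e\mu} = \Omega(1)$, so the denominator is bounded below by a positive constant. The number of levels to traverse is at most $\frac{\mu n}{\lambda} - \frac{n}{\lambda} + O(1) = O(\frac{\mu n}{\lambda})$, so the additive $5 + e$ contributions already sum to $O(\frac{\mu n}{\lambda})$.

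It remains to bound $\sum_i \ln \mu_0(i)$. With the substitution $k = n - i$ and $L \coloneqq n/\lambda$, the sum becomes
\[
\sum_{k = \lceil L\rceil}^{\lceil \mu L\rceil - 1} \ln\bigg\lceil \frac{\mu L}{k}\bigg\rceil \le \sum_{k = \lceil L\rceil}^{\lceil \mu L\rceil - 1} \ln\bigg(\frac{\mu L}{k} + 1\bigg),
\]
which I estimate by the integral $\int_L^{\mu L} \ln(\mu L/x + 1)\,dx$. The substitution $y = \mu L / x$ transforms this into
\[
\mu L \int_1^\mu \frac{\ln(y+1)}{y^2}\,dy,
\]
and since $\int_1^\infty \ln(y+1)/y^2\,dy$ converges, the sum is $O(\mu L) = O(\frac{\mu n}{\lambda})$. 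Dividing by the constant $\ln\frac{\lambda}{2e\mu}$ keeps the order, so combining both contributions yields $E[R_3] = O(\frac{\mu n}{\lambda})$, as claimed.

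The main obstacle is identifying the ``right'' profile $\mu_0(i)$ that simultaneously (i)~makes the per-level mutation term $\frac{e\mu n}{\lambda(n-i)\mu_0(i)}$ bounded by a constant, (ii)~keeps $\mu_0(i) \le \mu$ throughout the range of $i$, and (iii)~leaves $\sum_i \ln\mu_0(i)$ small enough that the copy-creation term also adds up to $O(\frac{\mu n}{\lambda})$. Once that is set, the estimate reduces to the integral substitution above, which is routine.
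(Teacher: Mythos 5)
Your proposal is correct and follows essentially the same route as the paper: the same level decomposition, the same choice $\mu_0(i) = \lceil \frac{\mu n}{\lambda(n-i)}\rceil$, and the same application of Corollary~\ref{lm:tilde-t-big-lambda}, so that the mutation term is $O(1)$ per level and the whole bound reduces to showing $\sum_i \ln\mu_0(i) = O(\frac{\mu n}{\lambda})$. The only difference is that you bound this last sum by an integral comparison with the substitution $y = \mu L/x$, whereas the paper uses Stirling's formula for $\ln(\lfloor \mu n/\lambda\rfloor!)$; both are routine and yield the same estimate.
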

\begin{proof}

During this phase the best fitness $i$ in the population satisfies
\[
n - \frac{\mu n}{\lambda} \le i < n - \frac n\lambda,
\]
which implies
\begin{align}\label{eq:ph3}
\frac{\lambda}{\mu} \le \frac{n}{n - i} < \lambda.
\end{align}
For these values of $i$ we define $\mu_0(i) \coloneqq \lceil \frac{n\mu}{(n - i)\lambda}\rceil$. Note that $\mu_0(i) \in [1..\mu]$.

For
\[
i \in \left[ \lceil n - \frac{\mu n}{\lambda} \rceil .. \lceil n - \frac{n}{\lambda} \rceil - 1\right], \\
\]
level $C_i$ is defined as a set of all populations in which the best individuals have fitness $i$. For $i = \lceil n - \frac{n}{\lambda} \rceil$ let the level $C_i$ consist of all populations with best fitness at least $i$.

By Corollary~\ref{lm:tilde-t-big-lambda} and by the definition of $\mu_0(i)$ we have
\begin{align*}
E[\tilde T_i] &\le 4\frac{\ln\mu_0(i)}{\ln \frac{\lambda}{2e\mu}} + \frac{e\mu n}{\lambda (n - i)\mu_0(i)} + 5 \\
              &\le \frac{4}{\ln\frac\lambda{2e\mu}} \left(\ln\frac{n \mu}{(n - i) \lambda} + 1 \right) + e + 5.
\end{align*}

By Theorem~\ref{th:levels} we obtain
\begin{align*}
	E[R_3] &\le \sum_{i = \lceil n - n\mu/\lambda \rceil}^{\lceil n - n/\lambda\rceil - 1} \tilde T_i \\
         &\le \sum_{i = \lceil n - n\mu/\lambda \rceil}^{\lceil n - n/\lambda\rceil - 1} \left( \frac{4}{\ln\frac\lambda{2e\mu}} \left(\ln\frac{n \mu}{(n - i) \lambda} + 1 \right) + e + 5 \right) \\
         &\le \frac{4}{\ln\frac\lambda{2e\mu}} \left(\frac{n \mu}{\lambda} + \sum_{i = \lceil n - n\mu/\lambda \rceil}^{\lceil n - n/\lambda\rceil - 1} \ln\frac{n \mu}{(n - i)  \lambda}\right) + \frac{n \mu}{\lambda} (e + 5).
\end{align*}

We estimate $\sum\limits_{i = \lceil n - n\mu/\lambda \rceil}^{\lceil n - n/\lambda\rceil - 1} \ln \frac{n\mu}{(n - i)\lambda}$ using Stirling's formula as in~\eqref{eq:t2_3}.
We also notice that this phase occurs only when $\frac{n\mu}{\lambda} > 1$, thus we have $(\ln\frac{n\mu}{\lambda} - \ln\lfloor\frac{n\mu}{\lambda}\rfloor) \le 1$. Hence, we obtain.

\begin{align*}
\sum\limits_{i = \lceil n - n\mu/\lambda \rceil}^{\lceil n - n/\lambda\rceil - 1} \ln \frac{n\mu}{(n - i)\lambda} & \le \sum\limits_{i = 1}^{\lfloor n\mu/\lambda \rfloor}\ln \frac{n\mu}{i\lambda} \\
&= \left\lfloor\frac{n\mu}{\lambda}\right\rfloor \ln \frac{n\mu}{\lambda} - \left\lfloor\frac{n\mu}{\lambda}\right\rfloor \ln \left\lfloor\frac{n\mu}{\lambda}\right\rfloor \\
&+ \left\lfloor\frac{n\mu}{\lambda}\right\rfloor + O{\left(\log\left\lfloor\frac{n\mu}{\lambda}\right\rfloor\right)} \\
&= \left\lfloor\frac{n\mu}{\lambda}\right\rfloor \left(\ln\frac{n\mu}{\lambda} - \ln\left\lfloor\frac{n\mu}{\lambda}\right\rfloor + 1 \right) + o{\left(\frac{\mu n}{\lambda}\right)} \\
&\le 2\frac{n\mu}{\lambda} + o{\left(\frac{\mu n}{\lambda}\right)}.
\end{align*}

Therefore,
\begin{align*}
E[R_3] &\le (5 + e)\frac{\mu n}{\lambda}  + 4\frac{2\frac{n\mu}{\lambda} + o{\left(\frac{\mu n}{\lambda}\right)} + \frac{n\mu}{\lambda}}{\ln \frac{\lambda}{2e\mu}} = O{\left(\frac{\mu n}{\lambda}\right)}.
\end{align*}
\end{proof}

When the algorithm is closer to the optimum than in the third phase, then we cannot expect to have a constant probability for a strict fitness improvement even when the whole population consists of individuals of best fitness. In this forth and last phase, we thus always wait (in the analysis) until the population only contains best individuals and then estimate the expected time for an improvement. We denote by $R_4$ the runtime until the algorithm finds the optimum if it already has an individual with fitness at least $n - \frac{n}{\lambda}$ in the population.

\begin{lemma}[Phase 4]\label{lm:phase4}
	If $\frac{\lambda}{\mu}\geq e^e$ then
	\[
		E[R_4] = O{\left(\frac{n \log n }{\lambda}\right)}.
	\]
\end{lemma}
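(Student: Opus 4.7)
My plan is to partition the iterations of Phase 4 by the current best fitness and apply the level-based tool of Theorem~\ref{th:levels} in the same spirit as was done for Phases 2 and 3. For each $i \in [\lceil n - n/\lambda \rceil .. n-1]$, let level $D_i$ consist of the populations whose best fitness equals $i$, and let $D_n$ consist of the populations containing the optimum. Within any level $D_i$ I would bound the expected exit time by invoking Corollary~\ref{lm:tilde-t-big-lambda} with $\mu_0(i) \coloneqq \mu$. The choice $\mu_0(i)=\mu$ is essentially forced here: because $n-i \le n/\lambda$, the probability that a uniformly chosen best parent produces a strict improvement is only $\Theta((n-i)/n) \le \Theta(1/\lambda)$, so even with $\lambda$ offspring the improvement probability per generation is $\Omega(1)$ only when the \emph{entire} parent population sits on the best level; this motivates waiting for full takeover before charging the improvement event.

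Plugging $\mu_0(i)=\mu$ into Corollary~\ref{lm:tilde-t-big-lambda} gives
\[
E[\tilde T_i] \;\le\; 4\,\frac{\ln \mu}{\ln\frac{\lambda}{2e\mu}} \;+\; \frac{en}{\lambda(n-i)} \;+\; 5,
\]
and Theorem~\ref{th:levels} then yields $E[R_4] \le \sum_{i=\lceil n - n/\lambda\rceil}^{n-1} E[\tilde T_i]$, a sum of at most $\lceil n/\lambda\rceil+1$ terms. I would split this into three pieces. The harmonic piece is handled by the substitution $k=n-i$, giving $\frac{en}{\lambda}\sum_{k=1}^{\lceil n/\lambda\rceil}\frac{1}{k} = O\bigl(\frac{n\log(n/\lambda)}{\lambda}\bigr) = O\bigl(\frac{n\log n}{\lambda}\bigr)$; this is precisely what produces the $\log n$ factor in the final bound. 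The constant piece $5\cdot\lceil n/\lambda\rceil$ is trivially $O(n/\lambda)$. For the takeover piece I use that $\lambda/\mu \ge e^e$ implies $\ln\frac{\lambda}{2e\mu} = \ln\frac{\lambda}{\mu} - 1 - \ln 2 \ge e-1-\ln 2$, a positive constant, so this piece is $O\bigl(\frac{n\ln\mu}{\lambda}\bigr)$; under the standing assumption that $\mu$ is polynomially bounded in $n$, this is also $O\bigl(\frac{n\log n}{\lambda}\bigr)$.

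The main obstacle, or really the reason this phase is separated from Phase~3, is that here the single-parent improvement probability $\Theta((n-i)/n)$ is too small for a constant per-generation improvement probability to hold without roughly the whole parent population being on the top level. The analysis must therefore pay the full takeover time at each of the $\Theta(n/\lambda)$ fitness levels traversed, and it is the resulting harmonic summation over $n-i$ from $1$ to $\Theta(n/\lambda)$ that generates the unavoidable $\log n$ factor. Combining the three contributions gives $E[R_4] = O\bigl(\frac{n\log n}{\lambda}\bigr)$ as claimed.
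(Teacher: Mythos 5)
Your proof is correct and follows essentially the same route as the paper: the same levels $D_i$, the same choice $\mu_0(i)=\mu$ in Corollary~\ref{lm:tilde-t-big-lambda}, and the same three-way split of the resulting sum into a harmonic piece, a constant piece, and a takeover piece. The one caveat is your appeal to a ``standing assumption'' that $\mu$ is polynomially bounded in $n$ --- the upper-bound results do not assume this (only the lower bounds do); however, your conclusion survives without it, since Phase~4 is vacuous unless $\lambda\le n$, which together with $\lambda/\mu\ge e^e$ forces $\mu<n$ and hence $\ln\mu<\ln n$, whereas the paper instead absorbs this term as $O\bigl(n/\log\frac{\lambda}{\mu}\bigr)$ into the first summand of Theorem~\ref{thm:fast}.
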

\begin{proof}
For
\[
	i \in \left[ \bigg\lceil n - \frac{n}{\lambda} \bigg\rceil..n - 1\right] \\
\]
we define level $D_i$ as a set of all populations in which the best individuals have fitness $i$. We also define $\mu_0(i) = \mu$ for these values of $i$.

By Corollary~\ref{lm:tilde-t-big-lambda} we have
\[
E[\tilde T_i] \le 4\frac{\ln\mu}{\ln \frac{\lambda}{2e\mu}} + \frac{en}{\lambda (n - i)} + 5.
\]

Therefore, by Theorem~\ref{th:levels}, we obtain

\begin{align*}
E[R_4] &\le \sum_{i= \lceil n - \frac{n}{\lambda} \rceil}^{n - 1} \left(\frac{4\ln \mu}{\ln \frac{\lambda}{2e\mu}} + \frac{e n}{\lambda (n - i)} + 5\right) \\
&\le \frac{4n\ln\mu}{\lambda\ln \frac{\lambda}{2e\mu}} + \frac{en(\ln \frac{n}{\lambda} + 1)}{\lambda} + \frac{5n}{\lambda} \\
&= O{\left(\frac{n}{\log\frac\lambda\mu}\right)} + O{\left(\frac{n \log n }{\lambda}\right)}.
\end{align*}
\end{proof}

Finally, we prove Theorem~\ref{thm:fast}.
\begin{proof}[Proof (Theorem~\ref{thm:fast})]

Since we consider an elitist algorithm that cannot reduce the best fitness, by linearity of expectation and Lemmas~\ref{lm:phase1} to~\ref{lm:phase4} we have
\[
E[T] \le E[R_1] + E[R_2] + E[R_3] + E[R_4] = O{\left(\frac{n\log\log \frac{\lambda}{\mu}}{\log \frac{\lambda}{\mu}}
+ \frac{n\log n}{\lambda}
 \right)}.
 \]
 \end{proof}

\subsection{Comparison With Other Upper Bounds}\label{sec:comparison-upper}

We first note that our upper bound
\[O\bigg(\frac{n\log n}{\lambda}+\frac{n}{\lambda / \mu} +
\frac{n\log^+\log^+ (\lambda/ \mu)}{\log^+ (\lambda / \mu)}\bigg)\]
for the runtime of the \ea on \onemax subsumes the known bounds
\[O(n\log n + \mu n)\]
for the \mea~\cite{Witt06} and 
\[O{\left(\frac{n \log n}{\lambda} + \frac{n \log^+\log^+\lambda}{\log^+\lambda}\right)}\]
for the \lea~\cite{DoerrK15}.

We are not aware of any previous result for the \ea for general values of $\mu$ and $\lambda$.
Using a standard domination argument, however, the results of Dang and Lehre~\cite{DangL16} for the \commaea can be extended to the \ea. Recall that the \commaea differs from the \ea only in the selection mechanism, which disallows the \commaea to select any parent individual into the next population, even the ones which are better than all $\lambda$ offspring. This imposes a constraint on the parameters requiring $\lambda$ to be at least $\mu$. For the case that $\lambda > (1+\eps) e\mu$, $\eps > 0$ a constant\footnote{In~\cite{DangL16} only $\lambda > e\mu$ is required, but from analyzing their proof we suspect that the stronger condition is implicitly used.},  and $\lambda = \Omega(\log n)$, Dang and Lehre~\cite[Theorem~14]{DangL16} proved that the \commaea within an expected number of
\[O(n \log\lambda)\]
iterations finds the optimum of \onemax.

Since the \ea uses elitist selection, it can be shown that the fitness values of its population always stochastically dominate those of the population of the \commaea. More precisely, for a run of the \ea let us for $i \in [1..\mu]$ and $t \in \N$ denote by $f_{it}$ the fitness of the $i$-th individual in the parent population after iteration $t$, where we assume that the individuals are sorted by decreasing fitness. Let us denote by $f'_{it}$ the same for the \commaea. Then for all $i$ and $t$, the random variable $f_{it}$ stochastically dominates $f'_{it}$. This can be shown via coupling in a similar fashion as in the proof of Theorem~23 in~\cite{Doerr19tcs}.
Thus the upper bound given by Dang and Lehre is also valid for the \ea. For the case $\lambda > (1+\eps)e\mu$ and $\lambda = \Omega(\log n)$ regarded by Dang and Lehre, our bound becomes
\[
O(n),
\]
which is of an asymptotically slightly smaller order than that of~\cite{DangL16}.

\section{Lower Bounds}
\label{scn:lower}

In this section, we show the lower bounds corresponding to the upper bounds we proved in the previous section. They in particular imply the lower bounds for the \mea given in~\cite{Witt06} and the \lea given in~\cite{DoerrK15}. Hence our proof method is a unified approach to both these algorithms as well. The arguments we use do not consider selection phase at all, thus they hold also for all functions with a unique optimum and for other selection mechanisms, including the \commaea.

The main problem when proving lower bounds for population-based algorithms is that many individuals which are created during the run of the EA are removed at some stage by selection operations. This creates a complicated population dynamics, which is very hard to follow via mathematical means.

One way to overcome this difficulty is to try to disregard the effect of selection and instead regard an optimistic version of the evolutionary process in which no individuals are removed. This idea can be traced back to~\cite{Rabani98}. In the context of evolutionary computation, it has been first used in~\cite{Witt03} (see~\cite{Witt08} for an extended version) in the analysis of a steady-state genetic algorithm with fitness-proportionate selection. In~\cite{JagerskupperW05}, this argument was used in the analysis of a $(\mu+1)$ evolution strategy (in continuous search spaces). Not surprisingly, the analysis of the \mea~\cite{Witt06} uses the artificial populations argument as well.

This technique then found applications in the analysis of memetic algorithms~\cite{Sudholt09}, aging-mechanisms~\cite{JansenZ11tcs}, and non-elitist algorithms~\cite{Lehre10,LehreY12}. The artificial population argument was also used to overcome the difficulties imposed by another removal mechanism, namely Pareto domination in evolutionary multi-objective optimization~\cite{DoerrKV13}. While similar in spirit, this work however uses quite different techniques, e.g., it does not represent the search process via tree structures.

%

Of course, to make the new process really an optimistic version of the original one, we have to ensure that, despite the larger population present, each individual which is also present in the true population has the same power of creating good solutions as in the original process. To ensure this in our process, we assume that in the artificial process each individual creates $\Bin(\lambda,1/\mu)$ offspring. This assumption, in fact, leads to a much more drastic growth of the artificial population than the fact that we disregard selection.

When working with such an artificially enlarged population, there is a risk that the larger population finds it easier to create the optimal solution. This would give weaker lower bounds. So the main art in this proof approach is setting up the arguments in a way that the larger population does still, in an asymptotic sense, not find the optimum earlier than the original process. The reason why this is possible at all is that once selection is disregarded, the process consists only of independent applications of the mutation operator. This allows to use strong-concentration arguments which in the end give the desired result that none of the many members of the artificial population is the optimal solution.

To make this approach formal, we use the following notion of a \emph{complete tree}, which, in simple words,  describes all possible (iterated) offspring which could occur in a run of the evolutionary algorithm. This notion is different from those used in the works above, which all work with certain subtrees of the complete tree and use suitable arguments to reason that the restricted tree still covers all individuals that can, with reasonable probability, appear. We feel that our approach of working in the complete tree is technically simpler. For example, compared to~\cite{Witt06}, we do not first need to argue that with high probability the true tree has only certain depths and then, conditional on this event, argue that it does not contain an optimal solution. Working in the complete tree, we also do not need arguments from branching processes as used in~\cite{LehreY12}.
Of course, the key argument that without selection we only do repeated unguided mutation, is used by us in the same flavor as in all previous works.

More precisely, the complete tree with initial individual $x_0$ is defined recursively as follow. Every vertex is labeled with some individual (a bit-string) which could potentially occur in the evolution process. The labels are not necessarily unique, but every vertex, except the root vertex $v_0$ is uniquely defined by the tuple $(v, t, i)$, where $v$ is the parent vertex (that is either the root vertex, or another vertex defined by a tuple), $t \in \mathbb{N}$ is the iteration when this vertex was created and $i \in [1..\lambda]$ is the number of the vertex among the vertices with the same $v$ and $t$. The tree $T_0 = (V_0,E_0)$ at time $t=0$ consists of the single (root) vertex $v_0$ that is labeled with the bit-string $c(v_0) = x_0$.
Hence $E_0 = \emptyset$. If $T_t = (V_t,E_t)$ is defined for some $t \ge 0$, then we define the tree $T_{t+1} = (V_{t+1},E_{t+1})$ as follows. For each vertex in $V_t$, we add $\lambda$ vertices, connect them to this vertex, and generate their labels via standard-bit mutation from the parent. More precisely, let $N_{t+1} \coloneqq \{(v_t,t+1,i) \mid v_t \in V_t, i \in [1..\lambda]\}$ and $V_{t+1} = V_t \cup N_{t+1}$.
We call $v_t$ the parent of $(v_t,t+1,i)$ and $(v_t,t,i)$ the $i$-th child of $v_t$ in iteration $t+1$. We generate the label $c(v_t,t+1,i)$ by applying standard-bit mutation to $c(v_t)$. We connect each new vertex with its parent, that is, we define $E_{t+1} = E_t \cup \{(v_t,(v_t,t+1,i)) \mid v_t \in V_t, i \in [1..\lambda]\}$. A simple example of a complete tree structure is shown in Figure~\ref{fig:complete_tree}.

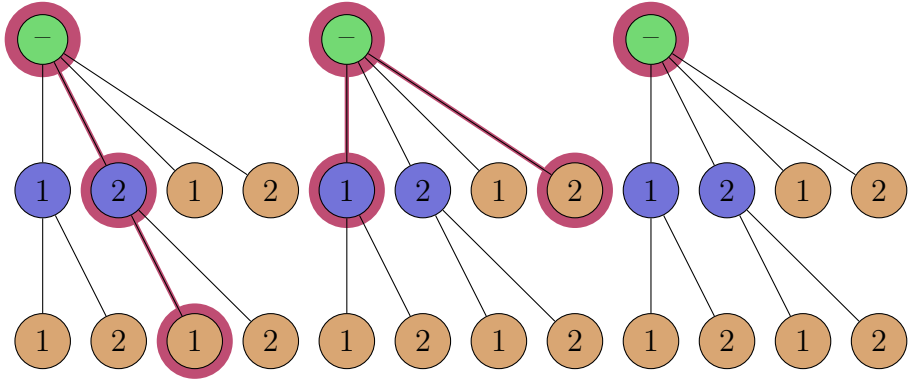
\begin{figure}[t]
  \centering
    \begin{tikzpicture}
      \fill[lightgray!40!purple] (0,0) circle (0.5);
      \fill[lightgray!40!purple] (4,0) circle (0.5);
      \fill[lightgray!40!purple] (8,0) circle (0.5);
      \fill[lightgray!40!purple] (1,-2) circle (0.5);
      \fill[lightgray!40!purple] (4,-2) circle (0.5);
      \fill[lightgray!40!purple] (2,-4) circle (0.5);
      \fill[lightgray!40!purple] (7,-2) circle (0.5);
      \draw[ultra thick, lightgray!40!purple] (0,0) -- (1,-2);
      \draw[ultra thick, lightgray!40!purple] (4,0) -- (4,-2);
      \draw[ultra thick, lightgray!40!purple] (1,-2) -- (2,-4);
      \draw[ultra thick, lightgray!40!purple] (4,0) -- (7,-2);

      \node (v1) [circle, fill=lightgray!60!green, draw] at (0, 0) {--};
      \node (v2) [circle, fill=lightgray!60!green, draw] at (4, 0) {--};
      \node (v3) [circle, fill=lightgray!60!green, draw] at (8, 0) {--};

      \node (v4) [circle, fill=lightgray!60!blue, draw] at (0, -2) {$1$};
      \node (v5) [circle, fill=lightgray!60!blue, draw] at (1, -2) {$2$};
      \node (v6) [circle, fill=lightgray!60!blue, draw] at (4, -2) {$1$};
      \node (v7) [circle, fill=lightgray!60!blue, draw] at (5, -2) {$2$};
      \node (v8) [circle, fill=lightgray!60!blue, draw] at (8, -2) {$1$};
      \node (v9) [circle, fill=lightgray!60!blue, draw] at (9, -2) {$2$};
      \draw (v1) -- (v4);
      \draw (v1) -- (v5);
      \draw (v2) -- (v6);
      \draw (v2) -- (v7);
      \draw (v3) -- (v8);
      \draw (v3) -- (v9);

      \node (v10) [circle, fill=lightgray!60!orange, draw] at (2, -2) {$1$};
      \node (v11) [circle, fill=lightgray!60!orange, draw] at (3, -2) {$2$};
      \node (v12) [circle, fill=lightgray!60!orange, draw] at (6, -2) {$1$};
      \node (v13) [circle, fill=lightgray!60!orange, draw] at (7, -2) {$2$};
      \node (v14) [circle, fill=lightgray!60!orange, draw] at (10, -2) {$1$};
      \node (v15) [circle, fill=lightgray!60!orange, draw] at (11, -2) {$2$};
      \node (v16) [circle, fill=lightgray!60!orange, draw] at (0, -4) {$1$};
      \node (v17) [circle, fill=lightgray!60!orange, draw] at (1, -4) {$2$};
      \node (v18) [circle, fill=lightgray!60!orange, draw] at (2, -4) {$1$};
      \node (v19) [circle, fill=lightgray!60!orange, draw] at (3, -4) {$2$};
      \node (v20) [circle, fill=lightgray!60!orange, draw] at (4, -4) {$1$};
      \node (v21) [circle, fill=lightgray!60!orange, draw] at (5, -4) {$2$};
      \node (v22) [circle, fill=lightgray!60!orange, draw] at (6, -4) {$1$};
      \node (v23) [circle, fill=lightgray!60!orange, draw] at (7, -4) {$2$};
      \node (v24) [circle, fill=lightgray!60!orange, draw] at (8, -4) {$1$};
      \node (v25) [circle, fill=lightgray!60!orange, draw] at (9, -4) {$2$};
      \node (v26) [circle, fill=lightgray!60!orange, draw] at (10, -4) {$1$};
      \node (v27) [circle, fill=lightgray!60!orange, draw] at (11, -4) {$2$};
      \draw (v1) -- (v10);
      \draw (v1) -- (v11);
      \draw (v2) -- (v12);
      \draw (v2) -- (v13);
      \draw (v3) -- (v14);
      \draw (v3) -- (v15);
      \draw (v4) -- (v16);
      \draw (v4) -- (v17);
      \draw (v5) -- (v18);
      \draw (v5) -- (v19);
      \draw (v6) -- (v20);
      \draw (v6) -- (v21);
      \draw (v7) -- (v22);
      \draw (v7) -- (v23);
      \draw (v8) -- (v24);
      \draw (v8) -- (v25);
      \draw (v9) -- (v26);
      \draw (v9) -- (v27);

    \end{tikzpicture}
  \caption{The structure of the complete tree for the $(3 + 2)$~EA after $t = 2$ iterations. The green vertices are the initial vertices, the blue vertices were created in the first iteration and the orange vertices were created in the second iteration. Each vertex is uniquely defined by a tuple of its parent vertex $v$, iteration it was created $t$ and its number $i$ among the children of its parent vertex created at the same iteration (the vertices in the figure are labeled with this number $i$). The highlighted vertices are the ones which were actually created by the algorithm. The labels are omitted in this illustration for reasons of readability}
  \label{fig:complete_tree}
\end{figure}

It is easy to see that a complete tree at time $t$ contains exactly $(\lambda+1)^t$ nodes, since each vertex from $V_t$ has exactly $\lambda$ new children in $V_{t + 1}$. As said earlier, it thus massively overestimates the size of the true population of the EA.

For our purposes, it is not so much the total size of the tree that is important, but rather the number of nodes in a certain distance from the root. We estimate these in the following elementary lemma. Here and in the remainder, by \emph{distance} we mean the graph theoretic distance, that is, the length of the (in this case unique) path between the two vertices. Observe that this can be different from the iteration in which a node was generated. For example, the vertex $(v_0,t,i)$, which is generated in iteration $t$ from the initial vertex, has distance one from $v_0$.
\begin{lemma}\label{ldist}
  Let $T_t$ be a complete tree at time $t$. Let $\ell \in \N_0$. Then $T_t$ contains exactly
  \[\binom{t}{\ell} \lambda^\ell\]
  nodes in distance exactly $\ell$ from the root.
\end{lemma}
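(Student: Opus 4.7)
The plan is to prove this by induction on $t$. The key observation is that the construction of $T_{t+1}$ from $T_t$ adds children only as leaves attached to already-existing vertices, so graph-theoretic distances to the root of vertices in $V_t$ are preserved in $T_{t+1}$, and each new vertex in $N_{t+1}$ sits at distance exactly one more than its parent.

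For the base case $t=0$, the tree consists solely of the root, so there is exactly $1 = \binom{0}{0}\lambda^0$ node at distance $0$ and no nodes at any distance $\ell \ge 1$, in agreement with $\binom{0}{\ell}\lambda^\ell = 0$. For the inductive step, assume the formula holds in $T_t$ for every $\ell \in \N_0$. A node at distance $\ell$ in $T_{t+1}$ is either (i) a node of $V_t$ already at distance $\ell$, or (ii) one of the $\lambda$ newly created children of some vertex of $V_t$ at distance $\ell-1$. Thus the number of distance-$\ell$ nodes in $T_{t+1}$ equals
\[
\binom{t}{\ell}\lambda^\ell + \lambda \cdot \binom{t}{\ell-1}\lambda^{\ell-1} = \left(\binom{t}{\ell}+\binom{t}{\ell-1}\right)\lambda^\ell = \binom{t+1}{\ell}\lambda^\ell,
\]
by Pascal's identity, completing the induction.

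I expect no real obstacle here; the statement is an elementary counting fact. A direct combinatorial interpretation, which may be worth mentioning in the write-up, is that any vertex at distance $\ell$ is uniquely specified by the $\ell$ iterations $1 \le t_1 < t_2 < \cdots < t_\ell \le t$ at which the edges along its root-path were created (yielding the $\binom{t}{\ell}$ factor) together with the index $i_k \in [1..\lambda]$ of the chosen child at each step (yielding the $\lambda^\ell$ factor); the induction via Pascal's identity is exactly the recursive encoding of this bijection.
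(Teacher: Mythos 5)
Your proof is correct, but it takes a different route from the paper's. The paper argues directly via the bijection you mention only as a closing remark: a vertex at distance $\ell$ is uniquely determined by the increasing sequence of creation times $1 \le t_1 < \dots < t_\ell \le t$ of the edges on its root-path together with the offspring indices $i_1,\dots,i_\ell \in [1..\lambda]$, which immediately yields the count $\binom{t}{\ell}\lambda^\ell$ without any induction. Your induction on $t$ with Pascal's identity is equally valid and perhaps more mechanical to verify; its one small extra obligation, which you correctly discharge in your opening paragraph, is the observation that adding the new leaves does not change the distances of the already-present vertices. The direct bijection has the advantage of also exhibiting the explicit parametrization of the vertices that the paper reuses later (the tuples $(v,t,i)$), whereas the inductive argument only delivers the count. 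Either write-up would be acceptable; since you already state the bijection at the end, you could promote it to the main argument and drop the induction entirely, matching the paper.
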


\begin{proof}
  If $t < \ell$, then there are no vertices in distance $\ell$ (recall that in our notation $\binom{t}{\ell} = 0$ in this case). Otherwise, let $v$ be a vertex in distance exactly $\ell$ from the root. Then there are times $1 \le t_1 < \dots < t_\ell \le t$ and offspring numbers $i_1, \dots, i_\ell \in [1..\lambda]$ such that with the recursive definition of the vertices $v_1, \dots, v_\ell$ via $v_d = (v_{d-1},t_d,i_d)$ for all $d \in [1..\ell]$, we have $v = v_\ell$.
	Hence, there are at most $\binom{t}{\ell} \lambda^\ell$ vertices in distance $\ell$ from the root. Conversely, each tuple of times and offspring numbers as above defines a different vertex in distance $\ell$. Hence, there are at least $\binom{t}{\ell} \lambda^\ell$ different vertices in distance $\ell$ from the root.
\end{proof}

Since there is no selection in the complete tree, the vertex labels simply arise from repeated mutation. More precisely, a vertex in distance $\ell$ from the root has a label that is obtained from $\ell$ times applying mutation to the root label. This elementary observation allows to estimate the probability that a node label is equal to some target string.

\begin{lemma}\label{lm:prob-opt}
  Consider a complete tree with root label $c(v_0) = x_0$. Let $x^* \in \{0,1\}^n$ with $H(x^*,x_0) \ge n/4$ (where $H$ is the Hamming distance). Let $x$ be the node label of a node in distance $\ell$ from $v_0$. Then
  \[\Pr[x=x^*] \le \min\left\{1, \left(\frac{\ell}{n - 1}\right)^{n/4}\right\} \eqqcolon p(\ell,n).\]
\end{lemma}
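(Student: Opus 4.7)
The plan is to exploit the key feature of the complete tree: the label at a node of distance $\ell$ from the root is obtained by applying standard-bit mutation $\ell$ times independently to $x_0$, with no intervening selection. Because standard-bit mutation flips each bit position independently, the evolutions of different bit positions across these $\ell$ iterations remain mutually independent, and this is what enables a clean bit-wise analysis.

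First I would fix an arbitrary bit position and observe that the total number of times this bit is flipped across the $\ell$ mutations is distributed as $\operatorname{Bin}(\ell, 1/n)$. The bit ends up different from its value in $x_0$ iff this count is odd, so
\[
q_\ell \coloneqq \Pr[\text{bit ends up flipped}] \;\le\; \Pr[\operatorname{Bin}(\ell,1/n) \ge 1] \;\le\; \frac{\ell}{n} \;\le\; \frac{\ell}{n-1},
\]
where the middle inequality is Markov (or, equivalently, a union bound over the $\ell$ iterations), and the last is trivial. If one prefers an exact expression, a one-line generating-function argument gives $q_\ell = \tfrac{1}{2}\bigl(1 - (1-2/n)^\ell\bigr)$, which yields the same estimate via Bernoulli's inequality $(1-2/n)^\ell \ge 1 - 2\ell/n$.

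Next, for the label $x$ at such a node to equal $x^*$, in particular each of the $H(x^*,x_0) \ge n/4$ bit positions at which $x_0$ and $x^*$ disagree must have ended up toggled after the $\ell$ mutations. Discarding the complementary ``must not toggle'' constraints (which only enlarges the event and hence its probability) and using independence across bit positions, I obtain
\[
\Pr[x = x^*] \;\le\; q_\ell^{H(x^*,x_0)} \;\le\; \left(\frac{\ell}{n-1}\right)^{n/4}.
\]
Combining this with the trivial bound $\Pr[x=x^*] \le 1$ gives exactly $\min\{1,(\ell/(n-1))^{n/4}\}$, as claimed. No substantial obstacle is expected; the only minor subtlety is keeping the bookkeeping honest at the overcounting step, namely using only the $\ge n/4$ ``must flip'' constraints as factors of $q_\ell$ and forgetting about the ``must stay'' positions, so that the exponent $n/4$ comes out correctly.
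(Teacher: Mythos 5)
Your proof is correct and follows essentially the same route as the paper: reduce to the event that each of the $H(x_0,x^*)\ge n/4$ differing bits is flipped at least once, use independence across bit positions, and bound the per-bit probability by $\ell/(n-1)$. The only cosmetic difference is that you obtain the per-bit estimate via a union bound ($\Pr[\Bin(\ell,1/n)\ge 1]\le \ell/n$), whereas the paper writes it as $1-(1-1/n)^\ell\le 1-e^{-\ell/(n-1)}\le \ell/(n-1)$; both yield the same bound.
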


\begin{proof}
  The probability that $x = x^*$ is at most the probability that each of the $H(x_0,x^*)$ bits in which $x_0$ and $x^*$ differ was flipped in at least one of the $\ell$ applications of the mutation operator which generated $x$ from $x_0$. For one particular position the probability that this position was involved in one of $\ell$ mutations is $1 - (1 - \frac 1n)^\ell$. For $H(x_0, x^*)$ positions the probability that all of them were involved in one of $\ell$ mutations is
 \[
\left(1-\left(1-\frac{1}{n}\right)^\ell\right)^{H(x_0,x^*)}
 \leq \left(1-\exp\left(-\frac{\ell}{n - 1}\right)\right)^{\frac{n}{4}}
 \leq \left(\frac{\ell}{n - 1}\right)^{\frac{n}{4}},
 \]
 where we used the estimates $(1 - 1/n)^{(n - 1)r} \ge e^{-r}$ valid for all $n \geq 1$ and any positive $r \in \R$, and $e^{-r} \ge 1 - r$ valid for all $r \in \R$.
\end{proof}

We are now ready to prove our lower bound. Since the proof is valid not only for the \onemax function, but for any pseudo-Boolean function with a unique optimum, we formulate the result for such functions. We show extensions to many functions with multiple optima in the following section.

\begin{theorem}
\label{thm:lower}
If $\mu$ is polynomial in $n$, then the $(\mu+\lambda)$ EA with any type of selection of the new parent population (including only selecting from the offspring population) needs an expected number of
\[\Omega\left(\frac{n\log n}{\lambda} + \frac{\mu n}{\lambda}\right)\] iterations to optimize any pseudo-Boolean function with a unique optimum.

If further $\frac{\lambda}{\mu} \geq e^e$, then
the stronger bound
\[\Omega\bigg(\frac{n\log n}{\lambda} +\frac{n\log\log \frac{\lambda}{\mu}}{\log \frac{\lambda}{\mu}}\bigg)\] holds.
\end{theorem}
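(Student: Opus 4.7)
The plan is to combine the complete-tree machinery of this section with a classical coupon-collector argument. The overall skeleton: condition on a favorable initialization event, then for each claimed threshold $t^\star$ exhibit a probability-$\Omega(1)$ event under which no optimal individual can be found within $t^\star$ iterations.

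First I would handle the initialization: each random initial individual is at Hamming distance at least $n/4$ from the unique optimum $x^*$ with probability $1-2^{-\Omega(n)}$ by Chernoff, so a union bound over the $\mu$ initial individuals (using $\mu$ polynomial) gives that with probability $1-o(1)$ every initial individual satisfies this. Conditioning on this event, I build complete trees $T^{(1)},\ldots,T^{(\mu)}$ of depth $t$ rooted at the $\mu$ initial individuals and couple the mutations performed by the actual run of the \ea with the children of these trees -- each actual mutation occupies an as-yet-unused child slot of the complete-tree node corresponding to its parent -- so that every individual ever produced within $t$ iterations of the actual run appears as a node label at depth at most $t$ in one of the trees. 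Letting $N(t)$ denote the number of nodes labeled $x^*$ in the union of these trees, Lemmas \ref{ldist} and \ref{lm:prob-opt} together with linearity of expectation give
\[
E[N(t)] \le \mu\sum_{\ell=0}^{t}\binom{t}{\ell}\lambda^\ell\,p(\ell,n).
\]
If $E[N(t^\star)]\le 1/2$, Markov's inequality together with the initialization event yields $\Pr[T\ge t^\star]\ge 1/2-o(1)$, hence $E[T]=\Omega(t^\star)$.

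For the term $\mu n/\lambda$ in the weaker bound and for $n\log\log(\lambda/\mu)/\log(\lambda/\mu)$ in the sharper bound (when $\lambda/\mu\ge e^e$), I expect the complete-tree estimate to suffice. Using $\binom{t}{\ell}\lambda^\ell\le (et\lambda/\ell)^\ell$ and $p(\ell,n)\le (\ell/(n-1))^{n/4}$, the logarithm of the $\ell$-th term equals
\[
\log\mu+\ell\log\!\frac{et\lambda}{\ell}-\frac{n}{4}\log\!\frac{n-1}{\ell},
\]
which I would maximize in $\ell$. For $t^\star\asymp \mu n/\lambda$ with $\mu$ polynomial, $t^\star\lambda=\Theta(\mu n)$ is polynomial, so the positive summand is $O(\ell\log n)$ while the negative is $\Omega(n\log(n/\ell))$, easily overpowering it for all $\ell\le n/2$; the tail $\ell>n/2$ contributes negligibly because such deep nodes are outright scarce when $t^\star<n/2$. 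For $t^\star\asymp n\log\log(\lambda/\mu)/\log(\lambda/\mu)$ the optimal $\ell$ turns out to be $\gamma\asymp \log(\lambda/\mu)/\log\log(\lambda/\mu)$ -- the jump size appearing in Lemma~\ref{lm:phase1} -- and at this $\ell$ the positive contribution $\gamma\log(\lambda/\mu)$ is dominated by $(n/4)\log(n/\gamma)$, the $\log\mu$ term contributing only an $O(\log n)$ correction.

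The remaining term $n\log n/\lambda$ requires a separate treatment, since for small $\lambda$ the complete tree becomes too large (for $\lambda=O(1)$ and $t^\star\asymp n\log n$, the total number of nodes is $\exp(\Theta(n\log n))$, outweighing even the sharpest per-node probability of being labeled $x^*$). The main obstacle lies here; I would dichotomize on $\mu$. If $\mu\le \frac{1}{3}\log_2 n$, I would fall back on the classical bit-flip argument: each bit is~$0$ in every initial individual with probability $2^{-\mu}\ge n^{-1/3}$, so by a second Chernoff bound there are $\Omega(n^{2/3})$ such ``undiscovered'' bits; the total number of mutations in $t$ iterations is exactly $\lambda t$, and for $t<cn\log n/\lambda$ with $c$ a sufficiently small constant, a coupon-collector/Chernoff computation on $\Bin(\lambda t,1/n)$ shows that with constant probability some such bit is never flipped throughout the whole run, precluding the optimum. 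If instead $\mu\ge \frac{1}{3}\log_2 n$, then $\mu n/\lambda=\Omega(n\log n/\lambda)$ and the desired term is already covered by the complete-tree estimate for $\mu n/\lambda$ above. Combining the three pieces gives both claimed lower bounds.
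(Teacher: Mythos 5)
Your skeleton (favourable initialization via Chernoff, complete trees rooted at the $\mu$ initial individuals, a first-moment union bound, and a dichotomy on $\mu$ for the $\frac{n\log n}{\lambda}$ term) matches the paper's, and the coupon-collector argument for $\mu=O(\log n)$ is essentially the paper's. However, your central complete-tree estimate has a genuine gap: you bound the success probability by
\[
E[N(t)] \le \mu\sum_{\ell=0}^{t}\binom{t}{\ell}\lambda^{\ell}\,p(\ell,n),
\]
i.e.\ you take a union bound over \emph{every} node of the complete forest, whether or not it is ever realized in the actual run. This discards the one ingredient that makes the complete-tree method work: a node at distance $\ell$ from its root enters the true population only if each of the $\ell$ edges on its root path is realized, and each such edge requires the corresponding offspring to select that specific one of the $\mu$ parents, so the node is realized with probability at most $\mu^{-\ell}$. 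The paper's bound is therefore $\mu\sum_{\ell}\binom{t}{\ell}(\lambda/\mu)^{\ell}p(\ell,n)$, smaller than yours by a factor $\mu^{\ell}$ in the $\ell$-th term. Without this factor your sum is not $o(1)$: for $t=\Theta(\mu n/\lambda)$ one has $et\lambda/\ell=\Theta(\mu n/\ell)$, and at $\ell=n/4$ the $\ell$-th summand is of order $(\mu/8)^{n/4}$, which explodes already for $\mu\ge 9$. Your claim that the positive part $\ell\log(et\lambda/\ell)=O(\ell\log n)$ is ``easily overpowered'' by $\frac{n}{4}\log\frac{n}{\ell}$ fails precisely at $\ell=\Theta(n)$, where the former is $\Theta(n\log\mu)$ while the latter is only $\Theta(n)$. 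The same defect breaks your treatment of the $\frac{n\log\log(\lambda/\mu)}{\log(\lambda/\mu)}$ bound, where the uncancelled $\ell\ln\mu$ contribution at $\ell=t$ can be $\Theta(n\log n)$ against a negative part of only $O(n)$; moreover, with that extra increasing term the maximizer is $\ell=t$, not the $\gamma$ you name. The fix is exactly the paper's observation: multiply the per-node label probability $p(\ell,n)$ by the per-node realization probability $\mu^{-\ell}$ before summing over the $\binom{t}{\ell}\lambda^{\ell}$ slots at depth $\ell$; with the resulting factor $(\lambda/\mu)^{\ell}$ the $\ell$-th summand becomes $\left(\frac{n}{8\ell}\right)^{\ell}\left(\frac{\ell}{n-1}\right)^{n/4}\le 2^{-n/4}$, independent of $\mu$, and your Markov step then goes through.
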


\begin{proof}
Without any loss of generality in this proof we assume that the function optimized by the algorithm has an optimum in $x^* = (1, \dots, 1)$.

In our proofs we use the following tool. To prove that the expected runtime of the algorithm is $\Omega(f(n))$ for some function $f(n)$, it is enough to prove that the probability that the runtime is less than $f(n)$ is less than some constant $\gamma < 1$, since in this case the expected runtime is not less than $(1 - \gamma)f(n)$.

We first note that the bound $\Omega(\frac{n\log n}{\lambda})$ is easy to prove for the \onemax function. A short, but deep argument for this bound is that the \ea is an unary unbiased black-box complexity algorithm in the sense of Lehre and Witt~\cite{unbiased-bbc-algorithmica}.
Any such algorithm needs an expected number of $\Omega(n \log n)$~\cite{unbiased-bbc-algorithmica} or, more precisely, of at least $e n \ln(n) - O(n)$~\cite{DoerrDY16} fitness evaluations to find the optimum of the \onemax function.

However, we prove the lower bounds for any function with a unique optimum, so we use an elementary argument essentially identical to the one of~\cite{Witt06} as follows. The lower bound $\Omega(\frac{n\log n}{\lambda})$ needs to be shown only in the case $\mu \leq c \log n$, where $c$ is an arbitrarily small constant. For any bit position we have the probability $q_1$ that all individuals in the initial population have a zero-bit in that position that is calculated as
\begin{align*}
q_1 = \left(\frac{1}{2}\right)^\mu \ge \left(\frac{1}{2}\right)^{c\log(n)} = \exp(c\log(n) \log(1/2)) = n^{-c\log(2)}.
\end{align*}
Thus, the expected number $z$ of the bit positions such that all individuals in the initial population have a zero-bit in that position is
\begin{align*}
E[z] = \sum_{i = 1}^n q_1 = n^{1 - c\log(2)}.
\end{align*}
We call such positions \emph{initially wrong positions}. Since the bit values in each bit position and each initial individual are independent, each position is initially wrong or not independently on other positions. 
Hence, by Chernoff bounds (see, e.g., Theorem~1.10.5 in~\cite{Doerr20bookchapter}) the probability $q_2$ that we have at least $E[z]/2 = n^{1 - c\log(2)}/2$ such bit positions is calculated as
\begin{align*}
q_2 = \Pr\left[z \ge (1 - \delta)E[z]\right] \ge 1 - \exp\left( -\frac{\delta^2 E[z]}{2}\right) = 1 - \exp\left( -\frac{n^{1 - c\log(2)}}{8}\right).
\end{align*}

Now we are ready to show that the algorithm does not flip at least one of the bits in the initially wrong positions in $t \coloneqq \lfloor\frac{\alpha(n - 1)\log(n)}{\lambda}\rfloor$ iterations, where $\alpha$ is a constant that will be defined later, with a high (at least $1 - o(1)$) probability. We calculate the probability $q_3$ that one particular bit is flipped at least once in $t$ iterations (or in $\lambda t$ mutations) as
\begin{align*}
	q_3 &= 1 - \left(1 - \frac{1}{n}\right)^{\lambda t} = 1 - \left(1 - \frac{1}{n}\right)^{(n - 1)\frac{\lambda t}{(n - 1)}} \\
	&\le 1 - \exp\left(-\frac{t\lambda}{(n - 1)}\right) \le 1 - e^{-\alpha\log(n)} = 1 - n^{-\alpha}.
\end{align*}
If we have at least $n^{1 - c\log(2)}/2$ initially wrong positions, then the probability $q_4$ that all of them are flipped at least once in $t$ iterations is
\begin{align*}
q_4 &= q_3^{\frac{n^{1 - c\log(2)}}{2}} \le (1 - n^{-\alpha})^{\frac{n^{1 - c\log(2)}}{2}} \\
	  &= (1 - n^{- \alpha})^{n^\alpha \cdot \frac{n^{1 - c\log(2) - \alpha}}{2}} \le \exp\left(-\frac{n^{1 - c\log(2) - \alpha}}{2} \right)
\end{align*}
Thus we have the probability $q_5$ that at least one of the initially wrong bits is not flipped (and thus, the optimum is not found) in $t = \Theta(\frac{n \log(n)}{\lambda})$ iterations at least
\begin{align}\label{eq:prob-nlogn}
\begin{split}
q_5 &\ge q_2 (1 - q_4) \\
		&\ge \left( 1 - \exp\left( -\frac{n^{1 - c\log(2)}}{8}\right) \right) \left(1 - \exp\left(-\frac{n^{1 - c\log(2) - \alpha}}{2} \right)\right) \\
		&\ge 1 - 2\exp\left( -\frac{n^{1 - c\log{2} - \alpha}}{8}\right).
\end{split}
\end{align}

Hence, if $\alpha$ and $c$ satisfy $c\log(2) + \alpha < 1$, (e.g., $\alpha \coloneqq \frac{1}{2}$ and $c \coloneqq \frac{1}{2}$) then the expected runtime of the algorithm is $\Omega(\frac{n \log(n)}{\lambda})$.

To prove the remaining two bounds, we argue as follows. Again using a simple Chernoff bound argument, we first observe that the probability $q_6$ that the number of zero-bits $y$ in the one particular individual in the initial population is less than $n/4$, is estimated as
\begin{align*}
q_6 = \Pr\left[y \le \frac{E[y]}{2}\right] = \Pr\left[y \le (1 - 1/2)E[y]\right] \le \exp\left(-\frac{n}{16}\right).
\end{align*}
Hence, all $\mu$ individuals of the initial population have a Hamming distance of at least $n/4$ from the optimum $x^*$ with probability
\begin{align*}
q_7 = (1 - q_6)^\mu \ge \left(1 - \exp\left(-\frac{n}{16}\right)\right)^\mu \ge \exp\left(-\frac\mu{e^\frac{n}{16} - 1}\right)
\end{align*}
Since $\mu$ is polynomial in $n$, we have $\frac\mu{e^\frac{n}{16} - 1} = o(1)$ and therefore, $q_7 = 1 - o(1)$. Further in this proof we assume that all initial individuals have at least $n/4$ zero-bits.

Clearly, a run of the \ea creates a subforest of $\mu$ disjoint complete trees with random root labels (complete forest).
Whether a node of the complete forest appears in the forest describing the run of the \ea (the forest of the family trees) depends on the node labels (more precisely, on their fitness). However, regardless of the node labels the following is true: If some node $v_s$ is present in the population at iteration $t$, then the edge $(v_s,(v_s,t,i))$ is present in the subforest at most with probability $1/\mu$, because for this it is necessary that the $i$-th offspring generated in iteration $t$ chooses $v_s$ as parent. Consequently, regardless of the nodes labels, the probability that a node in distance $\ell$ from the root in the complete forest enters the population of the \ea, is at most $\mu^{-\ell}$. Since we have not taken into account the node labels, we observe that the probability that a particular node of the complete forest (i) is labeled with the optimum and (ii)~makes it into the population of the \ea, is at most $\mu^{-\ell} p(\ell,n)$ with $p(\ell,n)$ as defined in Lemma~\ref{lm:prob-opt}.

Using a union bound over all nodes in the complete forest up to iteration~$t$, cf.~Lemma~\ref{ldist}, we see that the probability that the \ea finds the optimum within $t$ iterations, is at most
\begin{align}\label{eq:q_opt}
q_{opt} \le \mu \sum_{\ell = 0}^t \binom{t}{\ell} \left(\frac \lambda \mu\right)^\ell p(\ell,n).
\end{align}

Let first $t \coloneqq \lfloor\mu n / 8e\lambda\rfloor$. Using the inequality $\binom{t}{\ell} \le (et / \ell)^\ell$ that follows from Stirling's formula, we estimate the summand $s(\ell) \coloneqq \binom{t}{\ell} (\frac \lambda \mu)^\ell p(\ell,n)$ of $q_{opt}$ for every $\ell \in [0..t]$.
\begin{itemize}
\item By Lemma~\ref{lm:prob-opt} we have $p(\ell,n) \le 1$. Thus, if $\ell \ge n/4$, we estimate
\begin{align*}
s(\ell) = \binom{t}{\ell} \left(\frac \lambda \mu\right)^\ell p(\ell,n) \le \left(\frac{et\lambda}{\ell\mu}\right)^\ell \le \left(\frac{n}{8\ell}\right)^\ell \le (1/2)^\ell \le (1/2)^{n/4}.
\end{align*}
\item By Lemma~\ref{lm:prob-opt} we have $p(\ell,n) \le (\ell/(n - 1))^{n/4}$. Hence, if $n/4 \ge \ell > 0$, we estimate
\begin{align*}
s(\ell) &= \binom{t}{\ell} \left(\frac \lambda \mu\right)^\ell p(\ell,n) \le \left(\frac{et\lambda}{\ell\mu}\right)^\ell \left(\frac{\ell}{n - 1}\right)^{n/4}
\le \left(\frac{n}{8\ell}\right)^\ell \left(\frac{\ell}{n - 1}\right)^{n/4}  \\
&\le \left(\frac{n}{4\ell}\right)^\ell \left(\frac{\ell}{n - 1}\right)^{n/4} \le
\left(\frac{n}{4\ell} \cdot\frac{\ell}{n - 1}\right)^{n/4} \le (1/2)^{n/4}.
\end{align*}
\item Finally, for $\ell = 0$ we have $p(\ell, n) = 0$ and thus $s(\ell) = 0$.
\end{itemize}

Consequently, the optimum is found in less than $t$ iterations if either there is an individual with less than $n/4$ zero-bits in the initial population, or with an exponentially small probability otherwise. Therefore, the probability $q_8$ of finding the optimum in less than $t$ iterations is bounded as
\begin{align}\label{eq:prob-munlambda}
\begin{split}
q_8 &\le (1 - q_7) + q_7 \mu\sum_{\ell = 0}^t s(\ell) \\
    &\le \left(1 - \exp\left(-\frac\mu{e^\frac{n}{16} - 1}\right)\right) + \mu \sum_{\ell = 1}^t (1/2)^{n/4} \\
    &\le \frac\mu{e^\frac{n}{16} - 1} + \frac{\mu^2 n}{8e\lambda} (1/2)^{n/4} = o(1),
\end{split}
\end{align}
since we assumed $\mu$ to be at most polynomial in $n$.

We finish the proof by showing the lower bound $\Omega\left(\frac{n\log\log\frac{\lambda}{\mu}}{\log \frac{\lambda}{\mu}}\right)$ in case when $\frac{\lambda}{\mu} \ge e^e$. For this purpose let  $t = \lfloor\frac{(e - 2)n\ln\ln\frac{\lambda}{\mu}}{4(e + 1)\ln \frac{\lambda}{\mu}}\rfloor$.
Using the complete tree notation we show that the probability that the algorithm finds an optimum in less than $t$ iterations is very small.

For all $\ell \in [0..t]$ consider $s(\ell)$. Using the inequality $\binom{t}{\ell} \le (et / \ell)^\ell$ we estimate the upper bound for it as follows.

\begin{align}\label{eq:sell-last-case}
\begin{split}
s(\ell) &= \binom{t}{\ell} \left(\frac \lambda \mu\right)^\ell p(\ell,n) \le \left(\frac{et\lambda}{\ell \mu}\right)^\ell \left(\frac{\ell}{n - 1}\right)^{n/4} \\
&= \exp\left(\ell \ln \frac{et\lambda}{\ell \mu} + \frac{n}{4} \ln \frac{\ell}{n - 1}\right).
\end{split}
\end{align}

Consider precisely the argument of the exponential function from the last equality in~\eqref{eq:sell-last-case}. For this purpose define $f(\ell) \coloneqq \ell \ln \frac{et\lambda}{\ell \mu} + \frac{n}{4} \ln \frac{\ell}{n - 1}.$
By considering the derivative of $f(\ell)$ on segment $[0, t]$ one can see that it is a monotonically increasing function. Since $t \ge \ell$ and $\frac{\lambda}{\mu} \ge e^e$, we have $\frac{et\lambda}{\ell\mu} \ge e^{e + 1}$ and thus, $\ln\frac{et\lambda}{\ell\mu} \ge e + 1$. Hence,

\begin{align*}
f'(\ell) = \ln \frac{et\lambda}{\ell \mu} - 1 + \frac{n}{4\ell} \ge e + 1 - 1 > 0
\end{align*}

Thus, $f(\ell)$ reaches its maximum when $\ell = t$. Therefore,

\begin{align*}
f(\ell) &\le f(t) \le t \ln\frac{e\lambda}{\mu} + \frac{n}{4}\ln\frac{t}{n - 1} \\
&\le \frac{(e - 2)n \ln\ln\frac\lambda\mu}{4(e + 1)\ln\frac\lambda\mu} \left(\ln\frac\lambda\mu + 1\right) + \frac{n}{4}\ln\frac{(e - 2)n\ln\ln\frac\lambda\mu}{4(e + 1)(n - 1)\ln\frac\lambda\mu} \\
&= \frac{(e - 2)}{4(e + 1)}n\ln\ln\frac{\lambda}{\mu}\left(1 + \frac{1}{\ln\frac\lambda\mu}\right) \\
&+ \frac{n}{4} \left(\ln\ln\ln\frac\lambda\mu - \ln\ln\frac\lambda\mu + \ln\frac{(e - 2)n}{4(e + 1)(n - 1)}\right) \\
&\le \frac{n}{4}\ln\ln\frac\lambda\mu \left( \frac{(e - 2)}{(e + 1)}\left(1 + \frac 1e\right) + \frac{\ln\ln\ln\frac\lambda\mu}{\ln\ln\frac\lambda\mu} - 1 +  \frac{\ln\frac{(e - 2)n}{4(e + 1)(n - 1)}}{\ln\ln\frac\lambda\mu} \right).
\end{align*}

Notice that $\frac{\ln x}{x} \le \frac{1}{e}$ for all $x \ge 1$ and that $\ln\frac{(e - 2)n}{4(e + 1)(n - 1)} < 0$ for all $n > 1$. Therefore we have

\begin{align*}
f(\ell) &\le \frac{n}{4}\ln\ln\frac\lambda\mu \left( \frac{(e - 2)}{(e + 1)}\left(1 + \frac 1e\right) - \left(1 - \frac 1e\right)\right) = -\frac{n\ln\ln\frac\lambda\mu}{4e}.
\end{align*}

Thus, by~\eqref{eq:sell-last-case} we have

\begin{align*}
s(\ell) \le \exp\left(-\frac{n\ln\ln\frac\lambda\mu}{4e} \right) = \left( \ln\frac{\lambda}{\mu} \right)^{-n/4e}.
\end{align*}

By~\eqref{eq:q_opt} summing up $\mu s(\ell)$ for all $\ell \in [0..t]$ we obtain the following upper bound on the probability $q_9$ that the algorithm finds the optimum in less than $t = \Theta(\frac{n\log\log\frac\lambda\mu}{\log\frac\lambda\mu})$ iterations.

\begin{align}\label{eq:prob-nloglog}
\begin{split}
	q_9 &\le (1 - q_7) + q_7 \mu \sum_{\ell = 0}^t s(\ell) \\
	&\le \frac\mu{e^\frac{n}{16} - 1} + \mu \frac{(e - 2)n\ln\ln\frac\lambda\mu}{4(e + 1)\ln\frac\lambda\mu} \left( \ln\frac{\lambda}{\mu} \right)^{-n/4e}.
\end{split}
\end{align}

Notice that $q_9$ is $o(1)$, since we assumed that $\mu$ is polynomial in $n$. Hence, the expected runtime of the algorithm is $\Omega(\frac{n\log\log\frac\lambda\mu}{\log\frac\lambda\mu})$
\end{proof}

\subsection*{Comparison With Other Lower Bounds}

Since all results involved are asymptotically tight, our lower bounds subsume the previous bounds for the \mea and the \lea in the way as discussed for upper bounds in Section~\ref{sec:comparison-upper}.

For general values of $\mu$ and $\lambda$,  the only result~\cite{Qian16} we are aware of proves that for any $\mu$ and $\lambda$ that are at most polynomial in $n$ the runtime of the \ea on every pseudo-boolean function with a unique global optimum is
\begin{align}\label{eq:qian}
\Omega\left(\frac{n\log n}{\lambda} + \frac\mu\lambda + \frac{n\log\log n}{\log n}\right).
\end{align}

By comparing the three terms of this bound with the corresponding terms of our bound \[\Omega\bigg(\frac{n\log n}{\lambda}+\frac{n}{\lambda / \mu} +
\frac{n\log^+\log^+ (\lambda/ \mu)}{\log^+ (\lambda / \mu)}\bigg),\]
we immediately see that our bound is asymptotically at least as large as the one in~\eqref{eq:qian}; note that for the third term, this follows trivially from the assumption that $\lambda$ is polynomial in $n$ and the fact that $x \mapsto \frac{\log\log(x)}{\log(x)}$ is decreasing for $x$ sufficiently large.

There are two cases when our bound is asymptotically greater than~\eqref{eq:qian}.

\noindent\textbf{Setting 1.} Let $\frac{\lambda}{\mu} = O(1)$ and $\mu = \omega(\log(n))$. Then our bound is $\Omega(\frac{n\mu}{\lambda})$, which is at least $\Omega(n)$. On the other hand,~\eqref{eq:qian} is
\begin{align*}
\frac{n\log n}{\lambda} + \frac\mu\lambda + \frac{n\log\log n}{\log n} = \frac{n \, o(\mu)}{\lambda} + \frac{\mu}{\lambda} + o(n) = o{\left(\frac{n\mu}{\lambda}\right)}.
\end{align*}

\noindent\textbf{Setting 2.} Let $\log\frac{\lambda}{\mu} = \omega(\log n)$. This implies that $\frac\lambda\mu = \omega(n)$ and thus
\[
\log n = o{\left(\frac{n\log \log n}{\log n}\right)} = o{\left(\frac{\frac{\lambda}{\mu}\log \log\frac{\lambda}{\mu}}{\log\frac{\lambda}{\mu}}\right)}.
\]
Therefore, we have
\[
\frac{n\log n}{\lambda} = o{\left(\frac{n \log \log\frac{\lambda}{\mu}}{\mu \log\frac{\lambda}{\mu}}\right)} = o{\left(\frac{n \log \log\frac{\lambda}{\mu}}{\log\frac{\lambda}{\mu}}\right)}.
\]
Hence, the lower bound given in Theorem~\ref{thm:lower} simplifies to $\Omega(\frac{n \log \log\frac{\lambda}{\mu}}{\log\frac{\lambda}{\mu}})$.

On the other hand, the bound~\eqref{eq:qian} is of the asymptotically smaller order $o(\log n) + o(1) + O(\frac{n \log \log n}{\log n}) = O(\frac{n \log \log n}{\log n})$.

\section{Extending the Lower Bounds to All Functions Having Not Excessively Many Global Optima}\label{scn:lower-extended}

Since the family tree technique depends little on the particular function to be optimized, Witt~\cite{Witt06} extended his lower bounds for \onemax to a much broader class of functions. He proved that the \mea needs  $\Omega(\mu n)$ iterations to find a global optimum of any function that satisfies one of the following conditions. (i)~The function has at most $2^{o(n)}$ optima. (ii)~All optima have at least $n/2 + \eps n$ one-bits or all optima have at least $n/2 + \eps n$ zero-bits, where $\eps>0$ is an arbitrary constant.

In this section we extend our lower bounds of Section~\ref{scn:lower} to a wide class of functions as well. In particular, we show that Witt's results are valid for all functions with at most $2^{\beta n}$ optima, where $\beta$ is some constant less than $\frac{1}{16\ln 2}$, regardless of the positions of the optima.

To reach our goal we exploit the fact that in Theorem~\ref{thm:lower} we proved very small values for the probabilities that the runtime is less than some threshold (see~\eqref{eq:prob-nlogn}, \eqref{eq:prob-munlambda} and \eqref{eq:prob-nloglog}), while it would have been enough to prove that they are some constants less than one.

\begin{theorem}\label{th:non-unimodal-1}
For any constant $\varepsilon > 0$ there exists another constant $c > 0$ such that if $\mu < c \ln n$, then for any $n$-dimensional pseudo-Boolean function with not more than $2^{n^{1 - \varepsilon}}$ optima the \ea takes at least $\Omega(\frac{n\log n}{\lambda})$ iterations in expectation and with high probability to find an optimum.
\end{theorem}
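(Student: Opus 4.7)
The plan is to adapt the $\Omega(\frac{n\log n}{\lambda})$ argument from the proof of Theorem~\ref{thm:lower} by bounding the probability that the \ea reaches \emph{any} global optimum via a union bound over the at most $2^{n^{1-\varepsilon}}$ optima. The original proof relied only on two facts concerning the unique optimum $x^*$: that with high probability there are many \emph{initially wrong positions} (positions at which all $\mu$ initial individuals differ from $x^*$), and that with high probability none of them is flipped during the first $t := \lfloor \alpha (n-1) \log n / \lambda \rfloor$ iterations. Both statements can be made uniform over $x^*$, provided the constants are small enough to absorb the factor $|O|$ coming from the union bound.

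Given $\varepsilon > 0$, I would pick constants $c, \alpha > 0$ with $c \ln 2 + \alpha < \varepsilon$, for example $c := \varepsilon/(3\ln 2)$ and $\alpha := \varepsilon/3$. For every $x^* \in \{0,1\}^n$, define
\[
Z(x^*) := \{i \in [1..n] : y^{(k)}_i \neq x^*_i \text{ for all } k \in [1..\mu]\},
\]
where $y^{(1)}, \dots, y^{(\mu)}$ is the initial population. Then $|Z(x^*)|$ is a sum of $n$ independent Bernoulli variables with success probability $2^{-\mu} \ge n^{-c\ln 2}$, so $E[|Z(x^*)|] \ge n^{1-c\ln 2}$, and a Chernoff bound gives $\Pr[|Z(x^*)| < n^{1-c\ln 2}/2] \le \exp(-n^{1-c\ln 2}/8)$. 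Letting $O$ denote the set of optima, $|O| \le 2^{n^{1-\varepsilon}}$, a union bound over $O$ shows that the event $\mathcal{E} := \{|Z(x^*)| \ge n^{1-c\ln 2}/2 \text{ for all } x^* \in O\}$ holds with probability $1-o(1)$, using $c\ln 2 < \varepsilon$.

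For the second step, the crucial observation is that the mutation bit-flip events are jointly independent across positions and mutations, and are independent of the initial population. Hence for any fixed position $i$, the probability that $i$ is flipped in at least one of the $\lambda t$ mutations executed in the first $t$ iterations equals $1-(1-1/n)^{\lambda t} \le 1-n^{-\alpha}$. If the algorithm reaches $x^*$ within $t$ iterations, then for every $i \in Z(x^*)$ the bit $i$ must have been flipped in at least one mutation, so conditioning on the initial population and using independence across positions,
\[
\Pr[\text{optimum } x^* \text{ found in }\le t\text{ iter.}\mid \text{init.\ pop.}] \le (1-n^{-\alpha})^{|Z(x^*)|}.
\]
A union bound over $O$, combined with the event $\mathcal{E}$, gives
\[
\Pr[\text{some opt.\ found in }\le t] \le \Pr[\overline{\mathcal{E}}] + |O|\,(1-n^{-\alpha})^{n^{1-c\ln 2}/2} \le o(1) + \exp\bigl(n^{1-\varepsilon}\ln 2 - \tfrac{1}{2}\,n^{1-c\ln 2 - \alpha}\bigr) = o(1),
\]
thanks to the choice $c\ln 2 + \alpha < \varepsilon$. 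Hence the runtime exceeds $t = \Omega(\frac{n\log n}{\lambda})$ with probability $1-o(1)$, which also yields the bound in expectation.

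The main obstacle is that each of the two union bounds over $O$ costs a factor of $2^{n^{1-\varepsilon}}$, forcing both $\exp(-n^{1-c\ln 2}/8)$ and $\exp(-n^{1-c\ln 2-\alpha}/2)$ to beat this factor. This produces the double constraint $c\ln 2 < \varepsilon$ and $c\ln 2 + \alpha < \varepsilon$ and explains why $c$ must be chosen of order $\varepsilon/\ln 2$; everything else is a routine adaptation of the single-optimum argument already present in Theorem~\ref{thm:lower}.
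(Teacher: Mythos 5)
Your proof is correct and follows essentially the same route as the paper: reuse the initially-wrong-positions argument from the unique-optimum lower bound and absorb the $2^{n^{1-\varepsilon}}$ factor from the union bound over optima by choosing $c$ and $\alpha$ with $c\ln 2 + \alpha < \varepsilon$. The only cosmetic difference is that you split the union bound into two stages (one for the Chernoff event $\mathcal{E}$, one for the bit-flipping event), whereas the paper applies a single union bound to the combined per-optimum failure probability $1-q_5$ from~\eqref{eq:prob-nlogn}.
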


\begin{proof}
	Let $c$ be some arbitrary small positive constant and let $\mu < c\ln n$. By~\eqref{eq:prob-nlogn} the probability that the algorithm finds a particular optimum in less than $t \coloneqq \frac{\alpha n\log n}{\lambda}$ iterations (where $\alpha$ is some arbitrary constant) is
	\begin{align*}
	1 - q_5 \le 2 \exp\left(-\frac{n^{1 - c\ln2 - \alpha}}{8}\right).
	\end{align*}

	If we have at most $2^{n^{1 - \varepsilon}}$ optima, then by a union bound over all optima we obtain that the probability $q_{10}$ that the algorithm finds an optimum in less than $t$~iterations is
	\begin{align*}
		q_{10} &\le (1 - q_5) 2^{n^{1 - \varepsilon}} \le 2 \exp\left(-\frac{n^{1 - c\ln2 - \alpha}}{8}\right) \exp\left(n^{1 - \varepsilon}\ln 2\right) \\
				&= 2 \exp\left(n^{1 - \varepsilon}\ln 2 -\frac{n^{1 - c\ln2 - \alpha}}{8} \right).
	\end{align*}

	This probability $q_{10}$ tends to zero with growing $n$ if and only if the argument of the exponential function tends to negative infinity. It does so if and only if $\alpha$ and $c$ satisfy	$\alpha + c\ln 2 < \varepsilon$. Since $\varepsilon$ is a positive constant, we can choose $\alpha \coloneqq \varepsilon/2$ and $c \coloneqq \varepsilon/2$ to satisfy this condition.
\end{proof}

The actual reason that the algorithm cannot find an optimum faster than in $\Omega(\frac{n\log n}{\lambda})$ iterations is the coupon collector effect when the algorithm tries to flip the few wrong bits left in the end of the optimization. However, if we have $2^{\Theta(n)}$ optima, the algorithm avoids this effect. To illustrate this idea consider the \oea that optimizes the \onemax function, but the bit-strings with less than $cn$ zero-bits, where $c$ is some small constant, are considered optimal. Thus, this functions has no more than $O(2^{c\log_2(1/c)n}) \subseteq 2^{\Theta(n)}$ optima. Clearly, the runtime of the \oea on such function is linear, which may be proven with simple additive drift argument.


The following two theorems extend our $\Omega(\frac{n\mu}{\lambda})$ and $\Omega(\frac{n\log\log\frac\lambda\mu}{\log\frac\lambda\mu})$ bounds to the functions with $2^{O(n)}$ optima.

\begin{theorem}\label{th:non-unimodal-2}
If $\mu$ is at most polynomial in $n$, then the \ea optimizes any pseudo-Boolean function with at most $2^{\beta n}$ optima, where $\beta$ is some constant less than $\frac{1}{16\ln 2}$, in $\Omega(\frac{\mu n}{\lambda})$ iterations. If $\frac\lambda\mu > e^e$, then the stronger bound $\Omega(\frac{n\log\log\frac\lambda\mu}{\log\frac\lambda\mu})$ holds.
\end{theorem}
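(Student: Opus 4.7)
The plan is to adapt the proof of Theorem~\ref{thm:lower} by inserting a union bound over the set $O$ of optima, which has cardinality at most $2^{\beta n}$. The key observation enabling this is that the bounds~\eqref{eq:prob-munlambda} and~\eqref{eq:prob-nloglog} already produce failure probabilities of the form $\exp(-\Omega(n))$, so they can absorb an extra factor of $2^{\beta n}$ whenever $\beta$ is below an appropriate threshold. The constant $\frac{1}{16\ln 2}$ arises as the tightest such threshold among the three places where exponentially small probabilities are used.

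First I would strengthen the initial-population event. In the proof of Theorem~\ref{thm:lower}, a Chernoff bound gave $\Pr[H(y,x^*) < n/4] \le e^{-n/16}$ for a single random initial individual $y$ and a fixed target $x^*$. A double union bound over all $\mu$ initial individuals and all (at most) $2^{\beta n}$ optima yields a failure probability of at most $\mu \cdot 2^{\beta n} \cdot e^{-n/16} = \exp(\ln\mu + \beta n\ln 2 - n/16)$. Since $\mu$ is polynomial in $n$ and $\beta < \frac{1}{16\ln 2}$, this is $o(1)$. Conditioning on the complementary event, every root label in the complete forest has Hamming distance at least $n/4$ from every optimum, so Lemma~\ref{lm:prob-opt} remains applicable uniformly.

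Next I would redo the complete-forest bounds optimum-by-optimum. For the first bound, with $t = \lfloor \mu n/(8e\lambda)\rfloor$, the case analysis inside the proof of Theorem~\ref{thm:lower} shows that for each fixed $x^* \in O$ the probability of producing $x^*$ within $t$ iterations is at most $\mu \sum_{\ell=0}^t s(\ell) \le \mu t \cdot (1/2)^{n/4}$. Taking a union bound over all at most $2^{\beta n}$ optima, the overall failure probability is bounded by $2^{\beta n}\cdot \mu t \cdot 2^{-n/4}$, which is $o(1)$ because $\beta < \frac{1}{16\ln 2} < \frac{1}{4}$ and $\mu t$ is polynomial in $n$. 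The second bound is analogous: with $t = \lfloor \frac{(e-2)n\ln\ln(\lambda/\mu)}{4(e+1)\ln(\lambda/\mu)}\rfloor$ the original estimate gives $s(\ell) \le (\ln(\lambda/\mu))^{-n/(4e)}$, which using $\lambda/\mu \ge e^e$ is at most $e^{-n/(4e)}$; the union bound over optima produces $2^{\beta n}\mu t\, e^{-n/(4e)} = \exp((\beta\ln 2 - \tfrac{1}{4e})n + O(\log n))$, again $o(1)$ because $\frac{1}{16\ln 2} < \frac{1}{4e\ln 2}$ (equivalent to $16 > 4e$).

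The substantive part is not any single calculation but the constant-chasing that pins down $\beta$. I expect the binding constraint to come from the initial-distance Chernoff step rather than from either complete-forest sum, because the per-pair probability $e^{-n/16}$ decays strictly slower than both $2^{-n/4}$ (first bound) and $e^{-n/(4e)}$ (second bound). Once the threshold $\frac{1}{16\ln 2}$ is identified there, verifying that it also makes both complete-forest union bounds $o(1)$ is immediate, and combining the three $o(1)$ failure events via a final union bound yields that with constant probability the \ea has not discovered any optimum within $t$ iterations, which is the desired lower bound on the expected runtime.
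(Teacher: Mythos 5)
Your proposal is correct and follows essentially the same route as the paper: reuse the per-optimum failure probabilities $q_8$ and $q_9$ from the proof of Theorem~\ref{thm:lower} and absorb a $2^{\beta n}$ union-bound factor, with the threshold $\frac{1}{16\ln 2}$ pinned down by the slowest-decaying term $\mu e^{-n/16}$ from the initial-population Chernoff bound, exactly as you identify. The only (cosmetic) difference is that you union-bound the initial-distance event over all optima separately before conditioning, whereas the paper multiplies the entire per-optimum bound by $2^{\beta n}$ in one step; both are valid.
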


\begin{proof}
By~\eqref{eq:prob-munlambda} the probability that the algorithm finds a particular optimum in less than $t \coloneqq \lfloor \frac{\mu n}{8e\lambda} \rfloor$ iterations is
\begin{align*}
q_8 \le \frac\mu{e^\frac{n}{16} - 1} + \frac{\mu^2 n}{8e\lambda} \left(\frac 12 \right)^\frac{n}{4}.
\end{align*}
By a union bound taken over no more than $2^{\beta n}$ optima, the probability $q_{11}$ that the algorithm finds any optimum in this time is
\begin{align*}
q_{11} \le q_8 2^{\beta n} \le \frac{\mu e^{(\ln2)\beta n - \frac{n}{16}}}{1 - e^{-\frac{n}{16}}} + \frac{\mu^2 n}{8e\lambda} 2^{\beta n - \frac{n}{4}}.
\end{align*}

Since $\beta < \frac{1}{16\ln2}$ and $\beta$ is a constant, we have both $(\ln2)\beta n - \frac{n}{16} < 0$ and $\beta n - \frac{n}{4} < 0$ (and both of them are linear in $n$). Thus, $q_{11}$ tends to zero with growing~$n$. Hence, the expected runtime of the algorithm is $\Omega(t) = \Omega(\frac{\mu n}{\lambda})$.

To prove the $\Omega(\frac{n\log\log\frac\lambda\mu}{\log\frac\lambda\mu})$ bound we argue in a similar way.
By~\eqref{eq:prob-nloglog} the probability that the algorithm finds a particular optimum in less than $t \coloneqq \lfloor\frac{(e - 2)n\ln\ln\frac{\lambda}{\mu}}{4(e + 1)\ln \frac{\lambda}{\mu}}\rfloor$ iterations is
\begin{align*}
q_9 \le \frac\mu{e^\frac{n}{16} - 1} + \mu \frac{(e - 2)n\ln\ln\frac\lambda\mu}{4(e + 1)\ln\frac\lambda\mu} \left( \ln\frac{\lambda}{\mu} \right)^{-n/4e}.
\end{align*}

By a union bound taken over no more than $2^{\beta n}$ optima, the probability $q_{12}$ that the algorithm finds any optimum in this time is
\begin{align*}
q_{12} \le q_9 2^{\beta n} \le \frac{\mu e^{\beta n\ln2 -n/16}}{1 - e^{-\frac{n}{16}}} + \mu \frac{(e - 2)n\ln\ln\frac\lambda\mu}{4(e + 1)\ln\frac\lambda\mu} e^{\beta n\ln2 -n/4}.
\end{align*}

Since $\beta  < \frac{1}{16\ln2}$ and $\beta$ is a constant, we have both $(\ln2)\beta n - \frac{n}{16} < 0$ and $\beta n\ln2 - \frac{n}{4} < 0$ (and both of them are linear in $n$). Thus, $q_{12}$ tends to zero with growing~$n$. Hence, the expected runtime of the algorithm is $\Omega(t) = \Omega(\frac{n\log\log\frac\lambda\mu}{\log\frac\lambda\mu})$.
\end{proof}

\section{Analysis of the \protect\llea}
\label{sec:llea}

In this section we prove that our results (both upper bound from Theorem~\ref{thm:general} and lower bound from Theorem~\ref{thm:lower}) hold in an analogous fashion also for the \llea, that is, we show that this algorithm optimizes \onemax in an expected number of $\Theta(\frac{n \log n}{\lambda} + n)$ iterations. This improves over the $O(\frac{n \log n}{\lambda} + n \log \lambda)$ proven bound and the $O(\frac{n \log n}{\lambda} + n \log\log n)$ conjecture of~\cite{Chen09}.

\begin{algorithm2e}[t]%

	\underline{\textbf{Initialization:}}\\
	Create a population of $\lambda$ individuals by choosing $x^{(i)} \in \{0,1\}^n$, $1\leq i \leq \lambda$ uniformly at random. Let the multiset $X^{(0)} := \{x^{(1)}, ..., x^{(\lambda)}\}$ be the population at time 0. Let $t := 0$.

 \underline{\textbf{Optimization:}}\\
\While{an optimum has not been reached}{
$X' := X^{(t)}$\;
{\textbf{Mutation phase:}}\\
\For{$i=1, \ldots, \lambda$\label{line:mutstart}}{
$x := $ the $i$-th individual from $X^{(t)}$ (deterministic selection)\;
Create $x'$ by
flipping each bit of $x$ with probability $p$\;
$X' := X' \cup \{x'\}$\;
}
{\textbf{Selection phase:}}

Create the multiset $X^{(t+1)}$, the population at time $t+1$, by deleting the $\lambda$ individuals with lowest
$f$-value in $X'$\;
$t:=t+1$\;
}

\caption{The \protect\llea, maximizing a given function $f : \{0,1\}^n \to \R$, with population size~$\lambda$ and mutation rate $p$.}
\label{alg:llea}
\end{algorithm2e}

Due to the differences in the algorithms, to prove our results we obviously cannot just apply the previous theorems in this work to the case $\lambda = \mu$. We recall that the \llea uses a different parent selection. While the classic \ea chooses each parent independently and uniformly at random from the $\mu$ individuals, the \llea creates exactly one offspring from each parent. The pseudocode of the \llea is shown in Algorithm~\ref{alg:llea}. We note that~\cite{Chen09} also use a slightly different way of selecting the next parent population. In principle, they take as new parent population the $\mu$ best individuals among parents and offspring (plus-selection). If this would lead to a new parent population only consisting of offspring, they remove the weakest offspring and replace it with the strongest individual from the previous parent population. Since this appears to be a not very common way of selecting the new population, we shall work with the classic plus-selection, favoring offspring in case of ties, and breaking further ties randomly (though, indeed, the tie-breaking is not important when optimizing \onemax via unary unbiased black-box algorithms). We note without proof that the following results and proofs are valid for the precise algorithm regarded in~\cite{Chen09} as well.

We start by proving the upper bound for the runtime.

\begin{theorem}
The expected runtime of the \llea on the \onemax function is $O(\frac{n\log n}{\lambda} + n)$.
\end{theorem}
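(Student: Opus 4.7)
The plan is to adapt the fitness-level framework of Section~\ref{sc:ub}, with adjustments for fair parent selection. We apply Theorem~\ref{th:levels} with levels $A_i$, $i \in [0..n]$, defined by the current best fitness in the population. To bound the expected level-leaving time $E[\tilde T_i]$, we use the two-stage decomposition of Corollary~\ref{lm:tilde-t-general}: wait until the number $j$ of best individuals reaches some threshold $\mu_0(i) \in [1..\lambda]$ (takeover phase), then wait for a strict improvement (improvement phase).

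The first technical step is a takeover-time analysis tailored to fair selection. With $j$ best individuals present, each is mutated exactly once in the next iteration and produces a copy of itself independently with probability $c \coloneqq (1 - 1/n)^n \ge 1/(2e)$. Hence the number $N$ of new copies is binomially distributed with parameters $(j,c)$ and expectation at least $j/(2e)$. When $j$ is large enough that Lemma~\ref{lm:multbin} applies, i.e., $j > 2e$, we have $N \ge j/(2e)$ with probability at least $1/4$, so the number of best individuals grows by a factor $1 + 1/(2e)$ with constant probability per iteration. A level argument as in Lemma~\ref{lm:mu-best-big-lambda} then gives $E[\tau_{j_1,j_2}] = O(\log(j_2/j_1))$. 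For the small-$j$ regime below the constant where Lemma~\ref{lm:multbin} applies, the probability that at least one of the $j$ best parents produces a copy is already at least $c \ge 1/(2e)$, so growing from $1$ to any constant takes expected $O(1)$ iterations. Combining the two regimes yields $E[\tau_{1,\mu_0(i)}] = O(1 + \log \mu_0(i))$.

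Once $\mu_0(i)$ best individuals are present, each is mutated once and produces a strictly fitter offspring with probability at least $(n-i)/(en)$, so by Lemma~\ref{lm:bernoulli} the expected improvement time is $O\!\left(1 + \frac{n}{\mu_0(i)(n-i)}\right)$. Setting $\mu_0(i) \coloneqq \min\{\lambda, \lceil n/(n-i) \rceil\}$ balances the two phases and gives
\[
E[\tilde T_i] = O\!\left(\log \mu_0(i) + \frac{n}{\mu_0(i)(n-i)}\right).
\]
For $i$ with $n - i \ge n/\lambda$ we have $\mu_0(i) = \lceil n/(n-i) \rceil \le \lambda$ and $E[\tilde T_i] = O(\log(n/(n-i)))$; summing over such $i$ yields $O(n)$ via the Stirling identity $\sum_{k=1}^n \log(n/k) = O(n)$. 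For $i$ with $n - i < n/\lambda$ we have $\mu_0(i) = \lambda$ and $E[\tilde T_i] = O(\log \lambda + n/(\lambda(n-i)))$; summing over the at most $\lceil n/\lambda \rceil$ such levels gives $O((n/\lambda) \log \lambda + (n/\lambda)\log(n/\lambda)) = O(n\log n / \lambda)$. Adding the two regimes and invoking Theorem~\ref{th:levels} delivers the claimed $O(n\log n/\lambda + n)$ bound.

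The main obstacle is the takeover analysis for small $j$: Lemma~\ref{lm:multbin} requires $c > 1/j$, which fails when $j$ is below a small constant, so the geometric-growth argument must be supplemented by an elementary bound showing that any single iteration produces at least one additional copy with constant probability. A secondary care point is that the improvement phase analysis relies on the fact that under fair selection every one of the $\mu_0(i)$ best individuals is mutated (giving independent attempts), in contrast to the classical $(\mu+\lambda)$ EA where a single offspring might pick any parent; once this is noted, the probability estimate of Corollary~\ref{lm:tilde-t-general} transfers directly.
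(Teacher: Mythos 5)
Your proof is correct, but it takes a noticeably different route from the paper's. The paper proves this theorem by reduction: it observes that the proof of Theorem~\ref{thm:general} rests only on the copy-probability estimate~\eqref{eq:p2} and the improvement-probability estimate~\eqref{eq:p-doble-dash}, verifies that both remain valid under fair parent selection with $\mu=\lambda$ (each fit parent is mutated exactly once, so one bounds the probability that at least one of the $j$ fit parents, resp.\ of the $\mu_0(i)$ best parents, produces a copy resp.\ an improvement), and then reads the bound off Theorem~\ref{thm:general} as $O(\frac{n\log n}{\lambda}+\frac{n\lambda}{\lambda}+n)$. You instead redo the level analysis from scratch with a sharper takeover lemma: since under fair selection the number of new copies is $\Bin(j,(1-\frac 1n)^n)$, Lemma~\ref{lm:multbin} gives constant-probability growth by the constant factor $1+\frac{1}{2e}$, hence $E[\tau_{1,\mu_0}]=O(1+\log\mu_0)$ with no additive $\mu_0$ term --- in effect Sudholt's bound~\eqref{eq:tau-dirk} recovered in the regime $\lambda/\mu=1$, where the paper's own multiplicative-growth lemma (Lemma~\ref{lm:mu-best-big-lambda}) is unavailable. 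This extra strength is not cosmetic: with your simple choice $\mu_0(i)=\min\{\lambda,\lceil n/(n-i)\rceil\}$, the additive $\mu_0(i)$ term of Lemma~\ref{lm:mu-best-general} would sum to $\Theta(n\log\lambda)$ over the levels with $n-i\ge n/\lambda$, so the paper's weaker takeover estimate would not suffice for your parameterization; the paper avoids this by taking the smaller $\mu_0(i)=\lceil\frac{n}{(n-i)\Delta_i}\rceil$ and paying for it with the three-case analysis of Section~\ref{scn:unconditional_upper}. In short, you trade a slightly more delicate takeover argument (including the separate treatment of constant $j$ where Lemma~\ref{lm:multbin} does not apply, which you handle correctly) for a much cleaner choice of $\mu_0(i)$ and a two-case summation; both routes are sound and give the same bound.
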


\begin{proof}
  We aim at adapting Theorem~\ref{thm:general} for the \llea. For this purpose we note that the proof of Theorem~\ref{thm:general} only depends on the expected level improvement times $E[\tilde T_i]$ computed in Corollary~\ref{lm:tilde-t-general}, which again depend on the times needed for increasing the number of fit individuals computed in Lemma~\ref{lm:mu-best-general}. Therefore, it suffices to show that the estimates of Lemma~\ref{lm:mu-best-general} and Corollary~\ref{lm:tilde-t-general} are also valid for the \llea.

  We prove that Lemma~\ref{lm:mu-best-general} holds for the \llea by observing that the probability $p_2(j)$ to create at least one copy of the fit individual satisfies the same estimate as the one used for the \ea, which is~\eqref{eq:p2}, with $\mu =\lambda$. For the \llea, $p_2(j)$ is at least the probability that at least one of the $j$ fit parent individuals creates as offspring a copy of it. By Lemma~\ref{lm:bernoulli} we have
  \begin{align*}
    p_2(j) \ge 1 - \left(1 - \left(1 - \frac 1n\right)^n\right)^j \ge 1 - \left(1 - \frac{1}{2e}\right)^j \ge \frac{1}{1 + \frac{2e}{j}},
  \end{align*}
  which is the same estimate as for the \ea (with $\mu = \lambda$).

  To prove that Corollary~\ref{lm:tilde-t-general} holds for the \llea as well, it is sufficient to show that the probability $p''(i)$ to create a superior individual satisfies as well the estimate~\eqref{eq:p-doble-dash} in the case $\mu=\lambda$. The probability $p''(i)$ is at least the probability that for at least one of the $\mu_0(i)$ best individuals the offspring is better than its parent. Using Lemma~\ref{lm:bernoulli} we calculate
	\begin{align*}
		p''(i) &\ge 1 - \left(1 - \frac{n - i}{n}\left(1 - \frac{1}{n}\right)^{n- 1} \right)^{\mu_0(i)} \ge 1 - \left(1 - \frac{n - i}{en}\right)^{\mu_0(i)} \\
		& \ge 1 - \frac{1}{1 + \frac{\mu_0(i)(n - i)}{ne}},
	\end{align*}
  which is the same value as in  Corollary~\ref{lm:tilde-t-general} when $\mu = \lambda$.
\end{proof}

Comparing this bound with the bound $O(\frac{n\log n}{\lambda} + n\log\lambda)$ proven in~\cite{Chen09} and the bound $O(\frac{n\log n}{\lambda} + n\log\log n)$ conjectured in the same work, we immediately see that ours is at least as strong as these two for all values of $\lambda$. For $\lambda = \omega(\frac{\log n}{\log\log n})$, our bound is asymptotically smaller than both the proven bound and the conjecture.


We now prove a matching lower bound, which agrees with the one of Theorem~\ref{thm:lower} in the case of $\mu = \lambda$.

\begin{theorem}
  If $\lambda$ is polynomial in $n$ then the expected runtime of the \llea on the \onemax function is $\Omega(\frac{n\log n}{\lambda} + n)$.
\end{theorem}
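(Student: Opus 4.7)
The plan is to prove the two summands $\Omega(\tfrac{n\log n}{\lambda})$ and $\Omega(n)$ separately, since their maximum is $\Theta(\tfrac{n\log n}{\lambda} + n)$. For the first term the quickest route is to observe that the \llea is a unary unbiased black-box algorithm in the sense of Lehre and Witt~\cite{unbiased-bbc-algorithmica}. Any such algorithm needs $\Omega(n\log n)$ fitness evaluations to optimize \onemax, and the \llea performs exactly $\lambda$ evaluations per iteration, so we immediately obtain $\Omega(\tfrac{n\log n}{\lambda})$ iterations.

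For the $\Omega(n)$ term I would mimic the complete-tree argument from the proof of Theorem~\ref{thm:lower}, modified to reflect the fair parent selection. The complete forest for the \llea starts with $\lambda$ roots labeled by the initial individuals; in each iteration every existing node spawns exactly one mutation child while itself persisting. An induction analogous to Lemma~\ref{ldist} shows that after $t$ iterations this forest contains exactly $\lambda\binom{t}{\ell}$ nodes at distance $\ell$ from a root, since such a node is specified by the choice of one of the $\lambda$ roots together with the $\ell$ iterations (out of $t$) at which a mutation occurred on its path. Every individual ever generated in a true run of the \llea corresponds to some trajectory in this forest.

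I would then argue, exactly as in the proof of Theorem~\ref{thm:lower}, that by a Chernoff bound and the polynomial-in-$n$ assumption on $\lambda$, with probability $1-o(1)$ every initial individual has Hamming distance at least $n/4$ from the unique optimum $x^* = (1,\dots,1)$. Conditioning on this event and invoking Lemma~\ref{lm:prob-opt}, every virtual node at distance $\ell$ carries the label $x^*$ with probability at most $p(\ell,n)$, so a union bound gives
\[
q_{opt} \;\le\; \lambda \sum_{\ell=0}^{t} \binom{t}{\ell}\, p(\ell,n).
\]
This expression coincides exactly with the one obtained in the proof of Theorem~\ref{thm:lower} in the special case $\mu = \lambda$ (so that $\lambda/\mu = 1$). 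Taking $t \coloneqq \lfloor n/(8e)\rfloor$, the identical case split of the sum (bounding the summand separately for $\ell \ge n/4$ and for $0 < \ell < n/4$, with the $\ell = 0$ term vanishing) used there yields $q_{opt} = o(1)$, and hence the expected runtime is $\Omega(n)$.

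The main conceptual obstacle is the correct accounting in the complete forest. Because each parent in the \llea deterministically produces exactly one offspring, there is no $\mu^{-\ell}$ factor from parent choice as in the \ea, but the branching of the forest is correspondingly smaller: one mutation child per node per iteration instead of $\lambda$. The identity $\lambda^{\ell}\cdot\mu^{-\ell} = 1$ at $\mu = \lambda$ shows that these two modifications precisely cancel, so the union-bound calculation reduces verbatim to the $\lambda/\mu = 1$ instance of Theorem~\ref{thm:lower} and no new technical computation is needed beyond verifying the node count $\lambda\binom{t}{\ell}$ at distance $\ell$.
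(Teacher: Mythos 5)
Your proposal is correct and follows essentially the same route as the paper: the $\Omega(n)$ term is obtained via the identical complete-forest argument, with the observation that the missing $\mu^{-\ell}$ parent-selection factor is exactly compensated by the branching factor dropping from $\lambda$ to $1$, giving the same bound $q_{opt} \le \lambda \sum_{\ell=0}^{t}\binom{t}{\ell}p(\ell,n)$ at $t=\Theta(n)$. For the $\Omega(\frac{n\log n}{\lambda})$ term you invoke the unary unbiased black-box lower bound instead of repeating the paper's elementary initially-wrong-bits argument; this is a valid shortcut here since the statement concerns only \onemax (the paper itself notes this alternative in the proof of Theorem~\ref{thm:lower}).
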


\begin{proof}
  We show that the main arguments of the proof for this bound in Theorem~\ref{thm:lower} are also valid for this parent selection mechanism.

  To prove the $\Omega(\frac{n\log n}{\lambda})$ bound we can repeat the arguments from Theorem~\ref{thm:lower} without any changes. One needs to prove this bound only for $\lambda < c\log n$ for some arbitrary small constant $c$. The main argument is that with high probability there is a set of bits
  which were in a wrong position in all initial individuals and that at least one of those bits was not flipped by any of $t\lambda$ applications of the mutation operator for some $t = \Theta(\frac{n\log n}{\lambda})$. This argument stays valid for the fair parent selection as well.

  To prove the $\Omega(n)$ bound we consider the complete trees for the \llea. Since in a run of the \llea each individual in the population creates exactly one offspring, the complete trees now have a slightly different structure, namely each node of the tree has exactly one child at each time step (instead of $\lambda$ children). In return, we cannot argue that each edge is present in the true family tree with probability at most $1/\mu$ only (so we assume that all these edges are in fact present). Since $\lambda = \mu$, these two effects cancel.



  More precisely, following the proof of Theorem~\ref{thm:lower} we argue that with high probability $q_7 \ge \exp(-\frac\lambda{e^\frac{n}{16} - 1})$ all initial individuals have at least $n/4$ wrong bits. Next, we argue that in an analogous fashion as in~\eqref{eq:q_opt} -- and this is where the two effects truly cancel -- the probability $q_{opt}$ that the optimum occurs in any tree in less than $t \coloneqq \lceil\frac{n}{8e}\rceil$ iterations is at most
  \begin{align*}
  q_{opt} \le \lambda \sum_{\ell = 0}^t \binom{t}{\ell} p(\ell,n) \le \lambda t \left(\frac{1}{2}\right)^{n/4}.
  \end{align*}

  Since we only consider $\lambda$ that is polynomial in $n$, this entity tends to zero, when $n$ tends to infinity. Therefore, the probability that the algorithm finds an optimum in $t = \Theta(n)$ iterations is at most $(1 - q_7) + q_7 q_{opt}$ that is less than some constant, if $n$ is large enough. Hence, the expected runtime of the \llea is $\Omega(n)$.
\end{proof}

%
%

\section{Discussion and Conclusion}
\label{scn:discussions}

In this work, we determined -- tight apart from constant factors -- the runtime of the \ea on the \onemax benchmark problem. This is thus one of the few tight runtime analyses taking into account more than a single parameter (\cite{GiessenW17,DoerrD18} are the other two such works we are aware of).

Not surprisingly for a simple function like \onemax, our result does not indicate that it is advantageous to use larger parent or offspring populations. Indeed, it follows from~\cite[Theorem~6.2]{Witt13} (see~\cite{Doerr19tcs} for a simplified proof) that for any $\mu$ and $\lambda$ the runtime of the \ea stochastically dominates the runtime of the \oea with best-of-$\mu$ initialization. The runtime difference between the \oea with  best-of-$\mu$ initialization and with the usual random initialization is small, roughly an additive $\Theta(\sqrt{n \ln \mu})$ term~\cite{LaillevaultDD15}.

While our result does not show an advantage of using larger populations, it does show that using moderate-size populations is not overly costly. For example, as long as $\mu, \lambda = O(\log n)$, the \ea takes $\Theta(n \log n)$ fitness evaluations to find the optimum. This observation could indicate that using such population sizes is generally an interesting idea -- we could speculate that there is no harm from using such populations, but there could be other advantages.

In the light of recent other work, our work suggests two directions for further research. In~\cite{GiessenW17}, a precise runtime analysis for the \lea with general mutation rate $c/n$, $c$ a constant, on the \onemax benchmark was conducted. It suggests that the precise mutation rate is important when $\lambda$ is small, but less decisive when $\lambda$ is large. It would be interesting to know to what extent this result carries over to the \ea. In~\cite{BadkobehLS14,DoerrGWY19,DoerrWY18}, it was shown that various dynamic choices of the mutation rate can reduce the runtime of the \lea on \onemax. Again, it would be interesting to see to what extend a similar behavior is true for the \ea.

\subsection*{Acknowledgement}

We are thankful to Jiefeng Fang and Tangi Hetet for their contributions to the preliminary version~\cite{AntipovDFH18} of this work. The second author would like to thank Jon Rowe for pointing him to the arguments used in~\cite{RoweD14}, which  were used in the proof of Lemma~\ref{lm:bernoulli}. The first author was supported by the Government of Russian Federation (Grant 08-08). 


\newcommand{\etalchar}[1]{$^{#1}$}

}
\end{document}